%% file: neurips_2026.tex
\documentclass{article}
\PassOptionsToPackage{numbers, compress}{natbib}
\usepackage[preprint]{neurips_2026}
\usepackage[utf8]{inputenc} 
\usepackage[T1]{fontenc}    
\usepackage{hyperref}       
\usepackage{url}            
\usepackage{booktabs}       
\usepackage{amsfonts}       
\usepackage{nicefrac}       
\usepackage{microtype}      
\usepackage{xcolor}         

\usepackage{rotating}
\usepackage{multirow}
\usepackage{caption}
\usepackage{subcaption}
\usepackage{hyperref}
\usepackage{amsmath}
\usepackage{amssymb}
\usepackage{mathtools}
\usepackage{amsthm}
\usepackage{xfrac}
\usepackage{mathtools}
\usepackage{wrapfig}
\usepackage{tikz}
\usepackage{algorithm}
\usepackage{algorithmic}

\DeclarePairedDelimiterX{\infdivx}[2]{(}{)}{%
  #1\parallel #2%
}

\usepackage{cancel}
\usepackage{adjustbox}

\theoremstyle{plain}
\newtheorem{theorem}{Theorem}[section]

\newtheorem{lemma}[theorem]{Lemma}
\newtheorem{corollary}[theorem]{Corollary}
\theoremstyle{definition}

\theoremstyle{remark}

\usepackage[american]{babel}
\usepackage{amsfonts}
\usepackage{graphicx}
\usepackage{url}
\usepackage{bm}
\usepackage{cancel}
\usepackage{comment}
\usepackage{natbib}
\usepackage{enumitem}
\usepackage{etoolbox}
\usepackage{subcaption}
\BeforeBeginEnvironment{wrapfigure}{\setlength{\intextsep}{0pt}}

\title{Minimax Rates and\\Spectral Distillation for Tree Ensembles}

\author{%
  Binh Duc Vu\\
  King's College London\\
  \texttt{binh.vu@kcl.ac.uk}
  \And
  David S. Watson\\
  King's College London\\
  \texttt{david.watson@kcl.ac.uk}
}

\begin{document}

\maketitle

\input{main}

\bibliographystyle{plainnat}
\small{
\bibliography{references} 
}
\input{appx}

\end{document}

%% file: main.tex
\begin{abstract}
  Tree ensembles such as random forests (RFs) and gradient boosting machines (GBMs) are among the most widely used supervised learners, yet their theoretical properties remain incompletely understood. We adopt a spectral perspective on these algorithms, with two main contributions. First, we derive minimax-optimal convergence for RF regression, showing that, under mild regularity conditions on tree growth, the eigenvalue decay of the induced kernel operator governs the statistical rate. Second, we exploit this spectral viewpoint to develop compression schemes for tree ensembles. For RFs, leading eigenfunctions of the kernel operator capture the dominant predictive directions; for GBMs, leading singular vectors of the smoother matrix play an analogous role. Learning nonlinear maps for these spectral representations yields distilled models that are orders of magnitude smaller than the originals while maintaining competitive predictive performance.
  Our methods compare favorably to state of the art algorithms for forest pruning and rule extraction, with applications to resource constrained computing.
\end{abstract}

\section{Introduction}\label{sec:intro}


Tree ensembles such as random forests (RFs) and gradient boosting machines (GBMs) are among the most popular and versatile supervised learning algorithms. Fast and easy to use, trees regularly attain state of the art results in tabular data tasks \citep{grinsztajn2022why, shwartz_tabular}, often beating out deep learning models that have proven more effective for image and language processing. 
Trees are especially good at adapting to sparse signals, making them a common choice in high-dimensional domains where parametric methods may struggle, such as bioinformatics \citep{li2022}, econometrics \citep{Athey2019}, and engineering \citep{shoar2022}.

Despite their prevalence, tree ensembles remain poorly understood at a theoretical level. For RFs, most authors focus on idealized variants such as purely random forests \citep{breiman2000, genuer2012, biau_analysis, klusowski2021}, which make analysis more tractable but sacrifice the adaptive splitting that drives empirical performance. 
Finite sample generalization guarantees remain rare, and existing results do little to explain the effectiveness of standard greedy splits. For GBMs, the situation is arguably worse: while boosting theory is well developed in the classification setting \citep{schapire2012, Buhlmann01062003, bartlett2004}, the specific inductive bias of tree-based gradient boosting---and in particular, how ensemble size interacts with tree complexity---lacks a comparably sharp characterization.

We adopt a spectral perspective that addresses both algorithms. For RFs, we work in the kernel tradition \citep{breiman2000, davies_rf_kernel, scornet_kernel}, showing that the eigenvalue decay of the integral operator induced by the RF kernel governs the statistical rate. Our main theoretical result is minimax optimality for an algorithm that closely approximates the original method of \citet{Breiman2001}, affirmatively resolving a long open question in the literature. The proof sheds light on how greedy splits inherently adapt to sparse signals, connecting algorithmic hyperparameters to the spectral geometry of the induced kernel.

Taking this spectral viewpoint, we observe that the principal eigenvectors of the RF kernel operator capture the most informative dimensions of the model. 
GBMs, by contrast, do not admit a kernel representation, but an analogous compression scheme is achievable via the singular value decomposition of its implicit smoother matrix \citep{curth2023_double_descent}. In both cases, models can be distilled by learning the nonlinear mapping from the input space to an appropriate spectral representation. Since a few leading modes capture most of the problem's variance, the resulting models maintain competitive performance while being significantly smaller and faster than the originals, thus enabling deployment on resource-constrained devices where full ensembles are infeasible. Crucially, while previous methods require users to set an accuracy tolerance---leaving the final model size unpredictable---our method allows the size to be set explicitly. This provides a practical solution for deployment with strict memory budgets while remaining competitive on the empirical rate-distortion Pareto frontier.


The remainder of this paper is structured as follows. We review related work in Sect. \ref{sec:related}. Following a brief summary of our setup and assumptions in Sect. \ref{sec:setup}, we state our main theoretical result in Sect. \ref{sec:main}. 
This motivates a distillation scheme for tree ensembles, which we describe in Sect. \ref{sec:compression}.
Empirical results are presented in Sect. \ref{sec:exp}, including extensive benchmarks against state of the art methods. A brief discussion follows in Sect. \ref{sec:discussion}. Sect. \ref{sec:conclusion} concludes.

\section{Related Work}\label{sec:related}

\paragraph{RF Theory}
For good overviews of RF theory, see \citep{Biau2016, scornet_hooker_2025}. Early work in this area tended to prioritize asymptotic consistency results \citep{breiman2004, Meinshausen2006, biau_consistency, biau2010, genuer2012, Scornet2015}, often under simplifying assumptions. 
Convergence rates are generally harder to come by, though some authors have established PAC-Bayes bounds for RFs \citep{lorenzen_PACbayes} and finite sample results for classification \citep{gao2020, gao2022}.
Minimax convergence, a common optimality target in nonparametric regression, has so far proven elusive.
Suboptimal rates have been derived for non-adaptive \citep{klusowski2021} and adaptive RFs \citep{Klusowski2024}.
An online variant of the RF algorithm known as Mondrian forests \citep{mondrian_forest} are known to be minimax optimal \citep{mourtada2017, mourtada2020}, as are so-called tessellation forests \citep{oreilly2024}, which use oblique splits.
However, the split mechanism in both cases is non-adaptive, and so these results do little to explain the performance of standard RFs, which clearly benefit from their greedy optimization objective. 

Our theoretical approach differs markedly from previous work in this area. Classical minimax results assume that the sampling distribution is dominated by some reference measure over the input space $\mathbb R^d$, giving rates that scale poorly with data dimensionality \citep{tsybakov2008}. 
However, in high-dimensional settings, data are often concentrated on a low-dimensional manifold \citep{bengio2013}. Algorithms that can adapt to such sparse structures often converge far faster than theory would suggest. 
Building on this insight, several authors have taken a spectral approach to kernel regression, deriving novel minimax rates under a source condition and assumptions on eigenvalue decay \citep{caponetto2007, blanchard2020, rastogi2023, zhang2025}. We adapt these results to the RF context, establishing the minimax optimality of an adaptive RF algorithm.

\paragraph{Ensemble compression}

Pruning methods for tree ensembles selectively remove splits or entire trees from the forest. 
This tradition goes back to the original CART algorithm, where truncating depth was proposed as a form of tree regularization \citep{breiman1984}. 
Numerous ensemble variants have been developed since \citep{Tsoumakas2009}, notably including Leaf Refinement with $L_1$ pruning (LRL1) \citep{busch2023}, which aims to drop redundant trees; and
ForestPrune (FP) \citep{liu2023}, which optimizes tree depth on a per-tree basis. 
Though FP generally compresses better than LRL1, it produces symmetric trees by default, which may be suboptimal in some settings.
Building on these methods, \citet{devos2025} propose Level-wise Optimization and Pruning (LOP), which traverses the ensemble by tree depth, optimizing a global $L_1$-regularized objective to simultaneously prune subtrees and update leaf predictions. These methods generally fix an error tolerance upfront and minimize model size subject to this budget.

Distillation approaches, by contrast, typically fix a target size upfront and minimize error subject to a budget on the compression rate. 
The method originated in deep learning, where knowledge distillation involves training a ``student'' network to mimic the soft labels of a larger ``teacher'' network \citep{hinton_distilling, hinton_distilling2}. 
The approach has been adapted to tree ensembles.
\citet{zhou2016sat} use the parent ensemble as an oracle to generate pseudo-samples and stabilize the decision boundaries of a single surrogate tree, while \citet{vidal2020born} develop a dynamic program to construct a minimal decision tree that perfectly reproduces the original ensemble's predictions. Such exact distillation of a large forest into a single tree is NP-hard, and quickly becomes infeasible as dimensionality increases.  

\begin{figure*}[t]
    \centering
    \includegraphics[width=\textwidth]{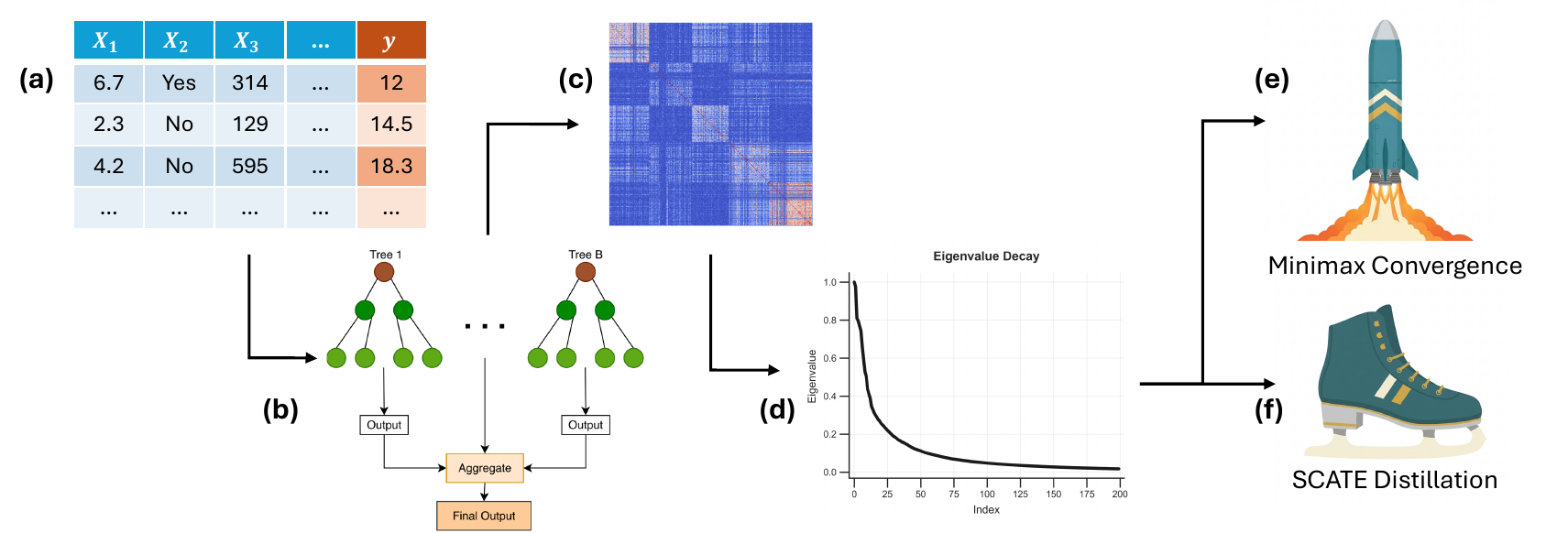}
    \caption{Summary of our pipeline. (a) Start with tabular training data for supervised learning (regression or classification). (b) Train a RF or GBM. (c) A kernel matrix $\mathbf{K}$ or smoother matrix $\mathbf{S}$ is extracted from the ensemble. (d) Compute the spectrum of the operator. (e) Under fast decay, RFs converge at the minimax-optimal rate. (f) The base ensemble is distilled by SCATE, which approximates model predictions by learning the top spectral directions.}
    \label{fig:pipeline}
\end{figure*}
A related but distinct line of work aims to produce smaller, more interpretable models via rule extraction. 
Prominent early examples include RuleFit \citep{friedman2008} and Node Harvest \citep{meinshausen2010}, which select the most informative splits via lasso penalties and constrained quadratic programming, respectively. 
More recent methods, such as SIRUS \citep{benard2021, benard2} and FIRE \citep{liu2023fire, liu2025extracting}, generally produce stabler results. 
The SSF algorithm extracts nodes into isolated decision stumps \citep{alkhoury2024splitting}, while MicroForest \citep{yoo2025microforest} and MILP-based alternatives \citep{bonasera2026} reassemble high information-gain nodes from multiple source forests into a new architecture.
Although these methods enable fast, memory-efficient prediction on constrained devices, they are generally optimized for interpretability over performance, as we confirm in our experiments.

\section{Background}\label{sec:setup}

Consider the standard regression setting, with data $\mathcal D_N = \{\bm x_i, y_i\}_{i=1}^N$ sampled i.i.d. from the joint probability measure $\mu$ over the feature space $\mathcal X \subseteq \mathbb R^d$ and label space $\mathcal Y \subseteq \mathbb R$. The goal is to estimate the conditional expectation $f^*(\bm x) = \mathbb E[Y \mid \bm x]$.

The CART algorithm \citep{breiman1984} approximates $f^*$ by learning a recursive partition of $\mathcal X$. Axis-aligned splits are selected on a greedy basis to minimize a loss function over the training set $\mathcal D_N$, typically mean squared error for continuous $Y$. Splits continue until either samples are exhausted or some stopping criterion is met, e.g. target depth. Terminal nodes are labeled by taking leaf-wise averages of $Y$. This procedure is universally consistent under common regularity assumptions \citep{devroye1996, gyorfi2002}.

Despite their attractive asymptotic properties, trees can be high-variance predictors in practice. 
RFs overcome this by taking an ensemble of independently randomized CART basis functions. GBMs, on the other hand, learn in a sequential manner, with each tree improving on the performance of the last. 
Both models admit natural interpretations as \textit{smoothers} \citep{gams1990}, i.e. supervised learners of the form 
\begin{align}\label{eq:smooth}
    f(\bm x_0) = \sum_{i=1}^N ~w_i(\bm x_0) ~y_i,
\end{align}
where $\bm x_0 \in \mathcal X$ is an arbitrary point and $i \in [N]$ indexes the training data. Smoothers make predictions by taking linear combinations of training labels, with weights given by a potentially nonlinear smoothing function $w: \mathcal X \mapsto \mathbb R^N$. The differences between RFs and GBMs can be fully explained by their respective smoothing functions.

\paragraph{Random Forests}
In the case of RFs, the smoother is a kernel function \citep{breiman2000, davies_rf_kernel, scornet_kernel}.
Each tree induces a partition of $\mathcal X$ into hyperrectangular cells $\{\mathcal X_\ell\}_{\ell=1}^M$, one for each leaf. 
Let $A(\cdot; b): \mathcal X \mapsto \{\mathcal X_\ell\}_{\ell=1}^M$ be a map from samples to cells for a given tree $b$. 
The partition defines an indicator kernel that reports whether a pair of feature vectors are routed to the same leaf: $\hat k^{(b)}(\bm x, \bm x') = \bm 1 \big\{ A(\bm x; b) = A(\bm x'; b) \big\}$.
Normalizing over training data and averaging across trees, we derive the empirical RF kernel:
\begin{align*} 
    \hat k_N^B(\bm x, \bm x') = \frac{1}{B} \sum_{b=1}^B \Big( \frac{\hat k^{(b)}(\bm x, \bm x')}{\sum_{i=1}^N \hat k^{(b)}(\bm x, \bm x_i)} \Big).
\end{align*}
This kernel provides the weights of Eq. \ref{eq:smooth} for RFs. 
The resulting kernel matrix $\hat{\bm K} \in [0,1]^{N \times N}$ is symmetric, PSD, and doubly stochastic.
Because RF splits are adaptively selected to minimize a loss function, the kernel is not independent of outcomes $Y$. Still, the resulting RKHS is asymptotically universal and characteristic \citep{vu_RFAE}. 

The population RF kernel has corresponding integral operator $\bm K: L^2(\mu_X) \mapsto L^2(\mu_X)$, defined by:
\begin{align*}
    (\bm Kh)(\cdot) = \int_\mathcal X k(\cdot, \bm x) ~h(\bm x) ~d \mu_X(\bm x).
\end{align*}
We denote the operator's eigendecomposition as $(\psi_i, \lambda_i)_{i \geq 1}$.
We use subscripts to denote the number of leaves used to construct the trees over which the operator averages. For example, $\bm K_2$ denotes the operator that would result from an infinite ensemble of decision stumps. 

\paragraph{Gradient Boosting}  
GBMs learn sequentially, with each basis function fit to the residuals of the previous ensemble. Consider gradient boosting with learning rate $\eta \in (0, 1]$ and $L_2$ loss. The algorithm initializes $f_0(\bm x) = 0$ and iterates, at each round $b \in [B]$, the following steps: 
\begin{enumerate}[noitemsep]
    \item Fit a regression tree $\tilde{f}_b$ to the current residuals $\{(\bm x_i, y_i - f_{b-1}(\bm x_i))\}_{i=1}^N$.
    \item Update $f_b(\bm x) \gets f_{b-1}(\bm x) + \eta \tilde{f}_b(\bm x)$.
\end{enumerate}
The final prediction is $\hat{f}_{\mathrm{GBM}}(\bm x_0) = f_B(\bm x_0)$. Smoothing weights must be constructed recursively to account for the sequential residual fitting. 

In the first round, $\tilde{f}_1$ is a standard regression tree with indicator weights $\hat{s}_{\text{tree},1}(\bm x_0) = \hat k^{(1)}(\bm x_0, \cdot)$, giving $\hat{s}_{\text{boost},1}(\bm x_0) = \eta \, \hat{s}_{\text{tree},1}(\bm x_0)$. For subsequent rounds, the residual tree $\tilde{f}_b$ predicts leaf-wise averages of $y_i - f_{b-1}(\bm x_i)$, which can be decomposed into a standard tree prediction minus a correction for the accumulated ensemble predictions. Concretely, let $\{\mathcal X_\ell\}_{\ell=1}^{M}$ denote the leaves of the $b$th tree, and write $\bm{e}_{b}(\bm x_0) = \big(\bm 1\{\bm x_0 \in \mathcal X_1\}, \ldots, \bm 1\{\bm x_0 \in \mathcal X_{M}\}\big)$ for the leaf indicator vector. Then the prediction of the $b$th residual tree at $\bm x_0$ can be written as:
\[
    \tilde{f}_b(\bm x_0) = \bigl(\hat{s}_{\mathrm{tree},b}(\bm x_0) - \hat{s}_{\mathrm{corr},b}(\bm x_0)\bigr) \bm{y},
\]
where $\hat{s}_{\mathrm{tree},b}(\bm x_0)$ are the standard tree weights for the $b$th tree and $\hat{s}_{\mathrm{corr},b}(\bm x_0) = \bm{e}_{b}(x_0) \hat{R}_b$ is a correction term, with $\hat{R}_b \in \mathbb{R}^{M_b \times N}$ the leaf-residual correction matrix whose $\ell$th row is the average of $\hat{s}_{\mathrm{boost},b-1}(\bm x_i)$ over training points $\bm x_i$ falling in leaf $\ell$. The GBM smoothing weights therefore satisfy the recursion:
\[
    \hat{s}_{\mathrm{boost},b}(x_0) = \hat{s}_{\mathrm{boost},b-1}(\bm x_0) + \eta \bigl(\hat{s}_{\mathrm{tree},b}(\bm x_0) - \hat{s}_{\mathrm{corr},b}(\bm x_0)\bigr),
\]
with $\hat{s}_{\mathrm{boost},0}(\bm x_0) = \bm{0}$.
The function $\hat{s}_{\mathrm{boost},B}$ provides the smoothing weights of Eq. \ref{eq:smooth} for GBMs. Stacking over all training points yields the empirical smoother matrix $\hat{\bm S} \in \mathbb R^{N \times N}$.

Unlike the RF kernel operator $\bm K$, the GBM smoother $\bm S$ is generally neither symmetric nor PSD. This precludes a direct eigendecomposition of the kind used for RFs. However, the singular value decomposition $\bm S = \bm U \bm \Sigma \bm V^\top$ provides a natural spectral representation: the leading right singular vectors of $\bm S$ identify the directions in label space that the ensemble amplifies most, while the corresponding left singular vectors describe the associated structure in prediction space. 

\section{Minimax Convergence}\label{sec:main}


In this section, we present a novel minimax convergence result for RF regression. To the best of our knowledge, this constitutes the first such result for any adaptive variant of the algorithm.
We make the following regularity assumptions on tree growth:
\begin{itemize}[noitemsep]
    \item[(A1)] \textit{Honesty}. Training data is partitioned into two disjoint folds: $\mathcal D_{\text{split}}$ for learning splits, and $\mathcal D_{\text{label}}$ for labelling leaves.
    \item[(A2)] \textit{Subsampling}. Tree sample size satisfies $N^{(b)} \leq N$, with $N^{(b)} \rightarrow \infty$ as $N \rightarrow \infty$. 
    \item[(A3)] \textit{Inclusion}. The probability of splitting on any feature at any node is at least $\rho > 0$.
    \item[(A4)] \textit{Balance}. Each split places a fraction $\gamma > 0$ of the available observations in each child node.
\end{itemize}
Assumptions (A1)-(A4) are common in the RF literature \citep{Meinshausen2006, scornet2016_asymptotics, Biau2016, Athey2019}. They represent a slight idealization of \citet{Breiman2001}'s classic algorithm, which does not rely on honest splitting and trains trees on bootstraps rather than subsamples. 
Crucially, (A1)-(A4) do not sacrifice the adaptive nature of RFs, which is essential to their success.

We make the following assumption on the data generating process (DGP):
\begin{itemize}
    \item[(B1)] \textit{Bounds}. The feature space is compact on the unit hypercube, $\mathcal X = [0,1]^d$, with marginal measure $\mu_X$ bounded away from $0$ and $\infty$, and outcomes on the unit interval $\mathcal Y = [0,1]$.
\end{itemize}
The bounds of (B1) are primarily for convenience; restrictions on $\mu_X$ avoid unwanted degeneracies.

Our remaining assumptions characterize the source class. 
All $L^2$ norms and inner products from here on out are taken w.r.t. $\mu_X$ unless otherwise specified.
We write $\bar{\bm K}_M := \mathbb E_\nu[\hat{\bm K}_M \mid \mathcal D_{\text{split}}]$ to denote the expected kernel with $M$ leaves given the split data, with randomness over the tree generating mechanism $\nu$.
\begin{itemize}[noitemsep]
    \item[(C1)] \textit{Source condition}. There exists some exponent $\alpha \in (0, 1]$, leaf count $q \in \mathbb N$, and function $g \in L^2$ with $\lVert g \rVert_{L^2} < \infty$, such that $f^* = \bm K_q^\alpha g$. 
    \item[(C2)] \textit{Polynomial decay}. The operator $\bm K_q$ has eigenvalues $\lambda_i \asymp i^{-\beta}$, for some $\beta > 1$.
    \item[(C3)] \textit{Approximation}. For all $M \geq q$: $\|(\bm{I} - \bm{K}_M)\bm{K}_q^\alpha\|_{op} \lesssim M^{-\alpha\beta}$.
\end{itemize}
(C1) is a standard source condition from inverse problems \citep{rastogi2023}, expressing smoothness of $f^*$ relative to the RF kernel operator. 
(C2) imposes a compressibility assumption on the RF kernel, which posits that leading eigenfunctions are much more informative than the rest. 
Some authors regard polynomial eigenvalue decay as a strong assumption to make in general \citep{blanchard2020, zhang2025}.
However, (C2) can be empirically evaluated for any given dataset; our analysis on C2  in Appx. \ref{appx:c2}, demonstrate that it is a sufficient but not necessary condition for strong RF performance. 
(C3) says that RF kernels approximate the identity at a bounded rate on the source class as depth increases. This is a substantive assumption, but arguably natural given (C1) and (C2): if $f^*$ is smooth relative to the kernel and the kernel's eigenvalues decay polynomially, it is reasonable to expect that deeper trees recover $f^*$ at a commensurate rate.\footnote{In some works of RF theory, $\mu_X$ is presumed uniform (e.g.,  \citep{Wager2018}). In this case, (C3) would follow from (A1)-(A4) and (B1). We conjecture that a similar result extends to bounded densities, but defer a formal proof to future work.}

With these elements in place, we state our main result. See Appx. \ref{appx:proofs} for all proofs.

\begin{theorem}[Minimax convergence]\label{thm:minimax}
Assume (A1)-(A4), (B1), and (C1)-(C3). 
Suppose that $\alpha \beta > 1/2$.
Then with $M \asymp N^{1 / (2 \alpha \beta + 1)}$ leaves per tree and $B=\infty$ trees, we have:
\begin{align*}
    \mathbb E \Big[\|f^* - \hat f\|_{L^2}^2 \Big] \lesssim N^{- 2\alpha \beta / (2\alpha \beta + 1)}.
\end{align*}
This rate is minimax optimal over the source class defined by (C1)-(C3). 
\end{theorem}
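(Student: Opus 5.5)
We prove the upper bound through a bias--variance decomposition that is conditional on the split fold. The lower bound then comes from a standard Fano-type construction on the spectral class.

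\emph{Setup.} By honesty (A1), once we condition on $\mathcal D_{\text{split}}$ (and on $\nu$, with $B=\infty$), the estimator is a linear smoother in the labels of $\mathcal D_{\text{label}}$. We write it as
\[
\hat f(\bm x)=\mathbb E_\nu\Big[\sum_{i\in\mathcal D_{\text{label}}}\frac{\hat k^{(b)}(\bm x,\bm x_i)}{\sum_j \hat k^{(b)}(\bm x,\bm x_j)}\,y_i\Big].
\]
We introduce the population-label counterpart $\tilde f=\mathbb E_\nu[\text{cell average of } f^* \text{ under } \mu_X]$, which equals $\bar{\bm K}_M f^*$ when $\bar{\bm K}_M$ is read as the operator averaging over cells. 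The error then splits as
\[
\|f^*-\hat f\|^2\le 2\|(\bm I-\bar{\bm K}_M)f^*\|^2+2\|\hat f-\tilde f\|^2.
\]

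\emph{Bias.} By (C1), $f^*=\bm K_q^\alpha g$. By (C3), applied with $\bar{\bm K}_M$ in the role of $\bm K_M$ (the step where we must check that (C3) indeed refers to the split-conditional expected kernel), we get
\[
\|(\bm I-\bar{\bm K}_M)f^*\|^2\le \|(\bm I-\bm K_M)\bm K_q^\alpha\|_{op}^2\|g\|^2\lesssim M^{-2\alpha\beta}.
\]

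\emph{Variance.} Conditional on the partition, each cell average is an average of the labels falling in that cell. We use (A4) together with (B1) to guarantee that every leaf has $\mu_X$-mass $\asymp 1/M$. Balance gives a $\gamma$-fraction of the split sample in each child, and via a VC/uniform-deviation argument over axis-aligned rectangles this transfers to population mass with high probability. Hence each leaf holds $\asymp N/M$ label points. Since $Y\in[0,1]$, the per-cell variance is $\lesssim M/N$, and the sampling fluctuation of the cell mean of $f^*$ is of the same order. Averaging over trees can only reduce the $L^2$ norm, by Jensen. We handle empty or small cells on a low-probability event, whose contribution is controlled by boundedness and is $O(e^{-cN/M})$. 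This yields $\mathbb E\|\hat f-\tilde f\|^2\lesssim M/N$.

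\emph{Balancing and lower bound.} Adding the two pieces gives $M^{-2\alpha\beta}+M/N$. Choosing $M\asymp N^{1/(2\alpha\beta+1)}$ gives the rate $N^{-2\alpha\beta/(2\alpha\beta+1)}$. The condition $\alpha\beta>1/2$ ensures $M=o(N^{1/2})$, so that the high-probability cell-mass arguments and (A2) subsampling sizes are compatible. For optimality, we invoke the known lower bound for source classes $\{\bm K_q^\alpha g:\|g\|\le R\}$ with eigenvalues $\lambda_i\asymp i^{-\beta}$ (Caponetto--De Vito, Rastogi). We build a Varshamov--Gilbert packing $f_\omega=\sum_{i=m}^{2m}\omega_i\epsilon\lambda_i^\alpha\psi_i$ with $\epsilon^2\asymp 1/N$ and $m\asymp N^{1/(2\alpha\beta+1)}$. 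We check membership in the class, noting that (C3) holds trivially for these $f_\omega$ because the bound is on the operator and not on $f$. Fano's inequality then gives the matching $N^{-2\alpha\beta/(2\alpha\beta+1)}$.

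\emph{Main obstacle.} The hardest step is the variance bound, specifically showing that adaptive, data-driven cells have population mass uniformly $\asymp 1/M$. We would attempt it via (A4), a uniform concentration bound over rectangles, and the density bounds in (B1), with (A3) used only to prevent degenerate thin cells in any coordinate. A secondary subtlety is identifying the operator in (C3) with the split-conditional $\bar{\bm K}_M$; we would state this identification explicitly as part of the interpretation of (C3).
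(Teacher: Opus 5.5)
Your skeleton matches the paper's: bias $\lesssim M^{-\alpha\beta}$ from (C1) and (C3), a stochastic term $\lesssim M/N$ from honesty plus Jensen over trees, balancing, and Fano on a dyadic block of eigendirections. Your direct bound $\|(\bm I-\bar{\bm K}_M)\bm K_q^\alpha g\|_{L^2}\le\|(\bm I-\bar{\bm K}_M)\bm K_q^\alpha\|_{op}\|g\|_{L^2}$ is all the paper's low/high-frequency split actually uses. The paper also makes, without comment, the same identification of $\bm K_M$ in (C3) with $\bar{\bm K}_M$ that you flag. Merging the paper's ``operator error'' and ``variance'' into the single term $\hat f-\bar{\bm K}_M f^*$ is legitimate and slightly cleaner. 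Given the partition and $N_\ell>0$, the honest cell average of $Y$ is a mean of $N_\ell$ i.i.d.\ draws whose expectation is exactly the population cell mean of $f^*$. Two steps, however, fail as proposed.

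First, the lemma you single out as the main obstacle is false under the stated assumptions, and you do not need it. Balance (A4) does not give every leaf mass $\asymp 1/M$. For $\gamma<1/2$, a complete tree of depth $D$ has $M=2^D$ leaves but may contain one holding a fraction $\gamma^D=M^{-\log_2(1/\gamma)}\ll 1/M$ of the split fold, and unbalanced topologies do worse. Neither (A3) nor (B1) prevents this. So your VC argument has nothing to transfer, and your $O(e^{-cN/M})$ control of empty cells breaks down. The paper instead sums per-leaf inverse moments, which needs no control of the masses $p_\ell$. Conditionally on the partition, $N_\ell\sim\mathrm{Binomial}(N^{(b)},p_\ell)$ and $\mathbb E[\bm 1\{N_\ell>0\}/N_\ell]\le 2/((N^{(b)}+1)p_\ell)$. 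An empty cell costs at most $p_\ell(1-p_\ell)^{N^{(b)}}\le 1/(eN^{(b)})$. Hence, for every partition into $M$ cells,
\[
\mathbb E\bigl[\|\hat f^{(b)}-\bm K_M^{(b)}f^*\|_{L^2}^2 \mid \text{partition}\bigr]\le\sum_{\ell=1}^M\Bigl(\tfrac14\,p_\ell\,\mathbb E\bigl[\bm 1\{N_\ell>0\}/N_\ell\bigr]+p_\ell(1-p_\ell)^{N^{(b)}}\Bigr)\lesssim\frac{M}{N^{(b)}},
\]
where $\hat f^{(b)}$ is the single-tree honest estimate and $\bm K_M^{(b)}f^*$ its population cell mean. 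This is also the correct form of the paper's step ``$\mathbb E[1/Z]\le 2/(np)$'', which is literally infinite when $\mathbb P(Z=0)>0$. It also removes the paper's unjustified requirement $N^{(b)}p_\ell\ge 1$. With it you need no uniform-deviation bound, no (A3), and no $M=o(N^{1/2})$.

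Second, your packing is mis-calibrated. With coefficients $\epsilon\,\omega_i\lambda_i^\alpha$ and $\epsilon^2\asymp 1/N$, elements at Hamming distance $\gtrsim m$ are only $m\,\epsilon^2 m^{-2\alpha\beta}=m^{1-2\alpha\beta}/N\asymp N^{-4\alpha\beta/(2\alpha\beta+1)}$ apart in squared $L^2$ at your $m$. Fano therefore returns the square of the target rate. The $N^{-1/2}$ scale belongs on the coefficients, not on $\epsilon$. Take $\epsilon^2\asymp m^{2\alpha\beta}/N\asymp 1/m$, which saturates the ellipsoid constraint $m\epsilon^2\lesssim R^2$ (this is the paper's $\delta=R/\sqrt a$). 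It keeps $\mathrm{KL}\lesssim Nm\epsilon^2m^{-2\alpha\beta}\asymp m$ with an adjustable small constant, and gives separation $\asymp m^{-2\alpha\beta}\asymp N^{-2\alpha\beta/(2\alpha\beta+1)}$.

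Both you and the paper also leave implicit that the hypotheses must be admissible means for $Y\in[0,1]$, e.g.\ Bernoulli responses with mean $\tfrac12+f_\omega$; these stay in the class because $\bm K_q\bm 1=\bm 1$. If the $\psi_i$ are uniformly bounded, admissibility follows from $\|f_\omega\|_\infty\lesssim m^{1/2-\alpha\beta}\to 0$. That is a natural place for the hypothesis $\alpha\beta>1/2$ to enter, rather than the $M=o(N^{1/2})$ role you assign it.
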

\section{Spectral Compression of Adaptive Tree Ensembles (SCATE)}\label{sec:compression}


This theoretical result motivates a spectral model distillation scheme, which we call \textbf{SCATE} (\textbf{S}pectral \textbf{C}ompression of \textbf{A}daptive \textbf{T}ree \textbf{E}nsembles). 
SCATE uses a (small) neural network to learn the leading spectral directions of the ensemble's implicit linear operator---the kernel matrix $\hat{\bm{K}}$ for RFs, or the smoother matrix $\hat{\bm{S}}$ for GBMs. 
For an RF $\hat{f}_N^B$ trained on $N$ samples, let $\bm{K} \in [0, 1]^{N \times N}$ be the training kernel matrix, with eigenvectors $\bm{\Psi}$ and eigenvalues $\bm{\Lambda} = \text{diag}(\bm{\lambda})$. 
Let $\psi: \mathcal{X} \mapsto \mathbb{R}^N$ be a map from input features to $\bm{K}$'s bundled eigenfunctions, treated as a column vector.
Model predictions over the training data can be written as $\hat{f}_N^B(\bm{X}) = \bm{K}\bm{y}$, where $\bm{y}$ is a vector of training labels. For a test point $\bm{x}_0$, predictions rely on localized kernel weights $\bm{k}_0 = \bm{\Psi\Lambda} \psi(\bm{x}_0)$, yielding:
\begin{equation*}
    \hat{f}_N^B(\bm{x}_0) = \bm{k}_0^\top \bm{y} = (\bm{\Psi\Lambda}\psi(\bm{x}_0))^\top \bm{y} = \psi(\bm{x}_0)^\top \bm{\Lambda\Psi}^\top \bm{y}.
\end{equation*}

This derivation reveals our compression strategy: we store the eigendecomposition of $\bm{K}$, truncated to rank $P \ll N$, and train a neural network to approximate the top $P$ eigenfunctions $\{\psi_i\}_{i=1}^P$. By defining a fixed coefficient vector $\bm{c} := \bm{\Lambda}_P \bm{\Psi}_P^\top \bm{y} \in \mathbb{R}^P$, test time predictions amount to evaluating:
\begin{equation*}
    \hat{f}_N^B(\bm{x}_0) \approx \sum_{j=1}^P c_j \hat{\psi}_j(\bm{x}_0).
\end{equation*}

By the Eckart-Young-Mirsky theorem \citep{eckart1936approximation}, this rank-$P$ truncation provides the optimal low-rank matrix approximation under the Frobenius norm, ensuring we retain maximal information from the ensemble's smoother matrix. 
An identical strategy works for GBMs, substituting $\bm S = \bm{U \Sigma V}^\top$ for $\bm K = \bm{\Psi \Lambda \Psi}^\top$ and learning the left singular vector function $u: \mathcal X \mapsto \mathbb R^P$. Note that, because we can compress any ensemble that generates a valid smoother matrix in accordance with Eq. \ref{eq:smooth}, SCATE can be applied to a model already pruned by a different algorithm, provided that leaf values are not altered by the process. (Which is the case for FP, for instance, but not for LOP.)

\paragraph{Learning spectral directions with SCATE}
Extracting eigenfunction mappings from a given matrix is a well-studied problem. Existing neural approaches, such as SpIN \citep{pfau2018spectral} and NeuralEF \citep{deng2022neuralef}, rely heavily on constrained optimization, explicitly enforcing eigenvector orthonormality within the loss function to guide the network. However, constrained optimization methods are computationally heavy, making them difficult to tune and design in low-size settings. Conversely, methods like NeuralSVD \citep{ryu2024operator} utilizes unconstrained optimization by assigning greater weight to eigenfunctions with stronger signals. However, in practice, we find that these methods struggle to learn efficiently in constrained settings due to their complex objectives --- see Appx. \ref{appx:neural_exp} for a small comparison.

To overcome these limitations, SCATE focuses on a simple architecture with a targeted loss formulation. The network is parameterized as a simple Multi-Layer Perceptron (MLP) with SiLU activations \citep{ELFWING20183}. SCATE is trained via a multi-task objective \citep{caruana97}, which minimizes a spectrum-weighted MSE, alongside an orthogonality penalty $\mathcal{L}_\text{ortho}$, which represents the sum of off-diagonal terms of the empirical Gram matrix:
\begin{equation*}
    \min_g \sum_{i=1}^N \sum_{j=1}^P \lambda_j^2 \big( g_j(\bm x_i) - \psi_j(\bm x_i) \big)^2 + \gamma \mathcal{L}_{\text{ortho}}.
\end{equation*}
We explicitly weigh the MSE by the corresponding squared eigenvalue $\lambda_j^2$---or singular value $\sigma_j^2$ for SVD---to encode greater interest in the most informative, high-variance spectral directions, and set $\gamma = 0.001$, which we found balanced mode diversity and reconstruction error well.

\paragraph{Oracle Benchmark}
To isolate our network's approximation error from the inherent loss of a low-rank bottleneck, we benchmark against an eigenfunction oracle. Given direct access to the train-test cross-kernel matrix $\bm{K}_{test}$, the oracle computes the optimal least-squares mapping $\bm{W}$ from the scaled training eigenvectors $\bm{C} = \bm{\Psi\Lambda}$, solving $\bm{W}\bm{C}^\top = \bm{K}_{test}$. Truncating to the top $P$ dimensions, the reconstructed test kernel is $\tilde{\bm{K}} = \bm{W}_P \bm{C}_P^\top$. The oracle then computes predictions via $\tilde{\bm{K}}\bm{y}$, and establishes a strict lower bound on the Frobenius reconstruction error $\|\bm{K}_{test} - \tilde{\bm{K}}\|_F$ achievable by any rank-$P$ distillation.
Interestingly, this does not guarantee that $\tilde{\bm K}$ attains optimal performance in terms of MSE on the associated prediction task, although it is rarely far off in practice.

\section{Experiments}\label{sec:exp}
In this section, we evaluate SCATE's performance for both RFs and GBMs. Comprehensive details describing the experimental setup can be found in Appx. \ref{appx:exp}; code can be found in the supplement. 
We use consistent hyperparameters throughout, with \texttt{num\_trees=250}, \texttt{max\_depth=15} for RFs and \texttt{num\_trees=100}, \texttt{max\_depth=6} for GBMs.

\paragraph{Spectra and Generalization} Our strategy is motivated by the idea that the strong performance of RFs (and, by analogy, GBMs) is driven by their capacity to learn rich, informative embeddings encoded in the principal directions of their respective smoothing functions.
Spectral decay has implications not just for the convergence rate of an ensemble, as in Thm. \ref{thm:minimax}, but for distillation as well. To demonstrate this, Fig. \ref{fig:visualisation} illustrates the predictive performance of SCATE and the spectral oracle as a function of the number of eigenfunctions ($p$) for the RF case. We highlight the \texttt{student} and \texttt{concrete} datasets as representative examples of ``easy'' (rapid eigendecay) and ``hard'' (slower eigendecay) scenarios, respectively. Results for additional datasets are provided in the Appx \ref{appx:exp}. We evaluate two different architectures for comparison---a small MLP of \texttt{width=16,depth=2} and a larger MLP of \texttt{width=128,depth=4}. While the larger network’s superior performance across both datasets confirms that increased expressivity is useful, the results also demonstrate a connection between compressibility and difficulty, where the model performance converges quicker for the dataset with faster decay.
Because SCATE's resource-bound architecture is limited to learning top components, its success depends on how much of the model’s total predictive power is concentrated within these leading eigenfunctions.

\begin{figure}[t]
    \centering
    \includegraphics[width=0.7\linewidth]{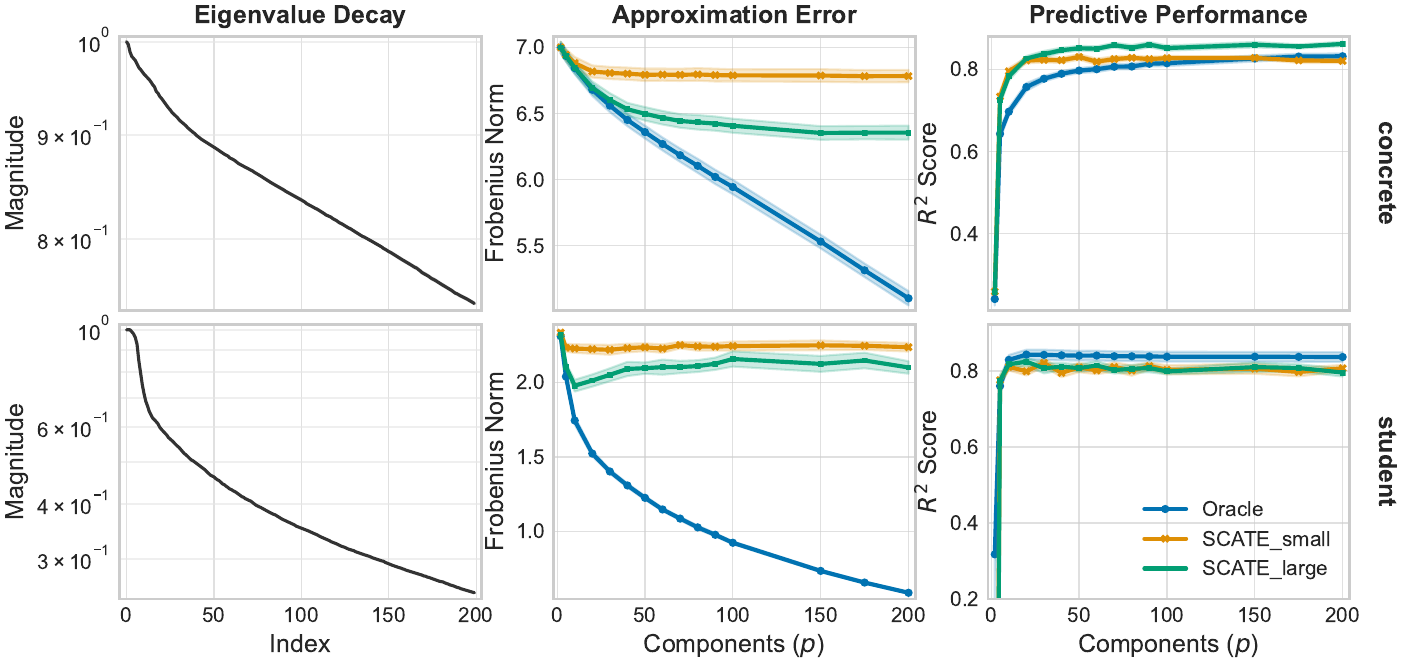}
    \caption{Spectral decay, kernel approximation error, and predictive performance across two datasets.}
    \label{fig:visualisation}
\end{figure}
\paragraph{Na\"{i}ve Benchmark} To further validate our approach, we benchmark SCATE against three na\"{i}ve tree compression baselines: (1) a vanilla MLP mirroring SCATE's architecture, but trained to directly learn mimic RF predictions (Na\"{i}ve Distillation); (2) an RF artificially constrained to various sizes via hyperparemeter tuning (Na\"{i}ve RF); and (3) the base performance of the original RF. Whereas in Fig. \ref{fig:visualisation}, we fixed model size and varied the number of eigenfunctions $p$, here we do the inverse. With fixed $p = 50$---a value we found to be generally robust, 
striking a reasonable balance between complexity (of the base ensemble) and learnability (for the compressed network)---we vary the model size to create a Pareto frontier of rate-distortion curves for various methods and datasets. The base RF provides a reference point, indicating an optimal performance threshold (albeit at high model size). 

We visualize results for three datasets in Fig. \ref{fig:naive} (additional results provided in Appx. \ref{appx:exp}).
SCATE consistently dominates in this experiment, effectively matching the predictive performance of the base RF while using orders of magnitude less memory.
Na\"{i}ve RF approaches the same performance levels but at a far slower rate.
SCATE also significantly outperforms the Na\"{i}ve Distillation scheme, which struggles to approximate RF predictions using the comparatively impoverished signal provided by direct RF outputs. This aligns with a notable finding of \citet{hinton_distilling}, who initiated research into neural network distillation. They showed that small student networks can quickly attain strong performance in classification tasks when trained on softmax weights from a larger teacher network because these so-called ``soft labels'' provide far more information than the hard labels used to train the original model.
We find that eigenfunctions perform a similar role in forest distillation for regression, accelerating distillation with rich spectral signals. 


\paragraph{Compression Benchmark} 
For this experiment, we evaluate the size-performance trade-off of SCATE against a suite of established compression techniques, including pruning (FP and LOP), rule extraction (TreeExtract and FIRE), and approximate distillation (SSF). We also introduce a hybrid approach, FP-SCATE, which applies ForestPrune to a base model before further refinement with SCATE. We excluded other techniques like SIRUS, RuleFit, and LRL1, as prior studies suggest they are dominated by the methods we use here. 
We were unable to find an implementation of MicroForest. 
For all SCATE variants, we maintain $p=50$ eigenfunctions, consistent with our previous analysis.

\begin{figure}[t]
    \centering
    \includegraphics[width=0.9\linewidth]{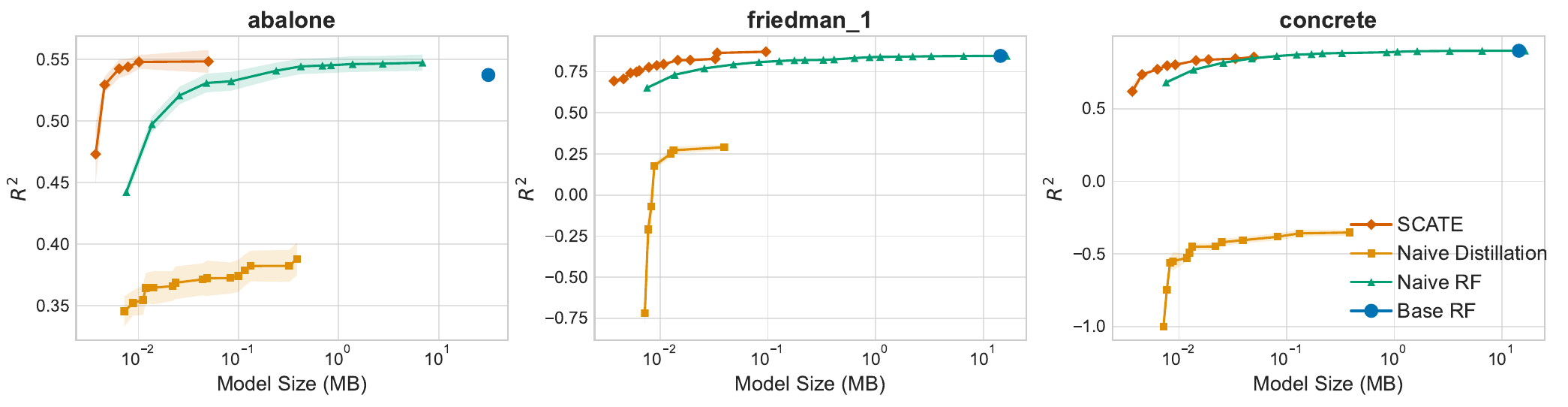}
    \caption{Rate-performance Pareto curves across three benchmark datasets. Shaded regions indicate standard errors across $10$ trials.}
    \label{fig:naive}
\end{figure}

We evaluate performance across several regression and classification datasets using $R^2$ and accuracy, respectively; further dataset details are provided in Appx. \ref{appx:exp}. 
Because the benchmarked algorithms utilize different hyperparameter mechanisms to control compression rates,
we adopt a size-constrained evaluation strategy. For each method and run, we sweep across a broad hyperparameter range to generate a performance-vs-size profile. 
We then report the best performance achieved at two fixed model sizes: 10KB and 100KB. These thresholds reflect the limited flash memory typical of microcontrollers, which generally operate with a budget under 1MB \citep{branco2019micro}. Original model sizes vary, but at 10KB, compression factors are on the order of $10^3$ for RFs and $10^2$ for GBMs --- see Appx. \ref{appx:big_table}. This experimental design is applied to both RF and GBM models, with the exception of SSF, which is restricted to RFs. 
Notably, while the RF kernel $\bm{K}$ generalizes naturally to both tasks, the GBM smoother $\bm{S}$ is defined specifically for MSE loss. To extend SCATE to GBM classification, we train the base model using MSE loss—effectively treating classification as a regression problem—which limits our evaluation to the binary case. 
As shown in Tables \ref{tab:rf} and \ref{tab:gbm}, SCATE yields highly competitive results for both RF and GBM compression, securing the highest average rank across all evaluated model sizes, with FP-SCATE coming in second for the RF benchmark. 
In contrast, this hybrid method declines markedly when applied to GBMs. We attribute this decline to the weaker baseline performance of FP as a preprocessing step, compounded by the fact that FP's pruning mechanism alters the ensemble structure in a way that violates our recursive formula for computing $\bm S$. For a more comprehensive results of algorithm performance, see Appx. \ref{appx:extra_results}. 

\begin{table}[t] 
\centering
\caption{RF compression benchmark, averaged over $10$ trials. Best result per row in bold. NA indicates that the model could not reach the target size; TO indicates time out after 24 hours. For an expanded table with standard errors, see Appx. \ref{appx:big_table}}
\label{tab:results}
\setlength{\tabcolsep}{3pt}
\renewcommand{\arraystretch}{1.1}
\resizebox{\textwidth}{!}{
\begin{tabular}{lc|ccccccc|ccccccc}
\toprule
 & & \multicolumn{7}{c|}{\textbf{Size = 10KB}} & \multicolumn{7}{c}{\textbf{Size = 100KB}} \\
\cmidrule{3-16}
\textbf{Dataset} & \textbf{RF} & \textbf{FP} & \textbf{LOP} & \textbf{FIRE} & \textbf{TE} & \textbf{SSF} & \textbf{SCATE} & \textbf{FP-SCATE} & \textbf{FP} & \textbf{LOP} & \textbf{FIRE} & \textbf{TE} & \textbf{SSF} & \textbf{SCATE} & \textbf{FP-SCATE} \\
\midrule
\texttt{abalone} & 0.55 & 0.35 & 0.50 & 0.28 & NA & NA & \textbf{0.56} & 0.54 & 0.53 & 0.51 & 0.28 & 0.55 & NA & \textbf{0.57} & 0.55 \\
\texttt{adult} & 0.85 & 0.79 & 0.83 & TO & NA & NA & \textbf{0.84} & 0.83 & 0.84 & 0.83 & TO & 0.84 & \textbf{0.85} & 0.85 & 0.85 \\
\texttt{auto\_mpg} & 0.69 & 0.49 & \textbf{0.63} & 0.62 & NA & 0.60 & 0.45 & 0.46 & 0.58 & \textbf{0.63} & 0.62 & 0.38 & 0.60 & 0.50 & 0.51 \\
\texttt{banknote} & 0.99 & 0.87 & 0.99 & 0.99 & NA & 1.00 & \textbf{1.00} & \textbf{1.00} & 0.92 & 0.99 & 0.99 & 0.89 & \textbf{1.00} & \textbf{1.00} & \textbf{1.00} \\
\texttt{bc} & 0.96 & 0.95 & 0.92 & 0.95 & NA & 0.98 & \textbf{0.98} & 0.98 & 0.95 & 0.92 & 0.95 & 0.92 & 0.98 & \textbf{0.99} & 0.99 \\
\texttt{boston} & 0.85 & 0.60 & \textbf{0.80} & 0.68 & NA & NA & 0.78 & 0.79 & 0.80 & 0.82 & 0.69 & 0.81 & 0.72 & 0.81 & \textbf{0.83} \\
\texttt{california} & 0.80 & 0.45 & 0.68 & TO & TO & NA & 0.70 & \textbf{0.71} & 0.68 & 0.74 & TO & TO & NA & \textbf{0.75} & 0.74 \\
\texttt{churn} & 0.86 & 0.82 & 0.84 & 0.85 & NA & NA & \textbf{0.86} & 0.85 & 0.85 & 0.85 & 0.85 & 0.85 & 0.83 & \textbf{0.86} & 0.86 \\
\texttt{concrete} & 0.91 & 0.50 & 0.74 & 0.73 & NA & NA & 0.80 & \textbf{0.82} & 0.76 & 0.86 & 0.75 & 0.89 & \textbf{0.90} & 0.86 & 0.86 \\
\texttt{diabetes} & 0.76 & 0.76 & 0.74 & 0.77 & NA & NA & 0.79 & \textbf{0.79} & 0.76 & 0.74 & 0.77 & 0.66 & 0.73 & 0.79 & \textbf{0.80} \\
\texttt{friedman\_1} & 0.85 & 0.47 & 0.76 & 0.48 & NA & NA & \textbf{0.82} & 0.81 & 0.74 & 0.82 & 0.49 & 0.79 & NA & \textbf{0.86} & 0.86 \\
\texttt{spambase} & 0.95 & 0.87 & 0.92 & \textbf{0.94} & NA & NA & 0.93 & 0.93 & 0.91 & 0.93 & 0.94 & 0.89 & \textbf{0.94} & 0.94 & 0.94 \\
\texttt{student} & 0.85 & 0.64 & 0.80 & 0.71 & NA & 0.82 & \textbf{0.85} & 0.84 & 0.84 & 0.81 & 0.72 & 0.79 & 0.82 & \textbf{0.86} & 0.84 \\
\texttt{telco} & 0.79 & 0.75 & 0.77 & 0.79 & NA & NA & \textbf{0.80} & 0.80 & 0.80 & 0.77 & 0.79 & 0.79 & 0.78 & \textbf{0.81} & 0.80 \\
\texttt{wq} & 0.50 & 0.24 & 0.31 & 0.31 & NA & NA & \textbf{0.34} & 0.33 & 0.35 & 0.36 & 0.34 & 0.28 & 0.24 & \textbf{0.36} & 0.36 \\
\midrule
\textbf{Average Rank} &  & 4.8 & 3.47 & 3.87 & 6.57 & 5.50 & \textbf{1.77} & 2.03 & 4.40 & 4.33 & 4.85 & 5.00 & 4.19 & \textbf{1.93} & 2.47 \\
\bottomrule
\end{tabular}
}
\label{tab:rf}
\vspace{-3mm}
\end{table}

\begin{table}[htbp]
\centering
\caption{GBM compression benchmark, averaged over $10$ trials. Best result per row in bold. NA indicates that the model could not reach the target size; TO indicates time out after 24 hours. For an expanded table with standard errors, see Appx. \ref{appx:big_table}}
\label{tab:gbm}
\setlength{\tabcolsep}{3pt}
\renewcommand{\arraystretch}{1.1}
\resizebox{\textwidth}{!}{
\begin{tabular}{lc|cccccc|cccccc}
\toprule
& & \multicolumn{6}{c|}{\textbf{Size = 10KB}} & \multicolumn{6}{c}{\textbf{Size = 100KB}} \\
\cmidrule{3-14}
\textbf{Dataset} & \textbf{GBM} & \textbf{FP} & \textbf{LOP} & \textbf{TE} & \textbf{FIRE} & \textbf{SCATE} & \textbf{FP-SCATE} & \textbf{FP} & \textbf{LOP} & \textbf{TE} & \textbf{FIRE} & \textbf{SCATE} & \textbf{FP-SCATE} \\
\midrule
\texttt{abalone} & 0.53 & 0.44 & 0.50 & 0.54 & 0.47 & \textbf{0.56} & 0.55 & 0.53 & 0.52 & 0.55 & 0.47 & \textbf{0.57} & 0.55 \\
\texttt{adult} & 0.85 & 0.82 & 0.84 & \textbf{0.85} & 0.81 & 0.85 & 0.85 & 0.85 & 0.85 & \textbf{0.86} & 0.81 & 0.85 & 0.85 \\
\texttt{auto\_mpg} & 0.66 & 0.60 & 0.58 & 0.38 & \textbf{0.62} & 0.44 & 0.13 & \textbf{0.69} & 0.58 & 0.38 & 0.62 & 0.48 & 0.21 \\
\texttt{banknote} & 0.98 & 0.95 & 0.98 & 0.99 & 0.98 & \textbf{1.00} & 1.00 & 0.99 & 0.98 & 1.00 & 0.98 & \textbf{1.00} & 1.00 \\
\texttt{bc} & 0.94 & 0.94 & 0.94 & 0.94 & 0.95 & \textbf{0.98} & 0.95 & 0.95 & 0.94 & 0.95 & 0.95 & \textbf{0.99} & 0.95 \\
\texttt{boston} & 0.86 & 0.78 & \textbf{0.81} & 0.79 & 0.78 & 0.79 & 0.71 & \textbf{0.87} & 0.83 & 0.82 & 0.78 & 0.84 & 0.76 \\
\texttt{california} & 0.83 & 0.58 & 0.76 & \textbf{0.79} & 0.75 & 0.71 & 0.70 & 0.80 & \textbf{0.82} & 0.79 & 0.75 & 0.79 & 0.76 \\
\texttt{churn} & 0.86 & 0.85 & 0.86 & \textbf{0.86} & 0.85 & 0.86 & 0.85 & 0.86 & 0.86 & \textbf{0.87} & 0.85 & 0.86 & 0.86 \\
\texttt{concrete} & 0.92 & 0.71 & 0.83 & \textbf{0.87} & 0.84 & 0.79 & 0.81 & 0.90 & 0.91 & \textbf{0.92} & 0.84 & 0.88 & 0.87 \\
\texttt{diabetes} & 0.74 & 0.77 & 0.76 & 0.78 & 0.74 & \textbf{0.79} & 0.73 & 0.78 & 0.76 & 0.78 & 0.74 & \textbf{0.80} & 0.73 \\
\texttt{friedman\_1} & 0.91 & 0.68 & 0.87 & \textbf{0.92} & 0.79 & 0.89 & 0.87 & 0.90 & 0.90 & 0.92 & 0.79 & \textbf{0.97} & 0.93 \\
\texttt{spambase} & 0.95 & 0.90 & 0.92 & \textbf{0.94} & 0.92 & 0.93 & 0.93 & 0.95 & 0.93 & \textbf{0.95} & 0.92 & 0.94 & 0.94 \\
\texttt{student} & 0.83 & 0.81 & 0.79 & 0.79 & 0.75 & \textbf{0.84} & 0.70 & \textbf{0.86} & 0.79 & 0.79 & 0.75 & 0.84 & 0.70 \\
\texttt{telco} & 0.79 & 0.79 & 0.79 & 0.80 & 0.79 & \textbf{0.81} & 0.79 & 0.80 & 0.79 & 0.81 & 0.79 & \textbf{0.81} & 0.80 \\
\texttt{wq} & 0.45 & 0.29 & 0.32 & 0.30 & \textbf{0.35} & 0.33 & 0.33 & \textbf{0.39} & 0.38 & 0.30 & 0.35 & 0.36 & 0.36 \\
\midrule
\textbf{Average Rank} & & 5.13 & 3.27  & 2.47 & 3.87 & \textbf{2.13} & 4.13 & 2.57 & 3.8 & 2.80 & 5.33 & \textbf{2.13} & 4.37 \\
\bottomrule
\end{tabular}
}
\label{tab:gbm}
\vspace{-3mm}
\end{table}

\section{Discussion}\label{sec:discussion}

Our theoretical results suggest that generalization in tree ensembles is strongly linked to the compressibility of the model. The more information is encoded in the leading eigenfunctions of the RF or singular vectors of the GBM, the better the model learns, and the more easily it can be distilled. This accords with the minimum description length principle \citep{rissanen1989, hutter2004universal, grunwald2007} and related information theoretic insights suggesting a profound connection between the dual tasks of prediction and compression. 

Notably, our minimax optimality theorem is limited to RF regression. A promising direction for future work would be to pursue similar results for GBMs. This poses substantial technical challenges, as kernel methods do not apply. The smoother matrix is neither symmetric nor PSD, and boosting induces nontrivial dependencies between basis functions. Still, recent work suggests some theoretical overlap in the mechanics of tree ensembles that may be exploited for future results \citep{zhou}.

Our compression method differs fundamentally from traditional techniques in avoiding direct learning from the base model. Instead, SCATE learns to reproduce the leading spectral directions of a linear operator. 
This approach offers distinct advantages over classical pruning methods, which typically optimize the tree structure itself, making their computational cost heavily dependent on the base model's size. By contrast, SCATE relies solely on the kernel, which can be computed with a single forward pass of the training data. 
Scaling up the complexity of the base model therefore has minimal impact on distillation time. 
Furthermore, whereas traditional pruning approaches usually output tree-like structures, which rely on inherently sequential branching logic, SCATE compiles into a network where inference amounts to computing fast, parallelizable matrix operations. 

SCATE is not without its limitations. First, training imposes nontrivial computational demands: storing operators requires $\mathcal{O}(N^2)$ space, and truncated decomposition takes $\mathcal{O}(N^2P)$ time. However, alongside standard mitigations like randomized SVD \citep{Halko2011Finding} and Nystr{"o}m approximations \citep{WilliamsSeeger2001}, we find empirically that subsampling the training set to 10,000 observations produces no noticeable loss in distillation accuracy, effectively removing this bottleneck for large datasets. Second, deployment constraints restrict SCATE to using just the top $20$ to $50$ eigenfunctions. This may be insufficient for complex problems, leading SCATE's approximation quality to suffer—although our strong empirical results indicate this is not often the case in practice. 
While our framework allows us to stipulate the final size of the model, choosing the optimal network architecture is nontrivial. In our experiments, a simple MLP proved effective in the resource-constrained setting, but more sophisticated alternatives may provide improvements.
Finally, our distilled model sacrifices the purported interpretability benefits of trees, which naturally lend themselves to rule extraction and formal verification \citep{izza2021, ignatiev2022using}. 

\section{Conclusion} \label{sec:conclusion}

We have shown that tree ensembles can be compressed into significantly smaller models of comparable performance using spectral distillation. These results are motivated by a novel theoretical insight: that RFs are minimax-optimal when the eigenvalues of the kernel matrix decay at a polynomial rate. We conjecture that a similar result holds for GBMs, when singular values of the smoother matrix decay quickly. Extensive benchmarks show that our methods are competitive with state of the art pruning and rule extraction techniques for ensemble compression, providing new methods for resource constrained computing.

%% file: appx.tex
\newpage
\appendix
\normalsize

\section{Proofs}\label{appx:proofs}

We restate Thm. \ref{thm:minimax} for completeness.

\textbf{Theorem} [Minimax convergence].
Assume (A1)-(A4), (B1), and (C1)-(C3). 
Suppose that $\alpha \beta > 1/2$.
Then with $M \asymp N^{1 / (2 \alpha \beta + 1)}$ leaves per tree and $B=\infty$ trees, we have:
\begin{align*}
    \mathbb E \Big[\|f^* - \hat f\|_{L^2}^2 \Big] \lesssim N^{- 2\alpha \beta / (2\alpha \beta + 1)}.
\end{align*}
This rate is minimax optimal over the source class defined by (C1)-(C3). 

\begin{proof}

Taking the expectation w.r.t. both the data generating process $\mu$ and the tree randomization mechanism $\nu$, we derive the limiting kernel:
\begin{align*}
    k_\mu^\nu(\bm x, \bm x') 
    := \lim_{N,B \rightarrow \infty} \hat k_N^B(\bm x, \bm x'). 
\end{align*}

Fix some $\mu, \nu$.
We distinguish between the limiting operator $\bm K$ and the finite sample operator $\hat{\bm K}$ (presumed to share the same depth/leaf count unless explicitly stated otherwise.)
That is, while $\hat{\bm K}$ is an empirical kernel matrix defined by $\hat k^B_N$ for finite $N$ and $B$, $\bm K$ is the operator defined by the limiting kernel $k^\nu_\mu$.

Model residuals can be decomposed as follows:
\[
    f^* - \hat f = (f^* - \bar{\bm K}_M f^*) + (\bar{\bm K}_M f^* - \hat{\bm K}_M f^*) + (\hat{\bm K}_M f^* - \hat f),
\]
where $M$ denotes the leaf count given in Thm. \ref{thm:minimax}, and $\bar{\bm K}_M := \mathbb{E}_\nu[\bm K_M \mid \mathcal{D}_{\mathrm{split}}]$ denotes the data-dependent forest-level population operator — the average of per-tree population operators over partition randomness, conditional on the structure fold.
Taking $L^2$ norms and expectations, we have:
\begin{align*}
    \mathbb{E} \Big[ \|f^* - \hat f\|_{L^2}^2 \Big]
    &\lesssim
    \underbrace{\|f^* - \bar{\bm K}_M f^*\|_{L^2}^2}_{\text{bias}^2}
    +
    \underbrace{
        \mathbb{E} \Big[ \|\bar{\bm K}_M f^* - \hat{\bm K}_M f^*\|_{L^2}^2 \Big]
    }_{\text{operator error}}
    +
    \underbrace{
        \mathbb{E} \Big[ \|\hat{\bm K}_M f^* - \hat f\|_{L^2}^2 \Big]
    }_{\text{variance}}.
\end{align*}
We derive upper bounds for each term, followed by matching lower bounds. 


\begin{lemma}[Bias]
\label{lem:bias}
Assume (C1), (C2), and (C3). Then for all $M \geq q$:
\[
    \|f^* - \bar{\bm K}_M f^*\|_{L^2} \lesssim M^{-\alpha\beta}.
\]
\end{lemma}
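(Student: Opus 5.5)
The plan is to substitute the source representation from (C1) directly into the bias term and reduce everything to the operator-norm bound in (C3). Write $f^* = \bm K_q^\alpha g$ with $\|g\|_{L^2} < \infty$. Then
\begin{align*}
    f^* - \bar{\bm K}_M f^* = (\bm I - \bar{\bm K}_M)\bm K_q^\alpha g,
    \qquad
    \|f^* - \bar{\bm K}_M f^*\|_{L^2} \le \|(\bm I - \bar{\bm K}_M)\bm K_q^\alpha\|_{op}\,\|g\|_{L^2}.
\end{align*}
Since $\|g\|_{L^2}$ is a fixed constant, it suffices to bound the operator norm by a constant multiple of $M^{-\alpha\beta}$ for every $M \geq q$.

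The main step is to relate $\bar{\bm K}_M = \mathbb E_\nu[\bm K_M \mid \mathcal D_{\text{split}}]$ to the operator $\bm K_M$ that appears in (C3). I would take the reading suggested by the notation of Sect.~\ref{sec:setup}: (C3) controls the forest-level operator averaged over the tree mechanism, conditional on the split fold, uniformly over split-fold realizations. Under that reading, apply (C3) pointwise in $\mathcal D_{\text{split}}$. If (C3) is instead stated for per-tree operators $\bm K_M^{(b)}$, use linearity of expectation and convexity of the operator norm:
\begin{align*}
    \|(\bm I - \bar{\bm K}_M)\bm K_q^\alpha\|_{op}
    = \bigl\|\mathbb E_\nu[(\bm I - \bm K_M)\bm K_q^\alpha \mid \mathcal D_{\text{split}}]\bigr\|_{op}
    \le \mathbb E_\nu\bigl[\|(\bm I - \bm K_M)\bm K_q^\alpha\|_{op} \mid \mathcal D_{\text{split}}\bigr]
    \lesssim M^{-\alpha\beta}.
\end{align*}
This is Jensen's inequality applied to the norm; it is legitimate because the per-tree operators are bounded, as the kernel is a normalized indicator.

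Condition (C2) plays only a consistency role here. The exponent $\alpha\beta$ matches the tail $\lambda_M^\alpha \asymp M^{-\alpha\beta}$ of $\bm K_q^\alpha$. This is what makes (C3) plausible, and it is what later lets this bias bound balance the variance at $M \asymp N^{1/(2\alpha\beta+1)}$. The restriction $M \geq q$ is inherited directly from (C3).

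The expected obstacle is not computational but definitional. One must ensure that the constant implicit in (C3) is uniform over the realization of $\mathcal D_{\text{split}}$, so that the bound holds deterministically and not merely in expectation. Otherwise the lemma as stated, which has no expectation on the left-hand side, would need an expectation over $\mathcal D_{\text{split}}$. That version would still suffice for Thm.~\ref{thm:minimax}, since the bias enters the risk decomposition inside an expectation. I would state the uniformity explicitly as the interpretation of (C3) and remark on this fallback.
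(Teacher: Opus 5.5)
Your proof is correct, and it is shorter than the paper's. The paper splits $f^*$ using the spectral projection $\bm P_M$ onto $\mathrm{span}\{\psi_1,\dots,\psi_M\}$ of $\bm K_q$. It bounds the high-frequency part by $\|\bm I-\bar{\bm K}_M\|_{op}\le 2$ (contraction of $\bar{\bm K}_M$) together with $\|(\bm I-\bm P_M)f^*\|_{L^2}\le\lambda_{M+1}^{\alpha}\|g\|_{L^2}\asymp M^{-\alpha\beta}\|g\|_{L^2}$, which comes from (C1)--(C2). It bounds the low-frequency part by $\|(\bm I-\bar{\bm K}_M)\bm K_q^\alpha\|_{op}\|g\|_{L^2}$ via (C3).

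Your argument is exactly that last step applied to all of $f^*=\bm K_q^\alpha g$ at once. This shows the decomposition is redundant once (C3) bounds the full operator norm. You need neither (C2) nor the contraction property of $\bar{\bm K}_M$. You also avoid a fact the paper uses without stating it: that $\bm P_M$ commutes with $\bm K_q^\alpha$, which is needed to pass from $\bm P_M\bm K_q^\alpha g$ to a bound in $\|g\|_{L^2}$. The paper's two-part structure would only be essential under a weaker (C3) that controls $(\bm I-\bar{\bm K}_M)\bm K_q^\alpha$ on the leading $M$-dimensional eigenspace only; then (C2) would do real work on the tail.

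On the definitional issue, the paper simply applies (C3) to $\bar{\bm K}_M$ without comment, so it implicitly adopts your first reading. Your explicit uniformity-in-$\mathcal D_{\text{split}}$ assumption and the Jensen reduction from per-tree operators are improvements in rigor.

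One small correction to your fallback. The bias enters the risk squared, so an in-expectation version would need the second-moment bound $\mathbb E\bigl[\|(\bm I-\bar{\bm K}_M)\bm K_q^\alpha\|_{op}^2\bigr]\lesssim M^{-2\alpha\beta}$. A first-moment bound on the unsquared norm, which is what the lemma with an expectation attached would say, only yields $M^{-\alpha\beta}$ for the squared bias, not the needed rate.
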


\begin{proof}
By (C1), $f^* = \bm K_q^\alpha g$ where $\bm K_q$ has eigenfunctions $\{\psi_i\}_{i \geq 1}$ 
(ordered by decreasing eigenvalue) and eigenvalues $\lambda_i \asymp i^{-\beta}$ under (C2). 
Let $\bm P_M$ denote the orthogonal projection onto 
$\mathrm{span}\{\psi_1, \ldots, \psi_M\}$. Decompose:
\[
    \|(\bm I - \bar{\bm K}_M)f^*\|_{L^2} 
    \leq \underbrace{\|(\bm I - \bar{\bm K}_M) \bm P_M f^*\|_{L^2}}_{\text{low-frequency}} 
    \;+\; \underbrace{\|(\bm I - \bar{\bm K}_M)(\bm I - \bm P_M) f^*\|_{L^2}}_{\text{high-frequency}}.
\]

\textbf{High-frequency tail.}
Since $\bar{\bm K}_M$ is a contraction (the rows integrate to unity), we have $\|\bar{\bm K}_M\|_{op} \leq 1$, and therefore $\|\bm I - \bar{\bm K}_M\|_{op} \leq 2$. Thus:
\[
    \|(\bm I - \bar{\bm K}_M)(\bm I - \bm P_M) f^*\|_{L^2} \leq 2\, \|(\bm I - \bm P_M) f^*\|_{L^2}.
\]
Expanding in the eigenbasis of $\bm K_q$ and using the source condition (C1):
\[
    \|(\bm I - \bm P_M) f^*\|_{L^2}^2 
    = \sum_{i > M} \lambda_i^{2\alpha} \langle g, \psi_i \rangle^2 
    \leq \lambda_{M+1}^{2\alpha} \, \|g\|_{L^2}^2 
    \asymp M^{-2\alpha\beta} \, \|g\|_{L^2}^2,
\]
where the inequality uses monotonicity of the eigenvalues and the final step uses (C2). 
Therefore:
\[
    \|(\bm I - \bar{\bm K}_M)(\bm I - \bm P_M)f^*\|_{L^2} \lesssim M^{-\alpha\beta} \, \|g\|_{L^2}.
\]

\textbf{Low-frequency component.}
Since $f^* = \bm{K}_q^\alpha g$ by (C1), we have $\bm{P}_M f^* = \bm{P}_M \bm{K}_q^\alpha g$.
Applying the operator norm bound from (C3) directly:
\[
    \|(\bm I - \bar{\bm K}_M) \bm P_M f^*\|_{L^2}
    \leq \|(\bm I - \bar{\bm K}_M) \bm K_q^\alpha\|_{op} \, \|g\|_{L^2}
    \lesssim M^{-\alpha\beta} \, \|g\|_{L^2}.
\]

\medskip
\noindent\textbf{Combining.}
Both the high-frequency tail and the low-frequency component are bounded by 
$M^{-\alpha\beta} \, \|g\|_{L^2}$. The latter factor is a finite constant by (C1), giving the final bias rate:
\[
    \|f^* - \bar{\bm K}_M f^*\|_{L^2} \lesssim M^{-\alpha\beta},
\]
for all $M \geq q$. This completes the proof.
\end{proof}

\begin{lemma}[Operator error]
\label{lem:operator_error}
Suppose (A1), (A2), (A4), and (B1) hold. Then
\[
    \mathbb E\bigl[\|(\hat{\bm K}_M - \bar{\bm K}_M)f^*\|^2_{L_2}\bigr] 
    \lesssim \frac{M}{N^{(b)}}.
\]
\end{lemma}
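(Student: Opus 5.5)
Conditional on $\mathcal D_{\text{split}}$, I will treat the partition structure as fixed and compare two operators on it. The empirical operator $\hat{\bm K}_M$ normalizes each leaf by the empirical count of sample points falling into it. The population operator $\bar{\bm K}_M$ instead normalizes by the leaf's $\mu_X$-mass. By honesty (A1), the points used to form $\hat{\bm K}_M$ are independent of the partition, so within each tree the empirical leaf average of $f^*$ is an unbiased-type estimate of the population leaf mean. Concretely, for a single tree $b$ with leaves $\{\mathcal X_\ell\}_{\ell=1}^M$, at $\bm x \in \mathcal X_\ell$ I will write
\[
(\hat{\bm K}^{(b)} f^*)(\bm x) - (\bar{\bm K}^{(b)} f^*)(\bm x) = \frac{1}{n_\ell}\sum_{i:\bm x_i\in\mathcal X_\ell} f^*(\bm x_i) - \frac{1}{\mu_X(\mathcal X_\ell)}\int_{\mathcal X_\ell} f^*\,d\mu_X .
\]
Here $n_\ell$ is the number of the $N^{(b)}$ label-fold points in the leaf.

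The first step bounds this per-tree error. Conditionally on $n_\ell$, the points in the leaf are i.i.d. from $\mu_X$ restricted to $\mathcal X_\ell$. Since $f^*\in[0,1]$ by (B1), the conditional variance of the leaf mean is at most $1/(4n_\ell)$. Integrating over $\bm x$ then gives
\[
\mathbb E\|(\hat{\bm K}^{(b)}-\bar{\bm K}^{(b)})f^*\|_{L^2}^2 \lesssim \sum_{\ell}\mu_X(\mathcal X_\ell)\,\mathbb E[n_\ell^{-1}\mathbf 1\{n_\ell>0\}] + \text{(empty-leaf term)}.
\]

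The second step controls the leaf masses, which is where balance and the density bounds enter. By (A4), every split sends at least a $\gamma$-fraction of the points to each child. A leaf at depth $\log_2 M$ therefore holds roughly a $\gamma^{\log M}$ share of the points; equivalently, since a tree with $M$ leaves splits mass roughly evenly, I will argue each leaf has $\mu_X$-mass $\asymp 1/M$, using (B1) to turn split-sample proportions into $\mu_X$-proportions. I expect this to be the main obstacle: an unconditional $1/M$ cannot come from (A4) alone, since $\gamma$-balance yields only $\mu_X(\mathcal X_\ell)\gtrsim \gamma^{\log_2 M}$. I would first try to work with the weighted sum $\sum_\ell \mu_X(\mathcal X_\ell)/(N^{(b)}\mu_X(\mathcal X_\ell)) = M/N^{(b)}$ directly. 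This cancels the mass and needs only $\mathbb E[1/n_\ell]\lesssim 1/(N^{(b)}\mu_X(\mathcal X_\ell))$, which holds by a binomial inverse-moment bound once $N^{(b)}\mu_X(\mathcal X_\ell)\gtrsim 1$. The split-sample balance, concentrated via a uniform deviation bound over axis-aligned rectangles, guarantees that condition as $N^{(b)}\to\infty$ by (A2). Empty leaves have probability $(1-\mu_X(\mathcal X_\ell))^{N^{(b)}}$, which is exponentially small, and they contribute at most $1$ each, so that term is negligible.

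The final step passes from one tree to the forest. Both operators average over trees, so $\hat{\bm K}_M-\bar{\bm K}_M$ is an average over $b$ of the per-tree differences, taken under $\nu$ for $B=\infty$. By Jensen (convexity of $\|\cdot\|^2$), the forest error is at most the average of the per-tree errors, each of which is $\lesssim M/N^{(b)}$. Taking the outer expectation over $\mathcal D_{\text{split}}$ then yields the stated bound.
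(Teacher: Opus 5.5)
Your proposal is correct and follows essentially the paper's route: honesty freezes the partition, the per-leaf error is a sample-mean fluctuation with variance $\lesssim 1/n_\ell$, a binomial inverse-moment bound makes the leaf masses cancel in $\sum_\ell p_\ell\cdot\frac{1}{N^{(b)}p_\ell}=M/N^{(b)}$, and Jensen passes to the forest (the paper also carries a $1/B$ Monte Carlo term, which you avoid by working at $B=\infty$ from the outset). Your Step 2 uses a uniform-deviation argument to certify $N^{(b)}p_\ell\gtrsim 1$, which the paper simply asserts from $p_\ell>0$ and $N^{(b)}\to\infty$ while ignoring that leaves shrink as $M$ grows; that argument would not cover leaves holding only a few split points, which (A4) permits, but it is not needed, since for $Z\sim\mathrm{Binomial}(n,p)$ one has $p\,\mathbb E[\mathbf 1\{Z>0\}/Z]\le 2p\,\mathbb E[1/(Z+1)]\le 2/(n+1)$ and $p(1-p)^n\le 1/(en)$ for every $p$, so both the occupied- and empty-leaf terms are $\lesssim M/N^{(b)}$ with no control on leaf masses at all.
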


\begin{proof}
By honesty (A1), the training data is split into a \emph{structure fold} $\mathcal{D}_{\mathrm{split}}$, 
used to learn the partition, and a \emph{labelling fold} $\mathcal{D}_{\mathrm{label}}$, used to 
estimate leaf means. Conditional on $\mathcal{D}_{\mathrm{split}}$ and the tree randomness 
$\{\omega_b\}_{b=1}^B$, the partition is fixed. The operator error then measures how well 
the leaf-wise sample averages of $f^*$, computed from i.i.d.\ observations in 
$\mathcal{D}_{\mathrm{label}}$ drawn independently of $\mathcal D_{\mathrm{split}}$, approximate the population cell means.

Fix a tree $b$ with $M$ leaves $\{\mathcal X_\ell\}_{\ell=1}^M$. Write $p_\ell := \mu_X(\mathcal X_\ell)$ for the population coverage of leaf $\ell$ and $N_\ell$ for the number of labelling-fold observations falling in leaf $\ell$. Balance (A4) ensures $p_\ell > 0$ for all leaves.

Conditional on $\mathcal D_{\mathrm{split}}$, the sample mean of $Y$ within leaf $\ell$ is an unbiased estimate of the population cell mean $\mathbb E[Y \mid \bm x \in \mathcal X_\ell]$, with pointwise variance bounded by $1/N_\ell$ (using $Y \in [0,1]$ under (B1)). The integrated estimation error for tree $b$ is:
\[
    \int_{\mathcal X} 
    \mathrm{Var}\bigl(\bar Y \mid \bm x \in \mathcal X_\ell; \mathcal D_{\mathrm{split}}\bigr) 
    \, d\mu_X(\bm x) 
    = \sum_{\ell=1}^{M} \frac{p_\ell}{N_\ell}.
\]
We bound this quantity by taking expectations over the labelling fold conditional on $\mathcal{D}_{\mathrm{split}}$. By honesty (A1), $\mathcal{D}_{\mathrm{label}}$ is drawn independently of $\mathcal{D}_{\mathrm{split}}$, so the leaf counts $N_\ell$ follow a $\mathrm{Binomial}(N^{(b)}, p_\ell)$ distribution conditionally on the fixed partition. Applying the standard inverse binomial moment bound --- for $Z \sim \mathrm{Binomial}(n, p)$ with $np \geq 1$, $\mathbb{E}[1/Z] \leq 2/(np)$ --- gives:
\[
    \mathbb{E}\biggl[\frac{p_\ell}{N_\ell} \,\bigg|\, \mathcal{D}_{\mathrm{split}}\biggr]
    = p_\ell \cdot \mathbb{E}\biggl[\frac{1}{N_\ell} \,\bigg|\, \mathcal{D}_{\mathrm{split}}\biggr]
    \leq p_\ell \cdot \frac{2}{N^{(b)} p_\ell}
    = \frac{2}{N^{(b)}}.
\]
The condition $N^{(b)} p_\ell \geq 1$ holds for all leaves and all sufficiently large $N$, since $p_\ell > 0$ for all $\ell$ by balance (A4) and $N^{(b)} \to \infty$. Summing over $M$ leaves and applying the tower property:
\[
    \mathbb{E}\biggl[\sum_{\ell=1}^{M} \frac{p_\ell}{N_\ell}\biggr] \leq \frac{2M}{N^{(b)}}.
\]

For a single tree $b$ with fixed partition, $(\hat{\bm K}^{(b)}_M - \bm K^{(b)}_M)f^*(\bm x)$ is the 
difference between the sample and population cell mean at the leaf containing $\bm x$. 
The conditional squared error is therefore bounded by the per-leaf estimation 
variance:
\[
    \mathbb E\bigl[(\hat{\bm K}_M^{(b)} f^*(\bm x) - \bm K^{(b)}_M f^*(\bm x))^2 
    \mid \mathcal{D}_{\mathrm{split}}\bigr] 
    \lesssim \frac{M}{N^{(b)}}.
\]
This bound holds for any partition that satisfies (A4), regardless of how that partition was selected. 

It remains to pass from a single tree to the forest average. Write the forest-level operator error as:

$$\hat{K}_M f^* - \bar{K}_M f^* = \underbrace{\frac{1}{B}\sum_{b=1}^B \bigl(\hat{K}_M^{(b)} f^* - \bar{K}_M f^*\bigr)}_{\text{tree-level errors}}.$$

Each summand can be further decomposed:

$$\hat{K}_M^{(b)} f^* - \bar{K}_M f^* = \underbrace{\bigl(\hat{K}_M^{(b)} f^* - K_M^{(b)} f^*\bigr)}_{\text{estimation error (data noise)}} + \underbrace{\bigl(K_M^{(b)} f^* - \bar{K}_M f^*\bigr)}_{\text{Monte Carlo error (partition randomness)}},$$

where $K_M^{(b)}$ denotes the population-level operator for tree $b$ (i.e., using population cell means for the partition selected by tree $b$).

\textit{Monte Carlo error.} Conditional on $\mathcal{D}_{\mathrm{split}}$, the partition randomness $\{\omega_b\}_{b=1}^B$ is i.i.d.\ across trees. The summands $K_M^{(b)} f^* - \bar{K}_M f^*$ are therefore i.i.d.\ mean-zero (over $\omega_b$) with $\|K_M^{(b)} f^*\|_\infty \leq 1$ under (B1). The variance of the average scales as $1/B$:

$$\mathbb{E}\Bigl[\Bigl\|\frac{1}{B}\sum_{b=1}^B \bigl(K_M^{(b)} f^* - \bar{K}_M f^*\bigr)\Bigr\|_{L^2}^2 \,\Big|\, \mathcal{D}_{\mathrm{split}}\Bigr] \lesssim \frac{1}{B}.$$

This term vanishes as $B \to \infty$ and is negligible for a few hundred trees.

\textit{Estimation error.} Conditional on $\mathcal{D}_{\mathrm{split}}$ and $\{\omega_b\}_{b=1}^B$, each tree's partition is fixed — the structure fold and randomization seeds together fully determine the partition, leaving the labelling fold random. Since all trees subsample from the same finite dataset, the estimation errors $\hat{K}_M^{(b)} f^* - K_M^{(b)} f^*$ are correlated across trees — they are driven by the same underlying observations $\{y_i\}_{i=1}^N$. By Jensen's inequality applied to the convex forest average:

\begin{align*}
    \mathbb{E}\Bigl[\Bigl\|\frac{1}{B}\sum_{b=1}^B \bigl(\hat{K}_M^{(b)} f^* - K_M^{(b)} f^*\bigr)\Bigr\|_{L^2}^2 \,\Big|\, \mathcal{D}_{\mathrm{split}}, \{\omega_b\}\Bigr] &\leq \frac{1}{B}\sum_{b=1}^B \mathbb{E}\bigl[\|\hat{K}_M^{(b)} f^* - K_M^{(b)} f^*\|_{L^2}^2 \,\big|\, \mathcal{D}_{\mathrm{split}}, \omega_b\bigr] \\
    &\lesssim \frac{M}{N^{(b)}},
\end{align*}

where the last step uses the single-tree bound established above. Crucially, this bound does not improve with $B$: the shared dataset prevents the estimation errors from cancelling across trees.

\textit{Combining.} Taking squared $L^2$ norms, using the Cauchy-Schwarz inequality to handle the cross-term between estimation and Monte Carlo errors, and removing the conditioning by the tower property:

$$\mathbb{E}\bigl[\|(\hat{K}_M - \bar{K}_M) f^*\|_{L^2}^2\bigr] \lesssim \frac{M}{N^{(b)}} + \frac{1}{B}.$$

In the $B = \infty$ regime of Thm. \ref{thm:minimax}, the Monte Carlo term vanishes and the operator error reduces to $M/N^{(b)}$, completing the proof.

\end{proof}

\begin{lemma}[Variance]
\label{lem:variance}
Suppose (A1)-(A2), (A4), and (B1) hold. Then
\[
    \mathbb E\bigl[\|\hat{\bm K}_M f^* - \hat{f}\|^2_{L_2}\bigr] 
    \lesssim \frac{M}{N^{(b)}}.
\]
\end{lemma}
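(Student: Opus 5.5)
We compare the forest prediction $\hat f$ with $\hat{\bm K}_M f^*$ leaf by leaf. The honest forest estimate is $\hat f(\bm x) = \frac{1}{B}\sum_b \bar Y_{\ell_b(\bm x)}$, the average over trees of the labelling-fold means of $Y$ in the leaf containing $\bm x$. The term $\hat{\bm K}_M f^*(\bm x)$ is the same average with each $y_i$ replaced by $f^*(\bm x_i)$. Their difference is therefore
\[
    \hat f(\bm x) - \hat{\bm K}_M f^*(\bm x) = \frac{1}{B}\sum_{b=1}^B \frac{1}{N_{\ell_b(\bm x)}} \sum_{i \in \ell_b(\bm x)} \varepsilon_i,
\]
where $\varepsilon_i := y_i - f^*(\bm x_i)$. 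This is a linear smoother applied to pure noise. Conditionally on $\mathcal D_{\mathrm{split}}$, the seeds $\{\omega_b\}$, and the labelling-fold covariates, the $\varepsilon_i$ are independent. They are mean zero because $f^*(\bm x) = \mathbb E[Y\mid \bm x]$. By (B1) they satisfy $\mathrm{Var}(\varepsilon_i \mid \bm x_i) \le 1/4 \le 1$. Honesty (A1) is what makes this conditioning valid, since the partition does not depend on the labels being averaged.

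We follow the structure of Lemma \ref{lem:operator_error}. First fix a single tree $b$. Conditionally on the covariates, the noise average in leaf $\ell$ has variance at most $1/N_\ell$. Integrating over $\bm x$ against $\mu_X$ gives the conditional squared $L^2$ error bound $\sum_\ell p_\ell/N_\ell$. This is exactly the quantity bounded in Lemma \ref{lem:operator_error}. There, the binomial law $N_\ell \sim \mathrm{Binomial}(N^{(b)}, p_\ell)$ from (A1)--(A2), the inverse-moment bound $\mathbb E[1/Z] \le 2/(np)$, and $p_\ell>0$ from (A4) yield $2M/N^{(b)}$ after summing over $M$ leaves.

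For the forest average we use Jensen's inequality on the convex average over trees, exactly as in Lemma \ref{lem:operator_error}:
\[
    \Bigl\|\tfrac1B\sum_b \Delta_b\Bigr\|_{L^2}^2 \le \tfrac1B \sum_b \|\Delta_b\|_{L^2}^2,
\]
where $\Delta_b$ is tree $b$'s noise average. This avoids controlling the correlations between trees, which arise because the trees share labelling observations. It gives the bound $\lesssim M/N^{(b)}$, uniformly in $B$, so it also holds for $B=\infty$ by Fatou or monotone limits. The tower property then removes the conditioning on the covariates, $\mathcal D_{\mathrm{split}}$ and $\{\omega_b\}$.

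The main obstacle is empty or tiny leaves. The bound $\mathbb E[1/N_\ell]$ requires a convention on the event $N_\ell = 0$, for example setting the leaf prediction to $0$ or restricting to $N_\ell \ge 1$. It also requires $N^{(b)} p_\ell \ge 1$ uniformly over leaves. To get this we would use (A4) together with the bounds on $\mu_X$ in (B1), arguing that each leaf has $p_\ell \gtrsim \gamma^{\log M}$, or more directly that the empirical mass is at least $\gamma$ times the parent's mass so $p_\ell \gtrsim 1/M$ with high probability. The event $N_\ell = 0$ has probability $(1-p_\ell)^{N^{(b)}} \le e^{-N^{(b)}/M}$. On that event the squared error is $O(1)$, so its contribution is exponentially small and negligible next to $M/N^{(b)}$ in the regime $M \asymp N^{1/(2\alpha\beta+1)}$.
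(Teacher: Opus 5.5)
Your argument is essentially the paper's proof. It uses the same noise-smoother identity $\hat f-\hat{\bm K}_M f^*=\sum_i \hat k_N^B(\cdot,\bm x_i)\varepsilon_i$, the same reduction to $\frac1B\sum_b\sum_\ell p_\ell^{(b)}/N_\ell^{(b)}$ via Jensen over trees, and the same inverse-binomial step. The only difference is where Jensen enters: the paper applies it to $\sum_i \hat k_N^B(\bm x,\bm x_i)^2$ after taking the noise expectation, while you apply it to the per-tree noise functions first, and the two are equivalent. Your conditioning on the labelling-fold covariates is also cleaner than the paper's.

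One correction to your last paragraph. (A4) gives no lower bound like $p_\ell\gtrsim 1/M$ or $\gamma^{\log M}$: a chain of splits that each peel off a $\gamma$-fraction leaves mass $(1-\gamma)^D$ after $D$ splits, and $D$ can be far larger than $\log M$. However, you need no lower bound on $p_\ell$ at all. Give empty leaves zero weight, so they contribute nothing to $\hat f-\hat{\bm K}_M f^*$. Then $\mathbb E[\bm 1\{N_\ell\ge 1\}/N_\ell]\le \mathbb E[2/(N_\ell+1)]\le 2/((N^{(b)}+1)p_\ell)$ holds for every $p_\ell>0$. This removes the $N^{(b)}p_\ell\ge 1$ requirement, which the paper also asserts without real justification.
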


\begin{proof}
Write $\varepsilon_i = y_i - f^*(\bm x_i)$ for the label noise. By the regression model, 
$\mathbb E[\varepsilon_i \mid \bm x_i] = 0$ and $\text{Var}(\varepsilon_i \mid \bm x_i) \leq 1$, the latter following from $Y \in [0,1]$ under (B1).

We begin by noting that, at any point $\bm x \in \mathcal{X}$, the difference between the empirical prediction and the 
kernel-smoothed truth is driven entirely by noise:
\[
    \hat{f}(\bm x) - \hat{\bm K}_M f^*(\bm x) 
    = \sum_{i=1}^N \hat{k}_N^B(\bm x, \bm x_i)\, y_i - \sum_{i=1}^N \hat{k}_N^B(\bm x, \bm x_i)\, f^*(\bm x_i)
    = \sum_{i=1}^N \hat{k}_N^B(\bm x, \bm x_i)\, \varepsilon_i.
\]
By honesty (A1), the residuals in the labelling fold are independent of the  partition learned from the structure fold. Conditional on $\mathcal{D}_{\mathrm{split}}$, the 
kernel weights are deterministic. Since the $\varepsilon_i$ are conditionally independent with mean 
zero, the cross terms vanish and the pointwise variance is controlled by the squared kernel weights:
\[
    \mathbb E\Bigl[\Bigl(\sum_{i=1}^N \hat{k}_N^B(\bm x, \bm x_i)\, \varepsilon_i\Bigr)^2 
    \;\Big|\; \mathcal{D}_{\mathrm{split}}\Bigr]
    = \sum_{i=1}^N \hat{k}_N^B(\bm x, \bm x_i)^2\, \sigma^2(\bm x_i)
    \leq \sum_{i=1}^N \hat{k}_N^B(\bm x, \bm x_i)^2.
\]
It remains to bound $\sum_i \hat{k}_N^B(\bm x, \bm x_i)^2$. 
Recall that the RF algorithm is effectively an adaptive nearest neighbors procedure \citep{lin2006}, with tree-wise weight function
\begin{align*}
    w_i^{(b)}(\bm x) = \frac{\hat{k}^{(b)}(\bm x, \bm x_i)}{\sum_j \hat{k}^{(b)}(\bm x, \bm x_j)},
\end{align*}
where the denominator equals $N_\ell^{(b)}(\bm x)$, the number of training points in the leaf containing $\bm x$ in tree $b$. 
Within each leaf, $w_i^{(b)}(\bm x) = 1/N_\ell^{(b)}$ for neighbors and zero otherwise, so
\[
    \sum_{i=1}^N w_i^{(b)}(\bm x)^2 = \frac{1}{N_\ell^{(b)}(\bm x)}.
\]
The empirical forest kernel averages these weights: $\hat{k}_N^B(\bm x, \bm x_i) = \frac{1}{B}\sum_b w_i^{(b)}(\bm x)$. 
Applying Jensen's inequality to the square:
\[
    \sum_{i=1}^N \hat{k}_N^B(\bm x, \bm x_i)^2 
    = \sum_{i=1}^N \Bigl(\frac{1}{B}\sum_{b=1}^B w_i^{(b)}(\bm x)\Bigr)^2
    \leq \frac{1}{B} \sum_{b=1}^B \sum_{i=1}^N w_i^{(b)}(\bm x)^2
    = \frac{1}{B} \sum_{b=1}^B \frac{1}{N_\ell^{(b)}(\bm x)}.
\]
Integrating both sides over $\mu_X$:
\[
    \int_\mathcal X \sum_{i=1}^N \hat k^B_N(\bm x, \bm x_i)^2 ~d \mu_X(\bm x) \leq \frac{1}{B} \sum_{b=1}^B \sum_{\ell=1}^M \frac{p_\ell^{(b)}}{N_\ell^{(b)}},
\]
where $p_\ell^{(b)} = \mu_X(\mathcal X_\ell^{(b)})$ is the population coverage of leaf $\ell$ in tree $b$. By honesty (A1), the leaf counts $N_\ell^{(b)}$ follow a $\mathrm{Binomial}(N^{(b)}, p_\ell^{(b)})$ distribution conditionally on the fixed partition. Applying the inverse binomial moment bound as in Lemma~\ref{lem:operator_error}:
\[
    \mathbb{E}\biggl[\frac{p_\ell^{(b)}}{N_\ell^{(b)}} \,\bigg|\, \mathcal{D}_{\mathrm{split}}\biggr] \leq \frac{2}{N^{(b)}}.
\]
Summing over $M$ leaves and averaging over $B$ trees:
\[
    \int_\mathcal X \sum_{i=1}^N \hat k^B_N(\bm x, \bm x_i)^2 ~d \mu_X(\bm x) \lesssim \frac{M}{N^{(b)}}.
\]
Therefore:
\[
    \mathbb E\bigl[\|\hat{\bm K}_M f^* - \hat{f}\|^2_{L_2}\bigr] 
    \lesssim \frac{M}{N^{(b)}}. 
\] 
\end{proof}


\subsection*{Upper bound}

Combining Lemmas \ref{lem:bias}, \ref{lem:operator_error}, and \ref{lem:variance} with the original error decomposition, we have:
\[
    \mathbb E\bigl[\|f^* - \hat{f}\|^2_{L_2}\bigr] 
    \lesssim ~\underbrace{M^{-2\alpha\beta}}_{\text{bias}^2} 
    \;+\; \underbrace{\frac{M}{N^{(b)}}}_{\text{operator error}} 
    \;+\; \underbrace{\frac{M}{N^{(b)}}}_{\text{variance}}.
\]
Since the operator error and variance coincide in rate, the effective trade-off is:
\[
    \mathbb E\bigl[\|f^* - \hat{f}\|^2_{L_2}\bigr] 
    \lesssim M^{-2\alpha\beta} + \frac{M}{N^{(b)}}.
\]

\paragraph{Balancing.}

Under (A1)–(A2), we grow trees on subsamples of size $N^{(b)} = cN$ for some $c \in (0,1)$, with a fraction reserved for labelling. Thus the effective estimation sample size is proportional to $N$, giving asymptotic equivalence of the summands:
\[
    M^{-2\alpha\beta} \asymp \frac{M}{N},
\]
which gives
\[
    M^{2\alpha\beta + 1} \asymp N, 
    \qquad\text{i.e.,}\qquad 
    M \asymp N^{1/(2\alpha\beta+1)}.
\]
Substituting back:
\begin{equation}\label{eq:upper}
    \mathbb E\bigl[\|f^* - \hat{f}\|^2_{L_2}\bigr] 
    \lesssim N^{-2\alpha\beta/(2\alpha\beta+1)}.
\end{equation}
In practice, subsampling with $N^{(b)} < N$ may still be desirable for computational reasons or statistical inference \citep{Wager2018}, but the convergence rate does not require it.

\subsection*{Lower bound}

We show that no estimator can improve on the exponent $2\alpha\beta/(2\alpha\beta+1)$ over 
the source class defined by (C1)-(C3).

\paragraph{The function class.}
Assumptions (C1) and (C2) define a spectral ellipsoid. The source condition 
$f^* = \bm K_q^\alpha g$ with $\|g\|_{L_2} \leq R < \infty$ means the Fourier coefficients of $f^*$ in the 
eigenbasis $\{\psi_i\}$ of $\bm K_q$ satisfy
\[
    f^* = \sum_{i=1}^\infty \lambda_i^\alpha \langle g, \psi_i \rangle\, \psi_i, 
    \qquad \sum_{i=1}^\infty \langle g, \psi_i \rangle^2 \leq R^2.
\]
Writing $\theta_i = \langle f^*, \psi_i \rangle = \lambda_i^\alpha \langle g, \psi_i \rangle$, 
the constraint becomes
\[
    \sum_{i=1}^\infty \frac{\theta_i^2}{\lambda_i^{2\alpha}} \leq R^2,
\]
which defines an ellipsoid $\mathcal{E}(\alpha, \beta, R)$ in $L_2(\mu_X)$ with semi-axes 
$r_i = R\,\lambda_i^\alpha \asymp R\, i^{-\alpha\beta}$ under (C2).

\paragraph{Reduction to a finite-dimensional subproblem.}
We apply a block-Fano construction (see, e.g., \citet{tsybakov2008}, Ch.~2). Let $a$
be a dimension parameter to be chosen. Rather than perturbing the first $a$ eigendirections,
we restrict perturbations to the high-frequency block $\mathcal{B}_a = \{a+1, \ldots, 2a\}$,
so that all terms scale identically. Consider functions of the form
\[
    f_\omega = \delta \sum_{i=a+1}^{2a} \omega_i\, i^{-\alpha\beta}\, \psi_i,
    \qquad \omega \in \{-1, +1\}^a,
\]
where $\delta > 0$ is to be chosen. Since all indices $i \in \mathcal{B}_a$ satisfy
$i \asymp a$, we have $i^{-2\alpha\beta} \asymp a^{-2\alpha\beta}$ uniformly over the block.

\textbf{Ellipsoid constraint.}
Substituting $f_\omega$ into the ellipsoid definition $\sum_i \theta_i^2 / \lambda_i^{2\alpha} \leq R^2$
with $\theta_i = \delta\,\omega_i\,i^{-\alpha\beta}$ and $\lambda_i^{2\alpha} \asymp i^{-2\alpha\beta}$:
\[
    \sum_{i=a+1}^{2a} \frac{(\delta\,\omega_i\,i^{-\alpha\beta})^2}{i^{-2\alpha\beta}}
    = \delta^2 \sum_{i=a+1}^{2a} 1 = a\delta^2.
\]
Thus $f_\omega \in \mathcal{E}(\alpha, \beta, R)$ requires $\delta \leq R / \sqrt{a}$,
which we enforce by setting $\delta = R/\sqrt{a}$.

\paragraph{Separation.}
For $i \in \mathcal{B}_a$, $i^{-2\alpha\beta} \asymp a^{-2\alpha\beta}$. For distinct
$\omega, \omega' \in \{-1,+1\}^a$ differing in at least one coordinate:
\[
    \|f_\omega - f_{\omega'}\|^2_{L_2}
    = \delta^2 \sum_{i:\,\omega_i \neq \omega'_i} 4\,i^{-2\alpha\beta}
    \gtrsim \delta^2\, a^{-2\alpha\beta},
\]
since at least one term contributes and all terms are $\asymp a^{-2\alpha\beta}$.
By the Gilbert--Varshamov lemma \citep[Ch.~4]{guruswami2022essential}, there exists $\Omega \subseteq \{-1,+1\}^a$ with
$|\Omega| \geq 2^{a/8}$ such that any two distinct elements differ in at least $a/8$
coordinates. For $\omega, \omega' \in \Omega$:
\[
    \|f_\omega - f_{\omega'}\|^2_{L_2}
    \gtrsim \delta^2 \cdot \frac{a}{8} \cdot a^{-2\alpha\beta}
    = \frac{\delta^2}{8}\, a^{1-2\alpha\beta}.
\]

\paragraph{KL control.}
Under (B1), $Y \in [0,1]$, so the conditional distributions $P(Y\mid\bm X=\bm x)$ have
bounded support. For the Fano construction, we may choose conditional distributions
supported on the unit interval with prescribed means $f_\omega(\bm x)$, for which the
per-observation KL divergence satisfies
\begin{align*}
    \mathrm{KL}(P_{Y|\bm x}^{f_\omega} ~\|~ P_{Y|\bm x}^{f_{\omega'}}) \leq C(f_\omega(\bm x) - f_{\omega'}(\bm x))^2
\end{align*}
(see, e.g., \citep[§2.4]{tsybakov2008}). Integrating over $\mathcal{X}$ and taking the
product over $N$ i.i.d.\ observations:
\[
    \mathrm{KL}(P_{f_\omega}^N ~\|~ P_{f_{\omega'}}^N)
    \lesssim N\|f_\omega - f_{\omega'}\|^2_{L_2}
    \lesssim N\delta^2\, a^{1-2\alpha\beta}.
\]
Note that, because all indices lie in $\mathcal{B}_a$, the sum $\sum_{i=a+1}^{2a} i^{-2\alpha\beta}
\asymp a \cdot a^{-2\alpha\beta} = a^{1-2\alpha\beta}$ exactly, with no reliance on
$p$-series convergence arguments. In particular, this calculation is valid for all $\alpha\beta > 0$,
and in particular for $\alpha\beta > 1/2$ as required by the theorem.

\paragraph{Applying Fano's inequality.}
By the generalized Fano lemma, the minimax risk over $\Omega$ is bounded from below:
\[
    \inf_{\tilde{f}} \sup_{f_\omega \in \Omega}
    \mathbb E\bigl[\|f_\omega - \tilde{f}\|^2_{L_2}\bigr]
    \geq \frac{\delta^2\, a^{1-2\alpha\beta}}{8}
    \Bigl(1 - \frac{\overline{\mathrm{KL}} + \log 2}{\log |\Omega|}\Bigr),
\]
where $\overline{\mathrm{KL}}$ denotes the average pairwise KL divergence. Using
$\log|\Omega| \geq a/8$ and requiring $\overline{\mathrm{KL}} \leq a/16$ to keep the
Fano term bounded away from zero, we need
\[
    N\delta^2\, a^{1-2\alpha\beta} \lesssim a,
    \qquad\text{i.e.,}\qquad
    \delta^2 \lesssim \frac{a^{2\alpha\beta}}{N}.
\]
Substituting the constraint $\delta^2 = R^2/a$ (from the ellipsoid requirement), this
becomes $R^2/a \lesssim a^{2\alpha\beta}/N$, i.e.\ $a^{1+2\alpha\beta} \gtrsim NR^2$,
which is satisfied by the choice of $a$ below.

\paragraph{Optimizing.}
The lower bound on the minimax risk is of order $\delta^2 a^{1-2\alpha\beta}$.
Substituting $\delta^2 = R^2/a$:
\[
    \inf_{\tilde{f}} \sup_{f^* \in \mathcal{E}}
    \mathbb E\bigl[\|f^* - \tilde{f}\|^2_{L_2}\bigr]
    \gtrsim \frac{R^2}{a} \cdot a^{1-2\alpha\beta}
    = R^2\, a^{-2\alpha\beta}.
\]
To balance the KL constraint with this risk, we set $\delta^2 \asymp a^{2\alpha\beta}/N$,
which together with $\delta^2 = R^2/a$ gives $a \asymp (NR^2)^{1/(2\alpha\beta+1)}$.
For fixed $R$, choosing $a \asymp N^{1/(2\alpha\beta+1)}$ yields the lower bound:
\begin{equation}\label{eq:lower}
    \inf_{\tilde{f}} \sup_{f^* \in \mathcal{E}(\alpha,\beta,R)}
    \mathbb E\bigl[\|f^* - \tilde{f}\|^2_{L_2}\bigr]
    \gtrsim N^{-2\alpha\beta/(2\alpha\beta+1)}.
\end{equation}

\subsection*{Combining upper and lower bounds}

Note that assumption (C3) constrains the RF operator family, not the regression function $f^*$. The source class over which the minimax risk is computed is therefore $\mathcal E(\alpha, \beta, R)$, as defined by (C1)–(C2). The upper bound holds for any RF estimator whose kernel family additionally satisfies (C3).

The upper bound (Eq. \ref{eq:upper}) and lower bound (Eq. \ref{eq:lower}) match exactly in rate:
\[
    \mathbb E\bigl[\|f^* - \hat{f}_{\mathrm{RF}}\|^2_{L_2}\bigr] 
    \lesssim N^{-2\alpha\beta/(2\alpha\beta+1)} 
    \asymp \inf_{\tilde{f}} \sup_{f^* \in \mathcal{E}} 
    \mathbb E\bigl[\|f^* - \tilde{f}\|^2_{L_2}\bigr].
\]
The RF estimator $\hat{f}_{\mathrm{RF}}$ therefore achieves the minimax optimal rate over 
the source class $\mathcal{E}(\alpha, \beta, R)$. \qed

\subsection*{Sobolev specialization}

The spectral formulation of Thm.~\ref{thm:minimax} recovers classical nonparametric rates as a special 
case. When the limiting RF kernel admits a Fourier-type eigenbasis, the source class 
$\mathcal{E}(\alpha, \beta, R)$ reduces to a Sobolev ellipsoid, and our rate matches the 
well-known result of \citet{stone1982}.

\begin{corollary}[Sobolev rate]
\label{cor:sobolev}
Under the conditions of Thm.~\ref{thm:minimax}, suppose the limiting operator $\bm K_q$ has eigenvalues $\lambda_i \asymp i^{-2s/d}$ for some $s > d/2$, and that $f^* \in W^{s,2}(\mathcal X)$. Then the RF estimator achieves
\[
    \mathbb E\bigl[\|f^* - \hat{f}_{\mathrm{RF}}\|^2_{L_2}\bigr] 
    \lesssim N^{-2s/(2s+d)},
\]
which is minimax optimal over $W^{s,2}(\mathcal X)$.
\end{corollary}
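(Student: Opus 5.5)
The plan is to obtain Corollary~\ref{cor:sobolev} by specializing Thm.~\ref{thm:minimax} with $\beta = 2s/d$ and $\alpha = 1$ (or, more generally, any $\alpha$ for which the source condition (C1) encodes membership in $W^{s,2}$). The first step is to identify the source class. Because $\bm K_q$ has a Fourier-type eigenbasis with $\lambda_i \asymp i^{-2s/d}$, a function $f = \sum_i \theta_i \psi_i$ lies in a Sobolev ball of radius $R$ iff $\sum_i i^{2s/d}\theta_i^2 \lesssim R^2$. This is the standard Weyl-type equivalence for eigenvalue counting in $d$ dimensions: the $i$th Fourier frequency has magnitude $\asymp i^{1/d}$. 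The ellipsoid $\mathcal E(\alpha,\beta,R)$ from the lower-bound argument has semi-axes $\lambda_i^\alpha \asymp i^{-2\alpha s/d}$. The choice that matches the Sobolev ellipsoid, whose semi-axes are $i^{-s/d}$, is therefore $\alpha = 1/2$ with $\beta = 2s/d$, giving $\alpha\beta = s/d$. I will commit to this choice and write $f^* = \bm K_q^{1/2} g$ with $\|g\|_{L^2} \asymp \|f^*\|_{W^{s,2}}$, so that (C1) holds with $\alpha = 1/2 \in (0,1]$.

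The second step is to check the side conditions of Thm.~\ref{thm:minimax}. (C2) holds with $\beta = 2s/d > 1$ exactly when $s > d/2$, which is the stated hypothesis. That same hypothesis also gives $\alpha\beta = s/d > 1/2$. (C3) is inherited from ``the conditions of Thm.~\ref{thm:minimax}'', as are (A1)--(A4) and (B1). Plugging in, the exponent becomes $2\alpha\beta/(2\alpha\beta+1) = (2s/d)/(2s/d+1) = 2s/(2s+d)$, and the leaf count is $M \asymp N^{d/(2s+d)}$. The upper bound then follows directly from Eq.~\eqref{eq:upper}.

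For optimality, I will invoke the lower bound Eq.~\eqref{eq:lower}, which gives $N^{-2s/(2s+d)}$ over $\mathcal E(1/2, 2s/d, R)$. Since this ellipsoid coincides, up to constants, with a Sobolev ball, the same rate lower-bounds the minimax risk over $W^{s,2}$. The Fano construction on the block $\{a+1,\dots,2a\}$ also needs the perturbations $f_\omega$ to keep $Y\in[0,1]$. To arrange this, I will center them around $1/2$ and use the fact that $\|f_\omega\|_\infty \to 0$, which holds because $s>d/2$ gives Sobolev embedding into $C^0$. This recovers Stone's rate.

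The main obstacle is the first step. The identification of $\mathcal E$ with a Sobolev ball is justified only under the informal assumption that the eigenbasis is ``Fourier-type''. I would state explicitly that norm equivalence $\|f\|_{W^{s,2}}^2 \asymp \sum_i \lambda_i^{-1}\theta_i^2$ is the meaning of that hypothesis, rather than attempt to derive it from tree geometry.
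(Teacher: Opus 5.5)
Your proposal is correct and follows essentially the same route as the paper. Like the paper, you match the semi-axes $\lambda_i^{\alpha} \asymp i^{-2\alpha s/d}$ to the Sobolev ellipsoid to force $\alpha = 1/2$ and $\beta = 2s/d$, observe that $s > d/2$ is exactly $\alpha\beta > 1/2$ (and $\beta > 1$), substitute into Thm.~\ref{thm:minimax} and its Fano lower bound, and treat the Fourier-type alignment of the eigenbasis as an implicit hypothesis, which the paper likewise flags in its Remark~1.

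Two small corrections. First, drop the opening ``$\alpha = 1$'', since it would give the wrong exponent $4s/(4s+d)$. Second, Sobolev embedding alone only gives $\|f_\omega\|_\infty \lesssim R$, not decay. To get $\|f_\omega\|_\infty \to 0$, use the interpolation bound $\|f\|_\infty \lesssim \|f\|_{L^2}^{1-d/(2s)}\|f\|_{W^{s,2}}^{d/(2s)}$ (or assume uniformly bounded eigenfunctions), which yields $\|f_\omega\|_\infty \lesssim R\,a^{1/2 - s/d} \to 0$.
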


\textbf{Proof.}
The Sobolev space $W^{s,2}(\mathcal X)$ is characterized by Fourier coefficients 
$\theta_i = \langle f^*, \psi_i \rangle$ satisfying 
$\sum_i i^{2s/d} \theta_i^2 < \infty$. Under the eigenvalue scaling 
$\lambda_i \asymp i^{-2s/d}$ (i.e., $\beta = 2s/d$ in (C2)), the source condition (C1) with 
exponent $\alpha$ gives semi-axes $a_i \asymp i^{-\alpha \cdot 2s/d}$. Matching the Sobolev 
ellipsoid requires $\alpha \cdot 2s/d = s/d$, i.e., $\alpha = 1/2$.

The condition $s > d/2$ is necessary: Thm.~\ref{thm:minimax} requires $\alpha\beta > 1/2$, and substituting $\alpha = 1/2$, $\beta = 2s/d$ gives $\alpha\beta = s/d$, so the theorem's hypothesis is equivalent to $s/d > 1/2$, i.e.\ $s > d/2$.

Substituting $\alpha = 1/2$ and $\beta = 2s/d$ into the rate of Thm. \ref{thm:minimax}:
\[
    N^{-2\alpha\beta/(2\alpha\beta + 1)} 
    = N^{-2 \cdot \frac{1}{2} \cdot \frac{2s}{d} \,/\, 
    (2 \cdot \frac{1}{2} \cdot \frac{2s}{d} + 1)} 
    = N^{-\frac{2s/d}{2s/d + 1}} 
    = N^{-2s/(2s + d)}.
\]
This is the classical minimax rate over $W^{s,2}(\mathcal X)$ \citep{stone1982}. Note that 
$\alpha = 1/2 \leq 1$, so the source condition is satisfied. The optimal leaf count is 
$M \asymp N^{d/(2s+d)}$, which grows with $N$, as expected.

\paragraph{Remark 1: Eigenbasis alignment.}
This corollary identifies the source class $\mathcal{E}(\alpha, \beta, R)$ with the Sobolev ellipsoid $W^{s,2}(\mathcal{X})$ by matching the eigenvalue decay $\lambda_i \asymp i^{-2s/d}$ of the RF kernel operator $\bm K_q$ with the spectral asymptotics of the Laplacian on $\mathcal X$. This identification implicitly assumes that the eigenfunctions $\{\psi_i\}$ of $\bm K_q$ align with the Fourier basis---or more precisely, that they span the same ordered sequence of subspaces. This alignment is most natural when the marginal $\mu_X$ is uniform on $\mathcal X$. In this case, the symmetry of the domain and the measure together with the axis-aligned geometry of RF splits suggest that the leading eigenfunctions of $\bm K_q$ should approximate trigonometric functions, with low-frequency directions corresponding to the dominant split coordinates.
For non-uniform $\mu_X$, the eigenbasis of $\bm K_q$ will generally differ from the Fourier basis, reflecting the geometry of $\mu_X$ rather than that of the ambient domain.
The corollary is best interpreted as showing that \textit{when the RF kernel happens to see Sobolev-type geometry}, the resulting rate recovers \citet{stone1982}'s classical result.

\paragraph{Remark 2: Parameterization.}
The Sobolev specialization makes the role of each parameter transparent: $s$ controls 
function smoothness, $d$ controls ambient dimension, and $\beta = 2s/d$ encodes the 
effective complexity of the kernel. The fact that $\alpha = 1/2$ places $f^*$ in the RKHS 
$\mathcal{H}_k$ is natural---it says the regression function is ``as smooth as the kernel 
can represent,'' which is the typical regime for kernel methods. More generally, 
Thm. \ref{thm:minimax} applies beyond Sobolev classes to any source class with polynomial spectral 
decay, including settings where the kernel adapts to low-dimensional structure and the 
ambient dimension $d$ does not enter the rate.

\end{proof}

\section{Experimental Setup}\label{appx:exp}

RFs and GBMs were trained using the \texttt{scikit-learn} package. Exact matrix decompositions were computed with \texttt{numpy}, while approximate decompositions utilized \texttt{scipy}. The SCATE model was implemented within the \texttt{torch} framework. Finally, standard Python libraries were used to evaluate metrics such as model size and inference time.

While the complexity of our proposed method is independent of the base model's size, the majority of the pruning techniques we benchmark against extract information directly from the ensemble's internal rules and structure. Consequently, employing an overly large model would unfairly increase the compression difficulty for baseline methods, while offering us an advantage at no cost. To ensure a fair comparison, we standardized the hyperparameters across all RFs and GBMs, aligning them closely with the configurations used in \citep{devos2025}. Specifically, RFs were configured with \texttt{n\_estimators=250}, \texttt{max\_depth=15}, while GBMs were trained with \texttt{n\_estimators=100}, \texttt{max\_depth=6}. For GBMs, we additionally set \texttt{base\_score=0} to be correct to the definition of the smoother matrix $\bm{S}$.

For our experiments, we assembled small-to-medium datasets from three sources: the UCI Machine Learning Repository \citep{Dua:2019}, OpenML \citep{OpenML2013}, and Kaggle. To equitably demonstrate the performance of all methods, these datasets were chosen to provide a balanced mix of classification and regression tasks. During preprocessing, we remove all missing values and exclude temporal features, which tree-based ensembles cannot handle natively. Detailed dataset characteristics---including the number of samples, feature types, and task categories---are summarized in Table \ref{tab:datasets}.

\begin{table}[htbp] \small
    \centering
    \caption{Summary of datasets used. The task column indicates whether the dataset is used for Classification or Regression.}
    \begin{adjustbox}{max width=\textwidth}
    \begin{tabular}{l l r r r r l}
        \toprule
        \textbf{Dataset} & \textbf{Code} & \textbf{\#Samples} & \textbf{\#Numerical} & \textbf{\#Categorical} & \textbf{\# Total} & \textbf{Task} \\
        \midrule
        \href{https://www.cs.toronto.edu/\%7Edelve/data/boston/bostonDetail.html}{Boston Housing} & \texttt{boston} & 506 & 13 & 0 & 13 & Regression \\
        \href{https://scikit-learn.org/stable/modules/generated/sklearn.datasets.fetch_california_housing.html}{California Housing} & \texttt{california} & 20640 & 8 & 0 & 8 & Regression \\
        \href{https://archive.ics.uci.edu/dataset/165/concrete+compressive+strength}{Concrete} & \texttt{concrete} & 1030 & 8 & 0 & 8 & Regression \\
        \href{https://scikit-learn.org/stable/api/sklearn.datasets.html}{Friedman} & \texttt{friedman\_1} & 1000 & 10 & 0 & 10 & Regression \\
        \href{https://archive.ics.uci.edu/dataset/186/wine+quality}{Wine Quality} & \texttt{wq} & 4898 & 12 & 0 & 12 & Regression \\
        \href{https://archive.ics.uci.edu/dataset/320/student+performance}{Student Performance} & \texttt{student} & 649 & 16 & 17 & 33 & Regression \\
        \href{https://archive.ics.uci.edu/dataset/1/abalone}{Abalone} & \texttt{abalone} & 4177 & 8 & 1 & 9 & Regression \\
        \href{https://archive.ics.uci.edu/dataset/9/auto+mpg}{Auto MPG} & \texttt{auto\_mpg} & 398 & 7 & 0 & 7 & Regression \\
        \href{https://www.openml.org/search?type=data\&sort=runs\&id=1462\&status=active}{Banknote Auth.} & \texttt{banknote} & 1372 & 4 & 1 & 5 & Classification \\
        \href{https://archive.ics.uci.edu/dataset/17/breast+cancer+wisconsin+diagnostic}{Breast Cancer} & \texttt{bc} & 570 & 30 & 1 & 32 & Classification \\
        \href{https://archive.ics.uci.edu/dataset/34/diabetes}{Diabetes} & \texttt{diabetes} & 768 & 8 & 1 & 9 & Classification \\
        \href{https://archive.ics.uci.edu/dataset/94/spambase}{Spambase} & \texttt{spambase} & 4601 & 58 & 1 & 59 & Classification \\
        \href{https://archive.ics.uci.edu/dataset/2/adult}{Adult} & \texttt{adult} & 45222 & 5 & 9 & 14 & Classification \\
        \href{https://www.kaggle.com/datasets/blastchar/telco-customer-churn}{Telco Churn} & \texttt{telco} & 7032 & 3 & 17 & 20 & Classification \\
        \href{https://www.openml.org/search?type=data\&status=active\&id=46911}{Bank Customer Churn} & \texttt{churn} & 10000 & 6 & 5 & 11 & Classification \\
        \bottomrule
    \end{tabular}
    \end{adjustbox}
    \label{tab:datasets}
\end{table}

We describe the setup used for the experiments in Sect. \ref{sec:exp}. All experiments are run with 10 repetitions each (for every method/dataset combination, with different data splitting and model seeds).

\subsection{Spectra and Generalization} To evaluate the general performance of SCATE, we use all datasets except \texttt{adult} and \texttt{california}. Because our theoretical analysis is limited to RFs, this experiment focuses exclusively on them. For each dataset, we performed the following steps:

\begin{itemize}[noitemsep]
    \item After splitting the data (train, validation, test) and fitting an RF to the training set, we constructed the RF kernel $\bm{K}$ and computed its exact eigendecomposition. The resulting eigenvalues were sorted to generate a decay curve. We also computed the cross-kernel matrices $\bm{K}_{train, test}$ and $\bm{K}_{train, validation}$.
    \item We evaluate varying numbers of learned eigenfunctions, $P$, using the sequence \texttt{[2, 5, 10, 20, 30, 40, 50, 60, 70, 80, 90, 100, 150, 175, 200]}. We capped $P$ at 200, as subsequent eigenfunctions correspond to very small eigenvalues and contain minimal information.
    \item \textbf{Oracle Baseline:} For each value of $P$, we calculate the Oracle baseline (Sect.~\ref{sec:compression}) by computing its approximation of $\bm{K}_{train, test}$ and its resulting predictions on the test data.
    \item For each $P$, we train two separate SCATE models (Sect.~\ref{sec:compression}) for 200 epochs to learn the eigenfunction mappings: a small model (\texttt{width=16}, \texttt{depth=2}) and a large model (\texttt{width=128}, \texttt{depth=4}).
    \item The test data is passed through each model to generate output eigenfunctions. These are used to compute the approximated kernel $\hat{\bm{K}}_{train\_test}$, which was then multiplied by the training labels $y$ to produce predictions. Performance is measured using accuracy for classification and $R^2$ for regression.
\end{itemize}

The resulting visualizations from this process are presented in Fig.~\ref{fig:visualisation}.

\subsection{Na\"ive Benchmark}In this experiment, we evaluate SCATE against two simple distillation baselines for RFs:
\begin{itemize}[noitemsep]
    \item \textbf{Na\"ive MLP:} This approach uses the identical network architecture as SCATE. However, instead of learning eigenfunctions, it is trained to directly predict the output of each tree in the base model. This direct distillation scheme is inspired by \citep{hinton_distilling2}.
    \item \textbf{Na\"ive RF:} A naturally smaller RF created by restricting the \texttt{max\_depth} and \texttt{n\_estimators} hyperparameters. Because default \texttt{scikit-learn} objects store metadata beyond what is needed for inference, we extract only the essential matrices into a custom \texttt{minimal\_RF} object to ensure an accurate model size comparison.
\end{itemize}
We then benchmark SCATE against the Na\"ive MLP, the Na\"ive RF, and the Base RF, on all datasets excluding \texttt{adult} and \texttt{california}, using the following pipeline:
\begin{itemize}[noitemsep]
    \item We split the data, train the baseline RF, and construct the kernel exactly as described in the visualization experiment. We additionally record the performance of the baseline RF, as a measure of the performance we expect to converge to for all models.
    \item Based on earlier visualization results (Fig.~\ref{fig:visualisation}) showing that SCATE learns most effectively up to 50 eigenfunctions, we fix $P=50$. Instead of varying $P$, we sweep the network architectures for SCATE and the Na\"ive MLP using combinations of \texttt{widths=[4, 8, 16, 32, 64, 128]} and \texttt{depths=[1, 2, 3, 4, 5]}. 
    \item Concurrently, we evaluate the Na\"ive RF baselines across combinations of \texttt{n\_estimators=[10, 50, 100, 200, 500]} and \texttt{max\_depth=[3, 4, 5, 6, 7, 8, 9, None]}.
    \item We train the models on their respective targets (eigenfunctions for SCATE, tree outputs for the MLP, and raw data for the Na\"ive RF). We then record both the predictive performance and the final model size for every configuration.
    \item For each dataset and method, we isolate the best-performing configurations at different model sizes to construct a convex Pareto frontier. Because model sizes scale differently across methods, some curves may contain very few data points if a majority of the configurations fall below the optimal frontier.  
\end{itemize}

The resulting Pareto curves are plotted in Fig.~\ref{fig:naive}.

\subsection{Compression Benchmark}
For the compression benchmark, we test how well different methods perform compression. We focus on SCATE and FP-SCATE (which is SCATE applied to a model already shrunk by ForestPrune) under two strict memory limits: 10KB and 100KB. Unlike the earlier experiments, we also include GBMs here to compare performance.

Given all datasets in the pipeline, our benchmark proceeds as follows:
\begin{itemize}[noitemsep]
    \item Similar to the earlier tests, we start by splitting the data, training a standard Random Forest, and creating the kernel matrix $\bm{K}$ for each dataset.
    \item We test a wide range of settings (hyperparameters) for each method to see how they balance model size against accuracy. For every combination of settings and dataset, we record the compressed model's size and performance. Since different algorithms output different types of models, we measure size fairly by only counting size of the bare minimum objects needed to make predictions.
    \item For each method and dataset, we find the best-performing model that fits under the 10KB and 100KB limits. The score of this best model becomes our final result for that budget, and we summarize these results  in Table~\ref{tab:rf}.
    \item Finally, we repeat this exact same process for the GBM models, constructing a smoother matrix $\bm{S}$ instead of an RF kernel. The results for the GBM tests are shown in Table~\ref{tab:gbm}.
\end{itemize}

Each method ends up being run around $50$ times for each dataset and seed, and the runs are doubled, since we evaluate on both RFs and GBMs, resulting in $50 \times 15 \times 10 \times 2 = 15000$ runs for each method, and we benchmark $7$ methods in total, resulting in $\sim 10^5$ runs in total. To address these computational demands, we run the compression benchmark from a high performance computing partition. Other experiments were run from a Macbook Pro, with an M5 Pro chip with 18-core CPU and 20-core GPU, and 24GB RAM. In order to accurately compare training and inference time between methods, we conduct a small runtime experiment on this machine, which can be found in Appx. \ref{appx:runtime}

This section details the implementation of each method included in our compression benchmark. For all baseline methods, we outline any structural modifications required to adapt them to our experimental framework, alongside the hyperparameter grids searched to identify the optimal model for each dataset.
\begin{itemize}
    \item \textbf{SCATE:} Following the setup used in our na\"ive benchmark, we fix $P=50$ and evaluate configurations across \texttt{widths} $\in \{4, 8, 16, 32, 64, 128\}$ and \texttt{depths} $\in \{1, 2, 3, 4, 5\}$. For GBM compression, because our smoother matrix $\textbf{S}$ is strictly defined for MSE loss, we train a regressor for binary classification tasks and interpret the continuous outputs as linear probabilities.
    \item \textbf{FP:} The ForestPrune algorithm \citep{liu2023} lacks native support for classification tasks. To address this, we apply the same workaround used in our own models by fitting a regressor for binary classification. We identify the optimal model by sweeping a hyperparameter grid generated via \texttt{np.flip(np.logspace(-2, 1.5, 50))}, mirroring the configuration in the algorithm's official implementation.
    \item \textbf{FP-SCATE:} This pipeline sequentially applies FP followed by SCATE. We first isolate the best-performing FP model using the hyperparameter sweep described above. We then compress that specific model using SCATE, evaluating it over the same hyperparameter grid as base SCATE.
    \item \textbf{LOP:} For the LOP method \citep{devos2025}, we search for the best model by sweeping tolerance values continuously from 0.01 to 1. To manage computational overhead, we impose a strict 10-minute timeout per run. In practice, this limit was only triggered a handful of times on the most computationally demanding datasets.
    \item \textbf{FIRE and TE:} Similar to FP, neither the FIRE \citep{liu2023fire} nor TreeExtract \citep{liu2025extracting} algorithms natively handle compression for classification. Consequently, we employ the same regressor workaround for binary classification tasks. For FIRE, we sweep the $\alpha$ hyperparameter across values from 10 to 100. For TE, we evaluate models across a $\lambda$ range defined by \texttt{np.flip(np.logspace(1, 4, 50))}.
    \item \textbf{SSF:} Because the SSF algorithm \citep{alkhoury2024splitting} is designed exclusively for compressing Random Forests, we exclude it from our GBM benchmarks. We determine the optimal SSF models by sweeping threshold values between 0 and 0.5, which covers the entire hyperparameter range specified in the original paper.
\end{itemize}

\section{Additional Results}\label{appx:extra_results}
We present additional results from the visualization experiment and the na\"ive benchmark, experiments comparing different methods of learning eigenfunctions, runtime experiments, and evaluations of assumptions made for our proof.

\subsection{Evaluating the C2 Assumption}\label{appx:c2}
Recall that Assumption C2 requires the operator $\bm{K}_q$ to have eigenvalues $\lambda_i \asymp i^{-\beta}$ for some $\beta > 1$. To evaluate this, we train a Random Forest (RF) on each benchmark dataset (Table \ref{tab:datasets}) and derive its empirical RF kernel. We then compute the eigendecomposition of these kernels and sort the eigenvalues in descending order. We then fit a regression curve to the eigenvalues as a function of their rank index. Specifically, we fit a linear regression of $\log \lambda$ on $\log i$, such that $\beta$ is the negative coefficient. We extract this decay parameter to check if it satisfies the $\beta > 1$ condition. The empirical decay and the estimated coefficients are plotted for visual analysis in Fig. \ref{fig:c2}. Note that we restrict our analysis to the top 100 eigencomponents; both since the empirical kernel merely approximates the true operator $\bm{K}_q$, causing the later components become increasingly noisy; and also to due to some datasets being relatively small in sample size.

\begin{figure}[htbp]
    \centering
    \begin{subfigure}[b]{0.48\textwidth}
        \centering
        \includegraphics[width=\textwidth]{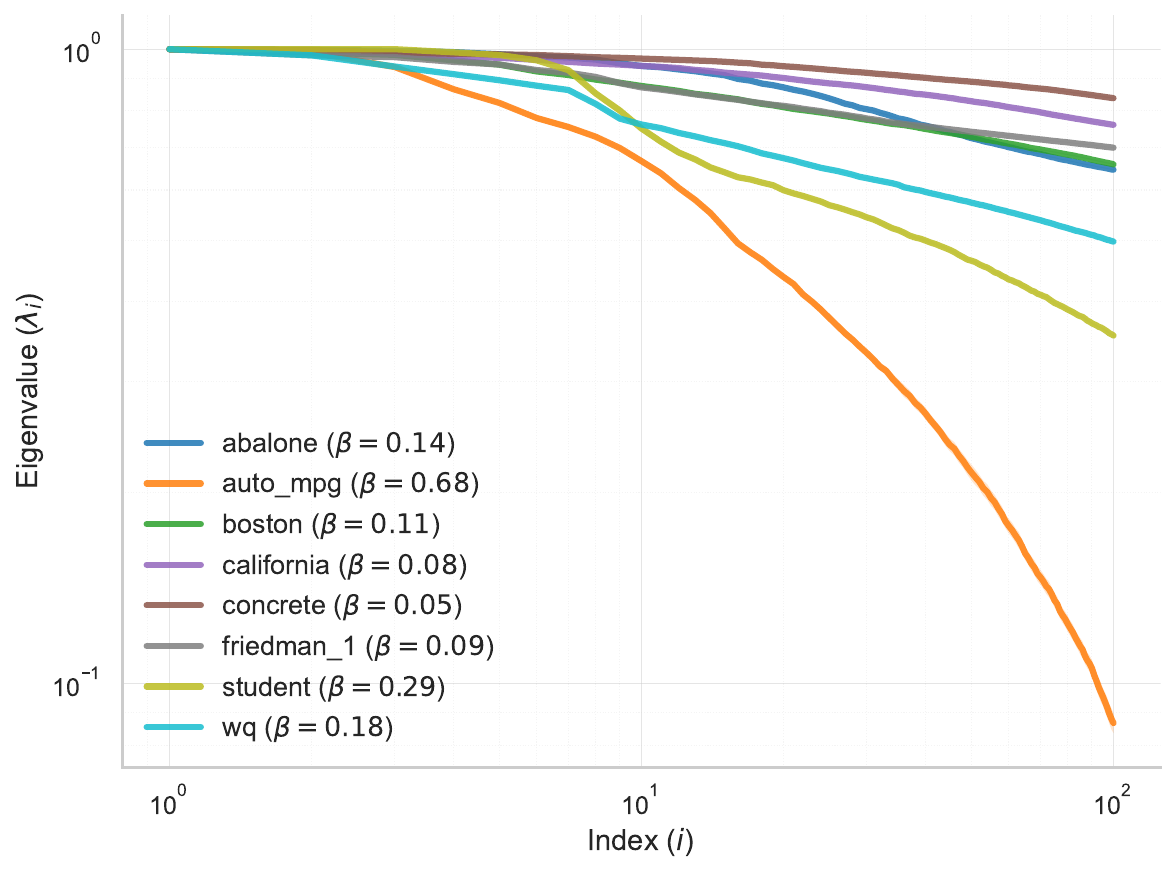}
        \caption{Regression Datasets}
    \end{subfigure}
    \hfill
    \begin{subfigure}[b]{0.48\textwidth}
        \centering
        \includegraphics[width=\textwidth]{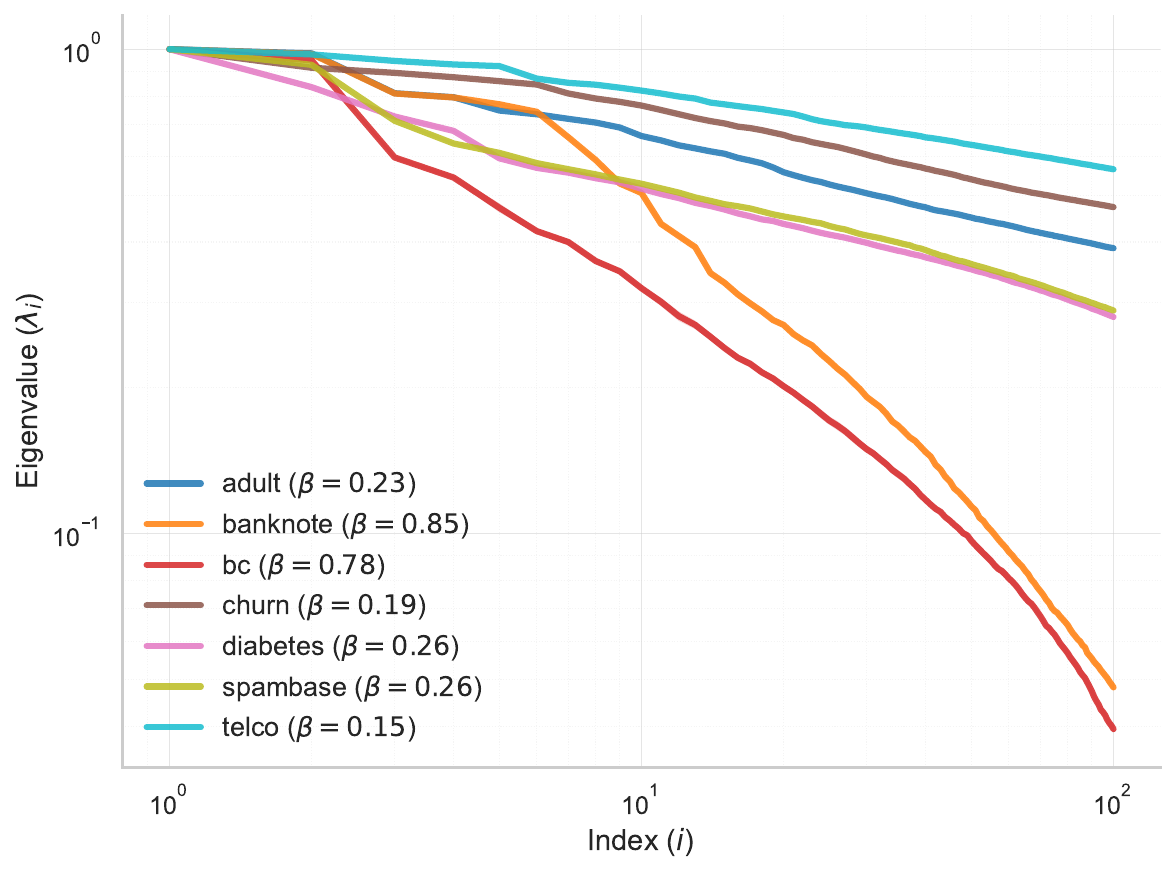}
        \caption{Classification Datasets}
    \end{subfigure}
    \vspace{1em}
    \caption{Empirical evaluation of the polynomial decay assumption (C2), across both regression (a)
and classification (b) datasets.}
    \label{fig:c2}
\end{figure}

Surprisingly, our visual analysis reveals that Assumption C2 does not generally hold for the datasets in the benchmark, though the \texttt{bc} and \texttt{banknote} datasets appear to approach polynomial decay. Despite this widespread violation of the assumption, SCATE maintains competitive performance even in scenarios where $\beta < 1$. This suggests that while C2 constitutes a sufficient condition for strong model performance, it is not necessary condition. Ultimately, these results suggest that SCATE is highly robust to deviations from the polynomial decay assumption.

\subsection{Evaluating Different Methods for Learning Eigenfunctions}\label{appx:neural_exp} 
We evaluate and compare the performance of two existing neural eigenfunction estimation approaches—NeuralEF \citep{deng2022neuralef} and NeuralSVD \citep{ryu2024operator}—against our proposed method, SCATE. Our evaluation follows two experimental setups: first, a capacity-constrained setting similar to the na\"ive benchmark, where we fix the model size and vary the number of target eigenfunctions $P$; second, a budget-constrained setting similar to the compression benchmark, where we fix $P$ and vary model sizes to identify the optimal configuration. Due to the relatively high computational demands of NeuralEF and NeuralSVD, we restrict this comparative analysis to five small datasets: \texttt{auto\_mpg}, \texttt{student}, \texttt{boston}, \texttt{bc}, and \texttt{diabetes}, serving as an illustrative side-by-side comparison. For SCATE, we maintain the experimental settings detailed in Sect. \ref{sec:compression} and Appx. \ref{appx:exp}. 

Because the primary contributions of NeuralEF and NeuralSVD are novel learning objectives rather than specific model architectures, their objectives can theoretically be adapted to any neural network. However, to ensure a fair comparison, we utilize the network designs specified in each method's official implementation. We briefly summarize their methodologies below:

\begin{itemize}
    \item \textbf{NeuralEF:} NeuralEF formulates kernel eigendecomposition as a series of asymmetric maximization problems. Because this formulation relies on the deflation of previously computed modes, the optimization exhibits a strict hierarchical dependency where latter eigenfunctions depend on the convergence of earlier ones. While their original objective is a constrained optimization problem, in practice, NeuralEF relaxes orthogonality into explicit penalty terms within the loss function and enforces unit-norm requirements via a $L_2$ batch normalization layer. This structure requires training a neural network for each of the $P$ eigenfunctions it aims to learn. Furthermore, because its objective is fundamentally rooted in maximizing the Rayleigh quotient, NeuralEF is strictly confined to self-adjoint operators. As a result, it cannot compute the singular value decomposition (SVD) of kernels, limiting its applicability in our context exclusively to Random Forest (RF) compression. 
    
    \item \textbf{NeuralSVD:} In contrast, NeuralSVD frames spectral decomposition as a fully unconstrained optimization task based on Schmidt’s low-rank approximation theorem. By introducing a "NestedLoRA" objective, it recovers the ordered singular functions by breaking the symmetry of the low-rank approximation. This approach provides several improvements over NeuralEF. First, NeuralSVD eliminates the need for explicit orthogonality penalties and normalization layers; instead, it learns unnormalized singular functions that naturally absorb the scale of their corresponding singular values. Second, rather than using a sequentially trained network, NeuralSVD employs a joint nesting technique that unifies the learning of all $P$ modes into a single global objective function. This allows the model to simultaneously and optimize all functions within a single, shared neural network, similar to SCATE. Finally, because its objective natively supports SVD, NeuralSVD is capable of compression of both RFs and Gradient Boosting Machines (GBMs).
\end{itemize}

\begin{table}[htbp]
\centering
\caption{RF compression benchmark for the three eigenfunction learning methods, averaged over $10$ trials. Best result per row in bold. NA indicates that the model could not reach the target size.}
\label{tab:neural_rf}
\setlength{\tabcolsep}{3pt}
\renewcommand{\arraystretch}{1.1}
\resizebox{\textwidth}{!}{
\begin{tabular}{lc|ccc|ccc}
\toprule
 & & \multicolumn{3}{c|}{\textbf{Size = 10KB}} & \multicolumn{3}{c}{\textbf{Size = 100KB}} \\
\cmidrule{3-8}
\textbf{Dataset} & \textbf{RF} & \textbf{NeuralEF} & \textbf{NeuralSVD} & \textbf{SCATE} & \textbf{NeuralEF} & \textbf{NeuralSVD} & \textbf{SCATE} \\
\midrule
\texttt{auto\_mpg} & 0.69 & 0.086 $\pm$ 0.011 & 0.237 $\pm$ 0.020 & \textbf{0.453 $\pm$ 0.012} & 0.192 $\pm$ 0.005 & 0.320 $\pm$ 0.009 & \textbf{0.497 $\pm$ 0.009} \\
\texttt{bc} & 0.96 & 0.732 $\pm$ 0.018 & 0.804 $\pm$ 0.028 & \textbf{0.984 $\pm$ 0.003} & 0.881 $\pm$ 0.016 & 0.949 $\pm$ 0.008 & \textbf{0.987 $\pm$ 0.003} \\
\texttt{boston} & 0.85 & 0.149 $\pm$ 0.011 & 0.220 $\pm$ 0.015 & \textbf{0.775 $\pm$ 0.013} & 0.149 $\pm$ 0.011 & 0.252 $\pm$ 0.013 & \textbf{0.814 $\pm$ 0.012} \\
\texttt{diabetes} & 0.76 & 0.654 $\pm$ 0.011 & 0.658 $\pm$ 0.007 & \textbf{0.786 $\pm$ 0.010} & 0.654 $\pm$ 0.011 & 0.660 $\pm$ 0.007 & \textbf{0.794 $\pm$ 0.010} \\
\texttt{student} & 0.85 & NA & 0.116 $\pm$ 0.013 & \textbf{0.854 $\pm$ 0.014} & 0.134 $\pm$ 0.007 & 0.337 $\pm$ 0.018 & \textbf{0.857 $\pm$ 0.013} \\
\midrule
\textbf{Average Rank} &    & 3.00& 2.00& \textbf{1.00}    & 3.00 & 2.00 & \textbf{1.00}  \\
\bottomrule
\end{tabular}
}
\end{table}

\begin{figure}[htbp]
    \centering
    \includegraphics[width=0.9\linewidth]{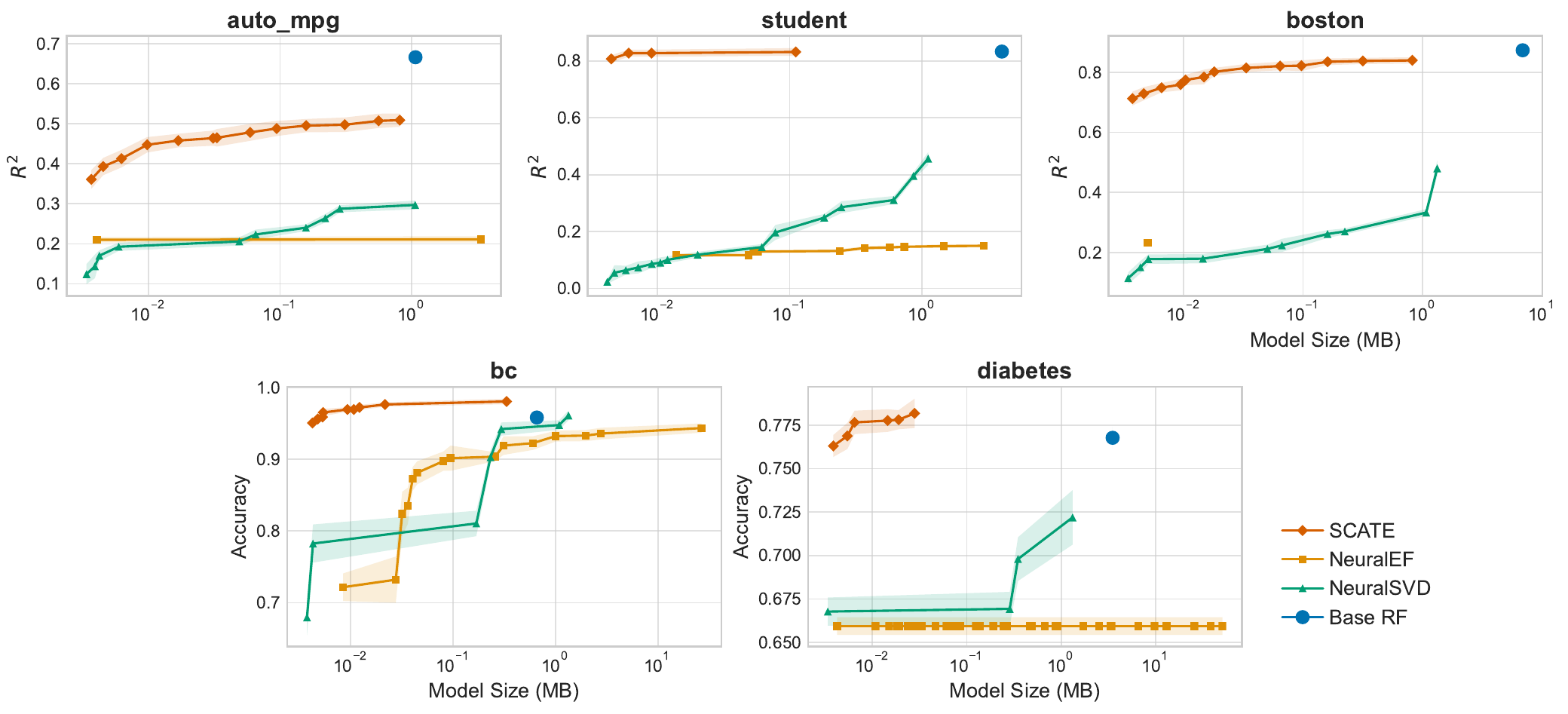}
    \caption{Rate-performance Pareto curves across five benchmarkd datasets for three eigenfunction learning methods. Shaded regions indicate standard errors.}
    \label{fig:naive_neural}
\end{figure}

From these results, we can see SCATE dominating compression performance. There are several reasons that may have caused subpar performance from the two competing methods, despite their theoretical complexity. 

First, there is a fundamental difference in their optimization objectives: while NeuralEF and NeuralSVD are designed for mathematically exact spectral decomposition, SCATE is designed for kernel reconstruction. This can make a big difference in compression settings, where a network's learning capacity is heavily limited. In particular, both NeuralEF and NeuralSVD dedicate a substantial proportion of their capacity towards learning eigenvectors that are strictly orthogonal. This objective may be too complex for the limited learning capacity of small networks and is better suited for more expressive models. 

In contrast, SCATE treats orthogonality as a soft penalty. Since top eigencomponents have strong signals, we do not necessarily need the orthogonality property to learn the top eigenfunctions effectively. Ultimately, this performance gap does not imply that NeuralEF or NeuralSVD are inferior methods; rather, their objectives and motivations are more appropriate for different settings than our own. As a consequence, we select SCATE as the primary method for learning the eigenfunctions in our RF compression experiments.

\subsection{Additional Visualization Results}\label{appx:extra_vis}
Complementing the two-dataset visualization in Fig. \ref{fig:visualisation}, Fig. \ref{fig:vis_full_reg} (regression) and Fig. \ref{fig:vis_rull_cl} (classification) show the eigenvalue decay, kernel approximation error, and predictive performance for the thirteen remaining datasets in the benchmark. 

\begin{figure}[htbp]
    \centering
    \includegraphics[width=\linewidth]{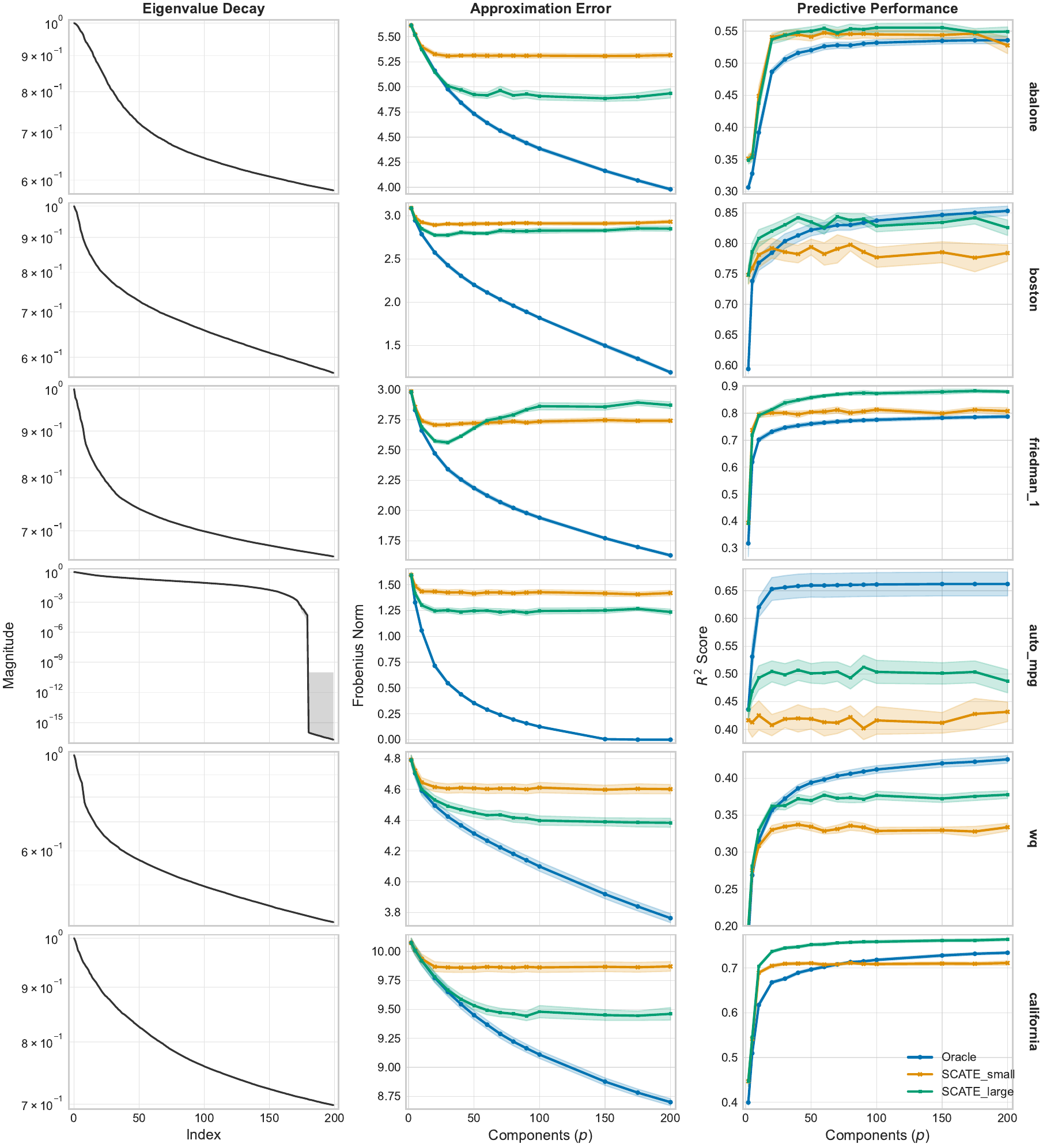}
    \caption{Spectral decay, kernel approximation error, and predictive performance across the remaining six regression datasets in the benchmark. Shaded regions indicate standard errors.}
    \label{fig:vis_full_reg}
\end{figure}

\begin{figure}[htbp]
    \centering
    \includegraphics[width=\linewidth]{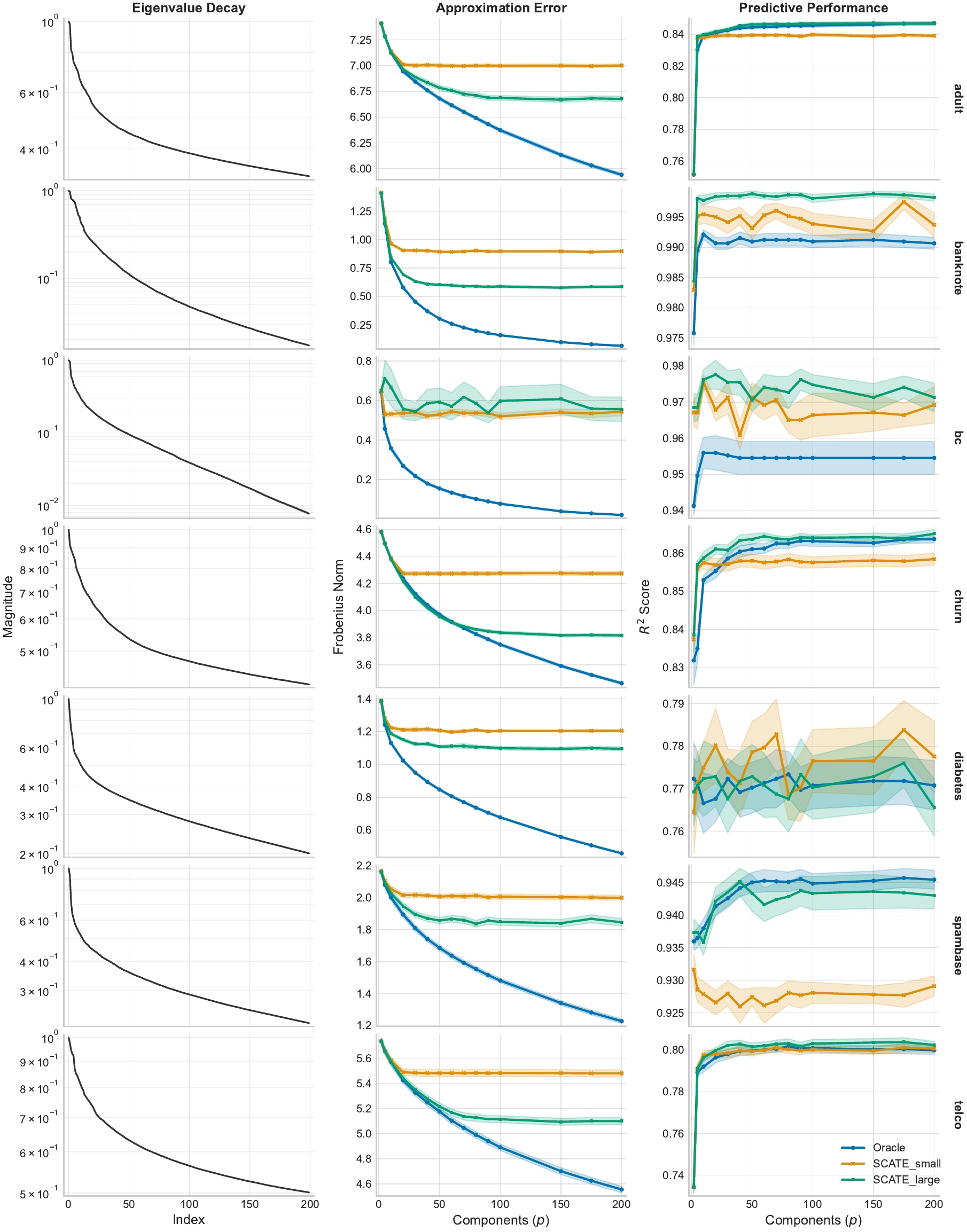}
    \caption{Spectral decay, kernel approximation error, and predictive performance across the remaining seven classification datasets in the benchmark. Shaded regions indicate standard errors.}
    \label{fig:vis_rull_cl}
\end{figure}
These findings further illustrate that substantial improvements in compression performance are closely associated with a steep decay in the kernel's eigenvalues. Moreover, SCATE consistently matches Oracle performance, demonstrating robustness even when the kernel approximation error stops improving due to noise in the trailing eigencomponents.

\subsection{Additional Na\"ive Benchmark Results}\label{appx:extra_naive}

In Figure \ref{fig:naive_full}, we present the Pareto frontiers for all remaining datasets evaluated in the na\"ive benchmark, supplementing the initial results shown in Figure \ref{fig:naive}. These plots demonstrate that SCATE consistently serves as the dominant method for RF compression compared to na\"ive alternatives. 

While there are isolated instances where baseline methods achieve higher scores, these occurrences typically coincide with anomalous scenarios where the compressed models outperform the uncompressed Base RF. While these performance gains are possible, they are neither predictable nor broadly replicable, and thus cannot be relied on in practice. In contrast, SCATE maintains stable performance across diverse scenarios, validating its robustness as a highly effective methodology for RF compression.

\begin{figure}[htbp]
    \centering
    \includegraphics[width=\linewidth]{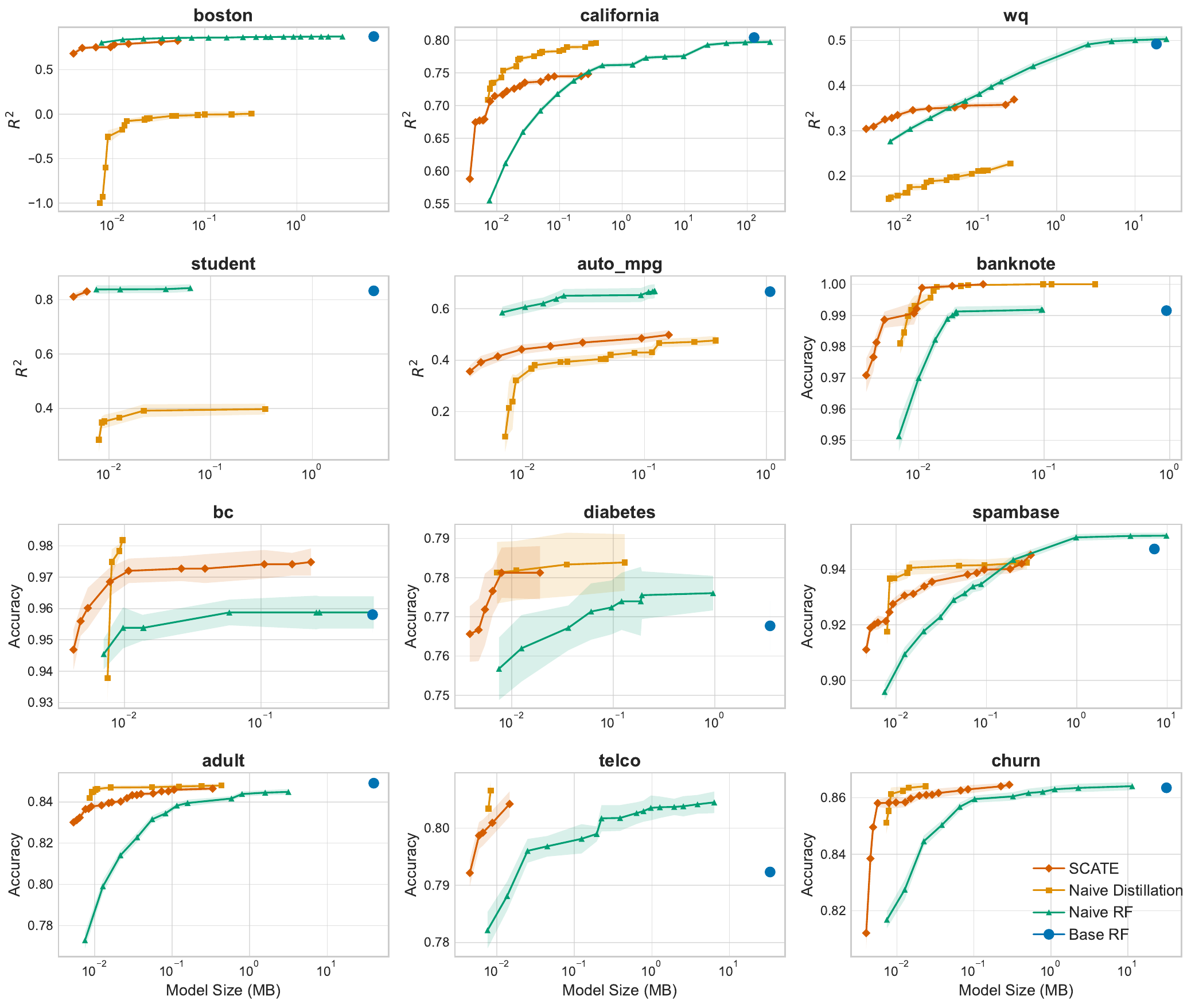} 
    \caption{Rate-performance Pareto curves across the remaining twelve benchmark datasets. Shaded regions indicate standard errors.}
    \label{fig:naive_full}
\end{figure}
\newpage
\subsection{Comprehensive Compression Benchmark Results}\label{appx:big_table}
As a supplement to Tables \ref{tab:rf} and \ref{tab:gbm}, we present the complete compression benchmark results in Table \ref{tab:full}. This expanded table reports the standard errors for all evaluated methods across each dataset. To identify the most competitive approaches, methods achieving performance within one standard error of the top-performing model are jointly highlighted in bold, though they remain ordered by their absolute mean performance. Notably, while the mean performance of several baseline models may initially appear comparable to our own, a more granular analysis incorporating these standard errors reveals that our method is superior, in the vast majority of scenarios, and validates our empirical rankings. We additionally report the size of the base RF and GBM models, which for 10KB show a compression factor around $10^3$ for RFs, and $10^2$ for GBMs, which is relatively inline with the values often tested for within the literature \citep{devos2025, liu2023}. To be fair with previous works, which reported leaf count as a measure of size, we strip down each base model to simply the matrices needed to perform inference (features, thresholds, left, right, and values) before computing its size.
\clearpage
\begin{sidewaystable}[htbp]
\centering
\caption{RF compression benchmark. Performance results at 10KB and 100KB memory budgets across different datasets and methods. Best result per row in bold, as well as results within a standard error of the best value}
\label{tab:rf_results}
\setlength{\tabcolsep}{3pt}
\renewcommand{\arraystretch}{1.1}
\resizebox{\linewidth}{!}{
\begin{tabular}{lcc|ccccccc|ccccccc}
\toprule
~ & & & \multicolumn{7}{c|}{\textbf{Size = 10KB}} & \multicolumn{7}{c}{\textbf{Size = 100KB}} \\
\cmidrule{4-17}
\textbf{Dataset} & \textbf{RF} & \textbf{Base Size} & \textbf{ForestPrune} & \textbf{LOP} & \textbf{FIRE} & \textbf{TreeExtract} & \textbf{SSF} & \textbf{SCATE} & \textbf{FP-SCATE} & \textbf{ForestPrune} & \textbf{LOP} & \textbf{FIRE} & \textbf{TreeExtract} & \textbf{SSF} & \textbf{SCATE} & \textbf{FP-SCATE} \\
\midrule
\texttt{abalone} & 0.549 & 31.18 $\pm$ 0.110 & 0.347 $\pm$ 0.010 & 0.504 $\pm$ 0.009 & 0.282 $\pm$ 0.016 & NA & NA & \textbf{0.561 $\pm$ 0.004} & 0.543 $\pm$ 0.010 & 0.533 $\pm$ 0.004 & 0.511 $\pm$ 0.010 & 0.283 $\pm$ 0.016 & 0.548 $\pm$ 0.004 & NA & \textbf{0.568 $\pm$ 0.003} & 0.545 $\pm$ 0.010 \\
\texttt{adult} & 0.851 & 39.59    $\pm$ 0.200 & 0.792 $\pm$ 0.003 & 0.827 $\pm$ 0.003 & TO & TO & NA & \textbf{0.839 $\pm$ 0.001} & 0.834 $\pm$ 0.001 & 0.837 $\pm$ 0.001 & 0.835 $\pm$ 0.001 & TO & TO & \textbf{0.848 $\pm$ 0.001} & \textbf{0.847 $\pm$ 0.001} & 0.846 $\pm$ 0.001 \\
\texttt{auto\_mpg} & 0.694 & 1.09 $\pm$ 0.021 & 0.489 $\pm$ 0.016 & \textbf{0.632 $\pm$ 0.024} & \textbf{0.619 $\pm$ 0.027} & NA & 0.600 $\pm$ 0.025 & 0.453 $\pm$ 0.012 & 0.462 $\pm$ 0.010 & 0.578 $\pm$ 0.019 & \textbf{0.632 $\pm$ 0.024} & \textbf{0.619 $\pm$ 0.027} & 0.383 $\pm$ 0.007 & 0.601 $\pm$ 0.026 & 0.497 $\pm$ 0.009 & 0.511 $\pm$ 0.012 \\
\texttt{banknote} & 0.993 & 0.964 $\pm$ 0.010 & 0.874 $\pm$ 0.010 & 0.986 $\pm$ 0.004 & 0.990 $\pm$ 0.003 & NA & 0.999 $\pm$ 0.001 & \textbf{1.000 $\pm$ 0.000} & \textbf{1.000 $\pm$ 0.000} & 0.919 $\pm$ 0.011 & 0.986 $\pm$ 0.004 & 0.990 $\pm$ 0.003 & 0.892 $\pm$ 0.003 & \textbf{1.000 $\pm$ 0.000} & \textbf{1.000 $\pm$ 0.000} & \textbf{1.000 $\pm$ 0.000} \\
\texttt{bc} & 0.962 & 0.678 $\pm$ 0.014 & 0.950 $\pm$ 0.007 & 0.922 $\pm$ 0.027 & 0.955 $\pm$ 0.005 & NA & 0.976 $\pm$ 0.004 & \textbf{0.984 $\pm$ 0.003} & 0.980 $\pm$ 0.003 & 0.951 $\pm$ 0.007 & 0.922 $\pm$ 0.027 & 0.955 $\pm$ 0.005 & 0.920 $\pm$ 0.007 & 0.980 $\pm$ 0.003 & \textbf{0.987 $\pm$ 0.003} & \textbf{0.985 $\pm$ 0.003} \\
\texttt{boston} & 0.855 & 6.87 $\pm$ 0.020 & 0.600 $\pm$ 0.016 & \textbf{0.805 $\pm$ 0.023} & 0.684 $\pm$ 0.016 & NA & NA & 0.775 $\pm$ 0.013 & \textbf{0.790 $\pm$ 0.012} & 0.800 $\pm$ 0.015 & 0.818 $\pm$ 0.025 & 0.690 $\pm$ 0.016 & 0.810 $\pm$ 0.014 & 0.717 $\pm$ 0.013 & 0.814 $\pm$ 0.012 & \textbf{0.833 $\pm$ 0.011} \\
\texttt{california} & 129.4 & 0.309 $\pm$ 0.000 & 0.449 $\pm$ 0.027 & 0.684 $\pm$ 0.006 & TO & TO & NA & 0.700 $\pm$ 0.003 & \textbf{0.707 $\pm$ 0.006} & 0.676 $\pm$ 0.010 & \textbf{0.744 $\pm$ 0.005} & TO & TO & NA & \textbf{0.746 $\pm$ 0.004} & \textbf{0.743 $\pm$ 0.004} \\
\texttt{churn} & 0.862 & 31.67 $\pm$ 0.177 & 0.824 $\pm$ 0.004 & 0.844 $\pm$ 0.006 & 0.850 $\pm$ 0.002 & NA & NA & \textbf{0.859 $\pm$ 0.002} & 0.852 $\pm$ 0.007 & 0.853 $\pm$ 0.002 & 0.850 $\pm$ 0.004 & 0.850 $\pm$ 0.002 & 0.851 $\pm$ 0.002 & 0.828 $\pm$ 0.004 & \textbf{0.863 $\pm$ 0.002} & 0.855 $\pm$ 0.008 \\
\texttt{concrete} & 0.907 & 14.42 $\pm$ 0.025 & 0.504 $\pm$ 0.015 & 0.741 $\pm$ 0.021 & 0.731 $\pm$ 0.014 & NA & NA & 0.801 $\pm$ 0.008 & \textbf{0.820 $\pm$ 0.005} & 0.758 $\pm$ 0.014 & 0.855 $\pm$ 0.005 & 0.746 $\pm$ 0.014 & 0.894 $\pm$ 0.003 & \textbf{0.901 $\pm$ 0.004} & 0.862 $\pm$ 0.003 & 0.861 $\pm$ 0.003 \\
\texttt{diabetes} & 0.759 & 3.49 $\pm$ 0.040 & 0.757 $\pm$ 0.008 & 0.741 $\pm$ 0.014 & 0.765 $\pm$ 0.009 & NA & NA & 0.787 $\pm$ 0.010 & \textbf{0.794 $\pm$ 0.009} & 0.764 $\pm$ 0.009 & 0.741 $\pm$ 0.014 & 0.765 $\pm$ 0.009 & 0.661 $\pm$ 0.010 & 0.733 $\pm$ 0.007 & 0.794 $\pm$ 0.010 & \textbf{0.801 $\pm$ 0.009} \\
\texttt{friedman\_1} & 0.851 & 14.33 $\pm$ 0.016 & 0.474 $\pm$ 0.023 & 0.759 $\pm$ 0.012 & 0.482 $\pm$ 0.025 & NA & NA & \textbf{0.822 $\pm$ 0.006} & 0.811 $\pm$ 0.009 & 0.736 $\pm$ 0.014 & 0.825 $\pm$ 0.013 & 0.494 $\pm$ 0.025 & 0.794 $\pm$ 0.007 & NA & \textbf{0.863 $\pm$ 0.004} & 0.855 $\pm$ 0.006 \\
\texttt{spambase} & 0.949 & 7.22 $\pm$ 0.037 & 0.868 $\pm$ 0.005 & 0.916 $\pm$ 0.003 & \textbf{0.937 $\pm$ 0.002} & NA & NA & 0.929 $\pm$ 0.001 & 0.926 $\pm$ 0.002 & 0.913 $\pm$ 0.002 & 0.933 $\pm$ 0.002 & \textbf{0.944 $\pm$ 0.002} & 0.894 $\pm$ 0.003 & \textbf{0.944 $\pm$ 0.002} & 0.939 $\pm$ 0.001 & 0.936 $\pm$ 0.001 \\
\texttt{student} & 0.851 & 4.077 $\pm$ 0.014 & 0.644 $\pm$ 0.030 & 0.804 $\pm$ 0.015 & 0.710 $\pm$ 0.013 & NA & 0.818 $\pm$ 0.015 & \textbf{0.854 $\pm$ 0.014} & 0.836 $\pm$ 0.019 & 0.838 $\pm$ 0.014 & 0.805 $\pm$ 0.015 & 0.716 $\pm$ 0.013 & 0.791 $\pm$ 0.011 & 0.818 $\pm$ 0.015 & \textbf{0.857 $\pm$ 0.013} & 0.840 $\pm$ 0.019 \\
\texttt{telco} & 0.789 & 33.40 $\pm$ 0.142 & 0.747 $\pm$ 0.007 & 0.773 $\pm$ 0.008 & 0.789 $\pm$ 0.003 & NA & NA & \textbf{0.804 $\pm$ 0.003} & 0.800 $\pm$ 0.002 & 0.797 $\pm$ 0.002 & 0.775 $\pm$ 0.008 & 0.789 $\pm$ 0.003 & 0.789 $\pm$ 0.002 & 0.779 $\pm$ 0.004 & \textbf{0.808 $\pm$ 0.003} & 0.804 $\pm$ 0.002 \\
\texttt{wq} & 0.496 & 18.39 $\pm$ 0.071 & 0.236 $\pm$ 0.012 & 0.312 $\pm$ 0.006 & 0.307 $\pm$ 0.010 & NA & NA & \textbf{0.336 $\pm$ 0.008} & 0.332 $\pm$ 0.006 & 0.350 $\pm$ 0.010 & \textbf{0.358 $\pm$ 0.008} & 0.340 $\pm$ 0.008 & 0.281 $\pm$ 0.006 & 0.243 $\pm$ 0.010 & \textbf{0.362 $\pm$ 0.007} & 0.360 $\pm$ 0.007 \\
\midrule
\textbf{Average Rank} & & & 4.80 & 3.47 & 3.87 & 6.57 & 5.50 & \textbf{1.77} & 2.03 & 4.40 & 4.33 & 5.03 & 5.30 & 4.47 & \textbf{1.93} & 2.53 \\
\bottomrule
\end{tabular}
}
\vspace{1cm} 

\caption{GBM compression benchmark. Performance results at 0.01 and 0.10 memory budgets across different datasets and methods. Best result per row in bold, as well as results within a standard error of the best value.}
\label{tab:gbm_results}
\setlength{\tabcolsep}{3pt}
\renewcommand{\arraystretch}{1.1}
\resizebox{\linewidth}{!}{
\begin{tabular}{lcc|cccccc|cccccc}
\toprule
~ & & & \multicolumn{6}{c|}{\textbf{Size = 10KB}} & \multicolumn{6}{c}{\textbf{Size = 100KB}} \\
\cmidrule{4-15}
\textbf{Dataset} & \textbf{GBM} & \textbf{Base Size} & \textbf{FIRE} & \textbf{ForestPrune} & \textbf{FP-SCATE} & \textbf{LOP} & \textbf{SCATE} & \textbf{TreeExtract} & \textbf{FIRE} & \textbf{ForestPrune} & \textbf{FP-SCATE} & \textbf{LOP} & \textbf{SCATE} & \textbf{TreeExtract} \\
\midrule
\texttt{abalone} & 0.529 & 0.690 $\pm$ 0.004 & 0.465 $\pm$ 0.009 & 0.445 $\pm$ 0.004 & 0.547 $\pm$ 0.007 & 0.505 $\pm$ 0.006 & \textbf{0.562 $\pm$ 0.002} & 0.540 $\pm$ 0.003 & 0.465 $\pm$ 0.009 & 0.528 $\pm$ 0.004 & 0.552 $\pm$ 0.008 & 0.520 $\pm$ 0.007 & \textbf{0.571 $\pm$ 0.002} & 0.547 $\pm$ 0.004 \\
\texttt{adult} & 0.853 & 0.705 $\pm$ 0.003 & 0.814 $\pm$ 0.002 & 0.819 $\pm$ 0.002 & 0.846 $\pm$ 0.000 & 0.844 $\pm$ 0.002 & 0.846 $\pm$ 0.001 & \textbf{0.853 $\pm$ 0.001} & 0.814 $\pm$ 0.002 & 0.853 $\pm$ 0.001 & 0.853 $\pm$ 0.001 & 0.852 $\pm$ 0.001 & 0.853 $\pm$ 0.001 & \textbf{0.856 $\pm$ 0.001} \\
\texttt{auto\_mpg} & 0.658 & 0.500 $\pm$ 0.006 & \textbf{0.618 $\pm$ 0.032} & \textbf{0.602 $\pm$ 0.026} & 0.132 $\pm$ 0.333 & 0.580 $\pm$ 0.026 & 0.439 $\pm$ 0.016 & 0.379 $\pm$ 0.008 & 0.618 $\pm$ 0.032 & \textbf{0.691 $\pm$ 0.019} & 0.211 $\pm$ 0.342 & 0.580 $\pm$ 0.026 & 0.478 $\pm$ 0.010 & 0.379 $\pm$ 0.008 \\
\texttt{banknote} & 0.985 & 0.471 $\pm$ 0.023 & 0.982 $\pm$ 0.002 & 0.947 $\pm$ 0.008 & 0.998 $\pm$ 0.002 & 0.977 $\pm$ 0.005 & \textbf{1.000 $\pm$ 0.000} & 0.991 $\pm$ 0.002 & 0.982 $\pm$ 0.002 & 0.986 $\pm$ 0.002 & 0.998 $\pm$ 0.002 & 0.977 $\pm$ 0.005 & \textbf{1.000 $\pm$ 0.000} & 0.995 $\pm$ 0.001 \\
\texttt{bc} & 0.938 & 0.310 $\pm$ 0.034 & 0.946 $\pm$ 0.004 & 0.936 $\pm$ 0.008 & 0.947 $\pm$ 0.036 & 0.937 $\pm$ 0.007 & \textbf{0.984 $\pm$ 0.003} & 0.945 $\pm$ 0.006 & 0.946 $\pm$ 0.004 & 0.949 $\pm$ 0.005 & 0.949 $\pm$ 0.036 & 0.937 $\pm$ 0.007 & \textbf{0.990 $\pm$ 0.002} & 0.952 $\pm$ 0.005 \\
\texttt{boston} & 0.862 & 0.581 $\pm$ 0.005 & 0.777 $\pm$ 0.028 & 0.775 $\pm$ 0.026 & 0.710 $\pm$ 0.067 & \textbf{0.807 $\pm$ 0.026} & \textbf{0.790 $\pm$ 0.012} & \textbf{0.786 $\pm$ 0.015} & 0.777 $\pm$ 0.028 & \textbf{0.866 $\pm$ 0.014} & 0.759 $\pm$ 0.069 & 0.826 $\pm$ 0.026 & 0.842 $\pm$ 0.009 & 0.817 $\pm$ 0.013 \\
\texttt{california} & 0.829 & 0.803 $\pm$ 0.004 & 0.751 $\pm$ 0.004 & 0.583 $\pm$ 0.004 & 0.705 $\pm$ 0.003 & 0.758 $\pm$ 0.006 & 0.708 $\pm$ 0.004 & \textbf{0.792 $\pm$ 0.003} & 0.751 $\pm$ 0.004 & 0.801 $\pm$ 0.003 & 0.763 $\pm$ 0.006 & \textbf{0.823 $\pm$ 0.002} & 0.792 $\pm$ 0.002 & 0.792 $\pm$ 0.003 \\
\texttt{churn} & 0.861 & 0.711 $\pm$ 0.004 & 0.853 $\pm$ 0.002 & 0.847 $\pm$ 0.002 & 0.853 $\pm$ 0.006 & 0.862 $\pm$ 0.002 & 0.860 $\pm$ 0.002 & \textbf{0.864 $\pm$ 0.002} & 0.853 $\pm$ 0.002 & \textbf{0.865 $\pm$ 0.002} & 0.855 $\pm$ 0.006 & 0.862 $\pm$ 0.002 & \textbf{0.865 $\pm$ 0.002} & \textbf{0.866 $\pm$ 0.002} \\
\texttt{concrete} & 0.921 & 0.687 $\pm$ 0.005 & 0.845 $\pm$ 0.007 & 0.711 $\pm$ 0.008 & 0.814 $\pm$ 0.006 & 0.834 $\pm$ 0.007 & 0.790 $\pm$ 0.013 & \textbf{0.870 $\pm$ 0.004} & 0.845 $\pm$ 0.007 & 0.901 $\pm$ 0.004 & 0.868 $\pm$ 0.005 & 0.908 $\pm$ 0.004 & 0.879 $\pm$ 0.005 & \textbf{0.915 $\pm$ 0.003} \\
\texttt{diabetes} & 0.744 & 0.592 $\pm$ 0.004 & 0.744 $\pm$ 0.011 & 0.768 $\pm$ 0.009 & 0.728 $\pm$ 0.027 & 0.755 $\pm$ 0.013 & \textbf{0.791 $\pm$ 0.011} & 0.777 $\pm$ 0.008 & 0.744 $\pm$ 0.011 & 0.779 $\pm$ 0.009 & 0.730 $\pm$ 0.027 & 0.755 $\pm$ 0.013 & \textbf{0.797 $\pm$ 0.010} & 0.783 $\pm$ 0.009 \\
\texttt{friedman\_1} & 0.910 & 0.717 $\pm$ 0.005 & 0.790 $\pm$ 0.010 & 0.680 $\pm$ 0.015 & 0.867 $\pm$ 0.016 & 0.874 $\pm$ 0.017 & 0.893 $\pm$ 0.007 & \textbf{0.917 $\pm$ 0.003} & 0.790 $\pm$ 0.010 & 0.899 $\pm$ 0.005 & 0.927 $\pm$ 0.015 & 0.905 $\pm$ 0.004 & \textbf{0.972 $\pm$ 0.001} & 0.917 $\pm$ 0.003 \\
\texttt{spambase} & 0.945 & 0.507 $\pm$ 0.003 & 0.916 $\pm$ 0.002 & 0.904 $\pm$ 0.002 & 0.926 $\pm$ 0.002 & 0.922 $\pm$ 0.003 & 0.928 $\pm$ 0.002 & \textbf{0.944 $\pm$ 0.001} & 0.916 $\pm$ 0.002 & 0.946 $\pm$ 0.001 & 0.935 $\pm$ 0.001 & 0.934 $\pm$ 0.003 & 0.938 $\pm$ 0.001 & \textbf{0.951 $\pm$ 0.002} \\
\texttt{student} & 0.832 & 0.689 $\pm$ 0.006 & 0.752 $\pm$ 0.031 & 0.811 $\pm$ 0.014 & 0.699 $\pm$ 0.041 & 0.789 $\pm$ 0.033 & \textbf{0.837 $\pm$ 0.012} & 0.785 $\pm$ 0.014 & 0.752 $\pm$ 0.031 & \textbf{0.857 $\pm$ 0.009} & 0.701 $\pm$ 0.041 & 0.789 $\pm$ 0.033 & 0.838 $\pm$ 0.012 & 0.785 $\pm$ 0.014 \\
\texttt{telco} & 0.794 & 0.691 $\pm$ 0.007 & 0.793 $\pm$ 0.003 & 0.791 $\pm$ 0.003 & 0.793 $\pm$ 0.009 & 0.794 $\pm$ 0.004 & \textbf{0.806 $\pm$ 0.003} & \textbf{0.804 $\pm$ 0.003} & 0.793 $\pm$ 0.003 & 0.803 $\pm$ 0.003 & 0.798 $\pm$ 0.008 & 0.794 $\pm$ 0.004 & \textbf{0.809 $\pm$ 0.003} & 0.805 $\pm$ 0.003 \\
\texttt{wq} & 0.455 & 0.683 $\pm$ 0.005 & \textbf{0.346 $\pm$ 0.007} & 0.287 $\pm$ 0.005 & 0.331 $\pm$ 0.007 & 0.317 $\pm$ 0.007 & 0.333 $\pm$ 0.007 & 0.300 $\pm$ 0.006 & 0.346 $\pm$ 0.007 & \textbf{0.385 $\pm$ 0.007} & 0.356 $\pm$ 0.007 & \textbf{0.381 $\pm$ 0.006} & 0.358 $\pm$ 0.008 & 0.300 $\pm$ 0.006 \\
\midrule
\textbf{Average Rank} & & & 5.13 & 3.27  & 2.47 & 3.87 & \textbf{2.13} & 4.13 & 2.57 & 3.8 & 2.80 & 5.33 & \textbf{2.13} & 4.37 \\
\bottomrule
\end{tabular}
}
\label{tab:full}
\end{sidewaystable}
\clearpage

\subsection{Visualizing Compression Frontiers}\label{appx:extra_compression}
While Table \ref{tab:full} provides a granular analysis of performance at discrete memory constraints, we further evaluate SCATE across a continuous capacity spectrum ranging from 1KB to 100KB. Plotting traditional Pareto frontiers for all individual methods often results in severe visual clutter, as disparate hyperparameter configurations across techniques lead to irregular and non-uniform model sizes. Instead, to ensure a clear comparison, we isntead distill these results into two curves each dataset. The first curve tracks the peak performance of SCATE evaluated at 1KB intervals. The second curve is a composition of the state-of-the-art pruning methods: at each 1KB threshold, it adopts the maximum score achieved by \textit{any} competing baseline method (excluding the strictly weaker FP-SCATE variant). This baseline then represents a best-in-class curve against which we can evaluate SCATE. 
\begin{figure}[htbp]
    \centering
    \includegraphics[width=0.9\linewidth]{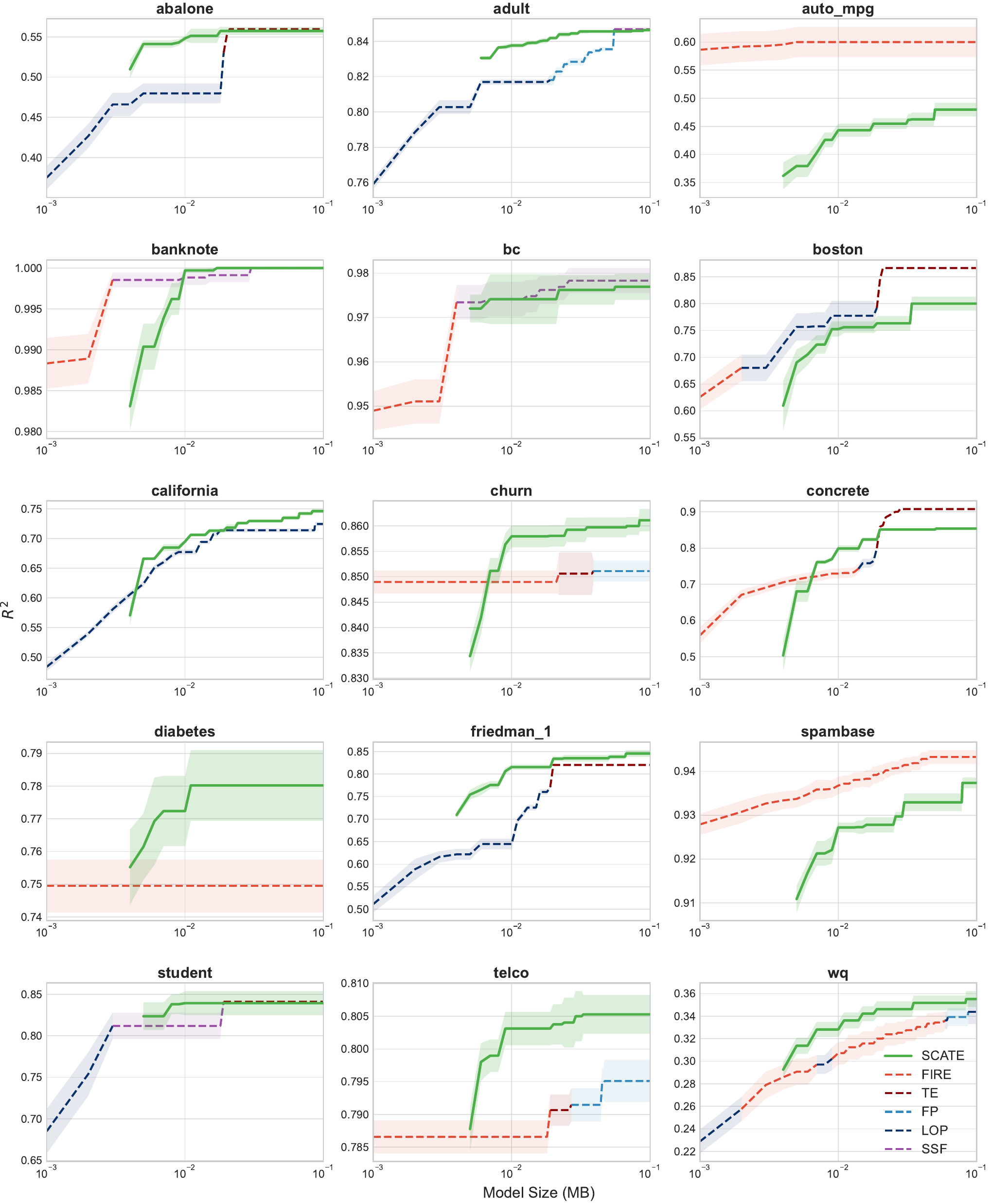}
    \caption{Visualization curves for the RF compression benchmark. The solid line represents SCATE, and the dashed line represents the composite method.}
    \label{fig:rf_pareto}
\end{figure}

\begin{figure}[htbp]
    \centering
    \includegraphics[width=0.9\linewidth]{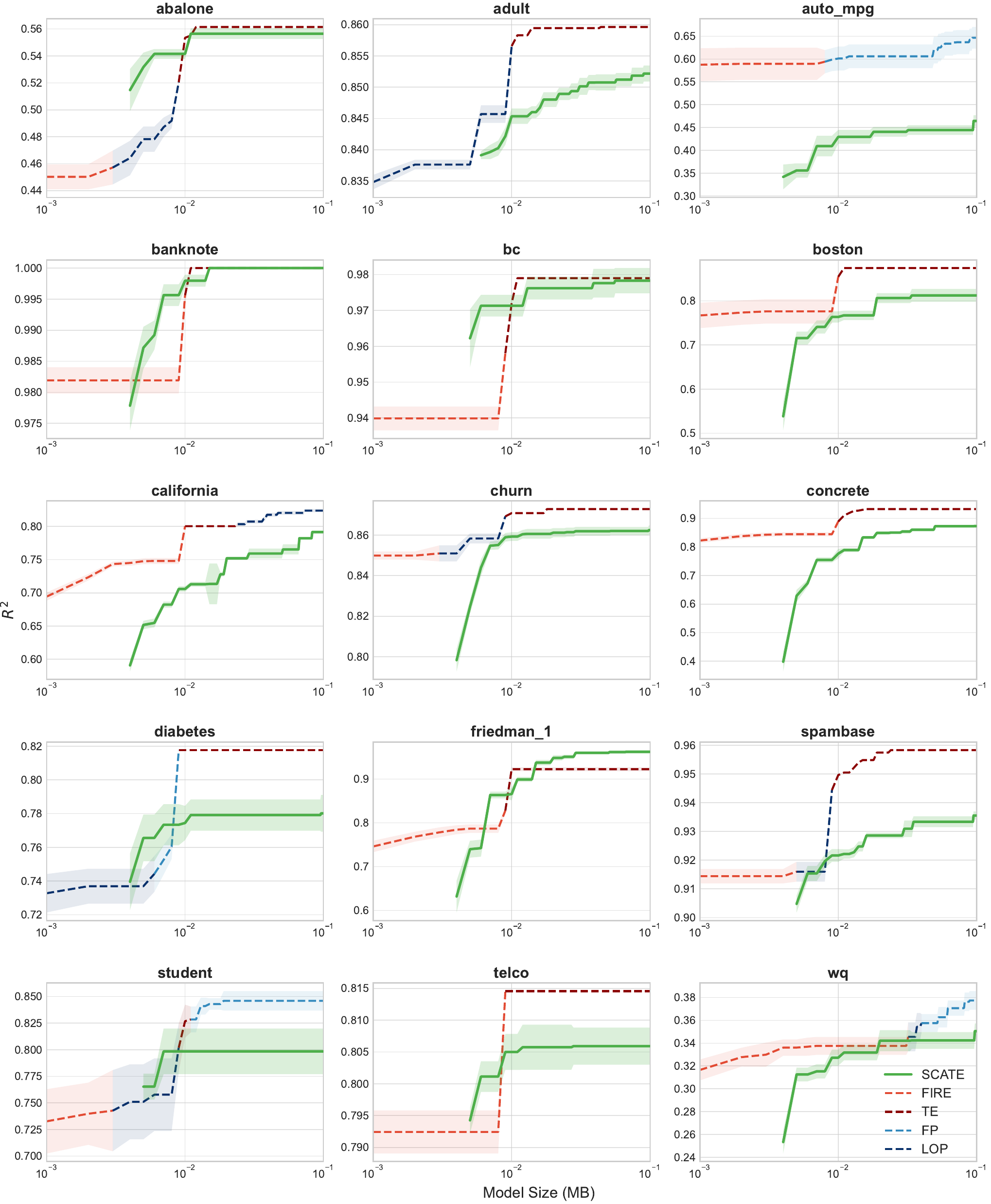}
    \caption{Visualization curves for the GBM compression benchmark. The solid line represents SCATE, and the dashed line represents the composite method.}
    \label{fig:gbm_pareto}
\end{figure}

The resulting performance curves are illustrated in Figure \ref{fig:rf_pareto} for RFs and Figure \ref{fig:gbm_pareto} for GBMs. An examination of these figures reveals SCATE’s robustness, especially around the 10KB point. In Fig. \ref{fig:rf_pareto}, SCATE frequently beats the composite curve or performs highly competitively, with the exception of the \texttt{auto\_mpg} dataset. In the GBM domain, the results are more tempered; SCATE slightly underperforms, though the differences are generally narrow. As discussed in Sect. \ref{sec:discussion}, this dynamic occurs because SCATE's compression footprint is bounded by the complexity of the kernel rather than the size of the original model. Consequently, we can attribute these results to competing methods inherently struggling more to compress the larger parameter counts of RFs.

\subsection{Runtime Experiments}\label{appx:runtime}
To assess the computational efficiency of the evaluated compression algorithms, we benchmark both training and inference times across three representative small-to-medium datasets: \texttt{student}, \texttt{wq}, and \texttt{telco}. We deliberately exclude the two largest datasets (\texttt{adult} and \texttt{california}) from this specific analysis; the prohibitive computational scaling of baseline rule extractors, specifically FIRE \citep{liu2023fire} and TreeExtract \citep{liu2025extracting}, resulted in frequent timeouts on these datasets, preventing a complete comparison.

Because these compression methods are primarily motivated by strict deployment constraints, runtime is most meaningfully analyzed as a function of the final model's memory footprint - as a result, we evaluate runtime across a continuous capacity spectrum. We discretize the target memory footprint into 5KB intervals, between 0KB to 100KB. For each interval, we compute the average training and inference times across all hyperparameter configurations that yield a compressed model falling within that specific 1KB memory bucket. This procedure is conducted independently for both RF and GBM base models, generating independent training and inference time profiles for each dataset (resulting in 12 comparative plots in total). The resulting plots are produced in 
\begin{figure}[htbp]
    \centering
    \includegraphics[width=0.9\linewidth]{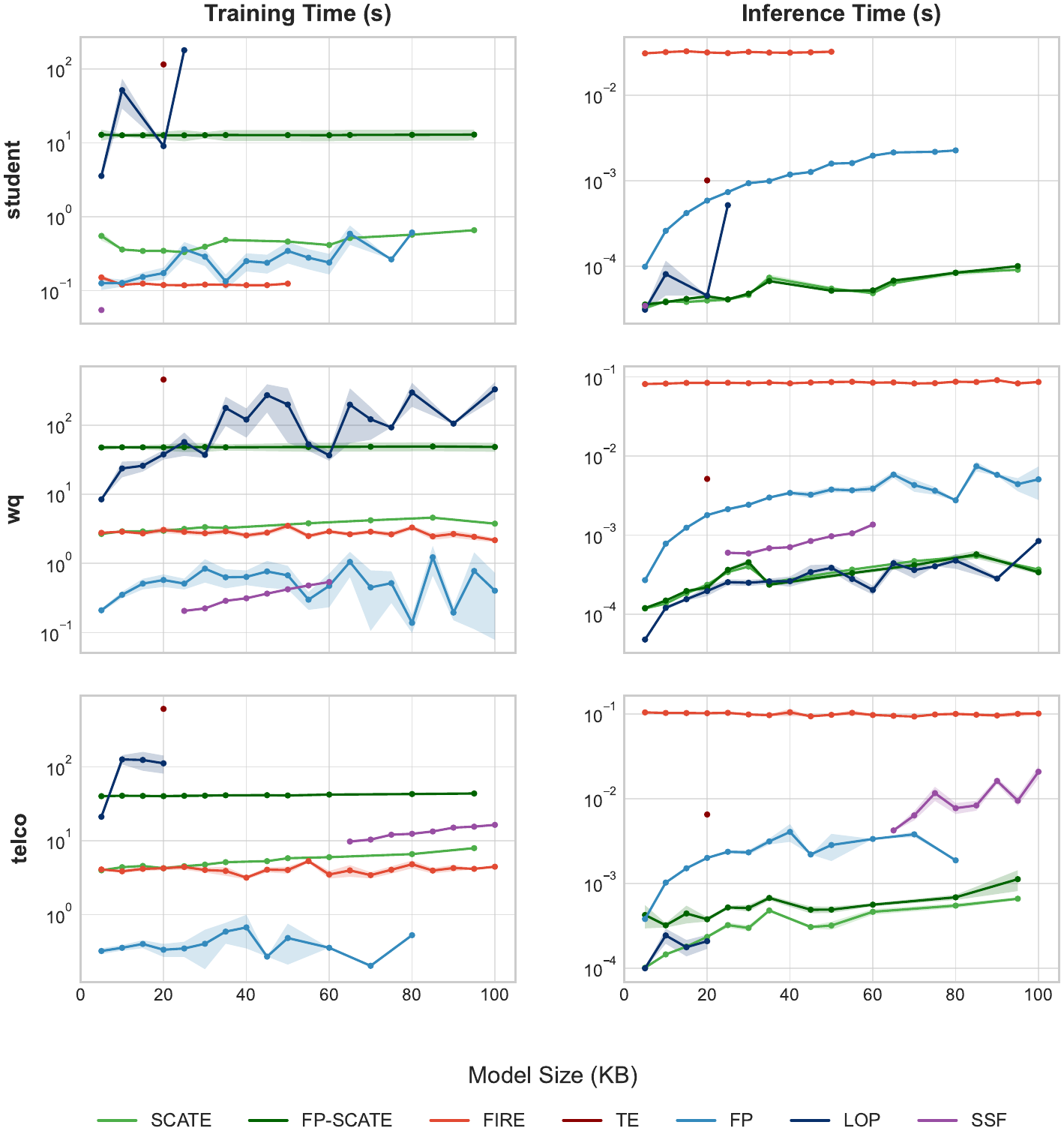}
    \caption{Training and inference time against model size plots for RFs, over three datasets}
    \label{fig:rf_runtime}
\end{figure}

\begin{figure}[htbp]
    \centering
    \includegraphics[width=0.9\linewidth]{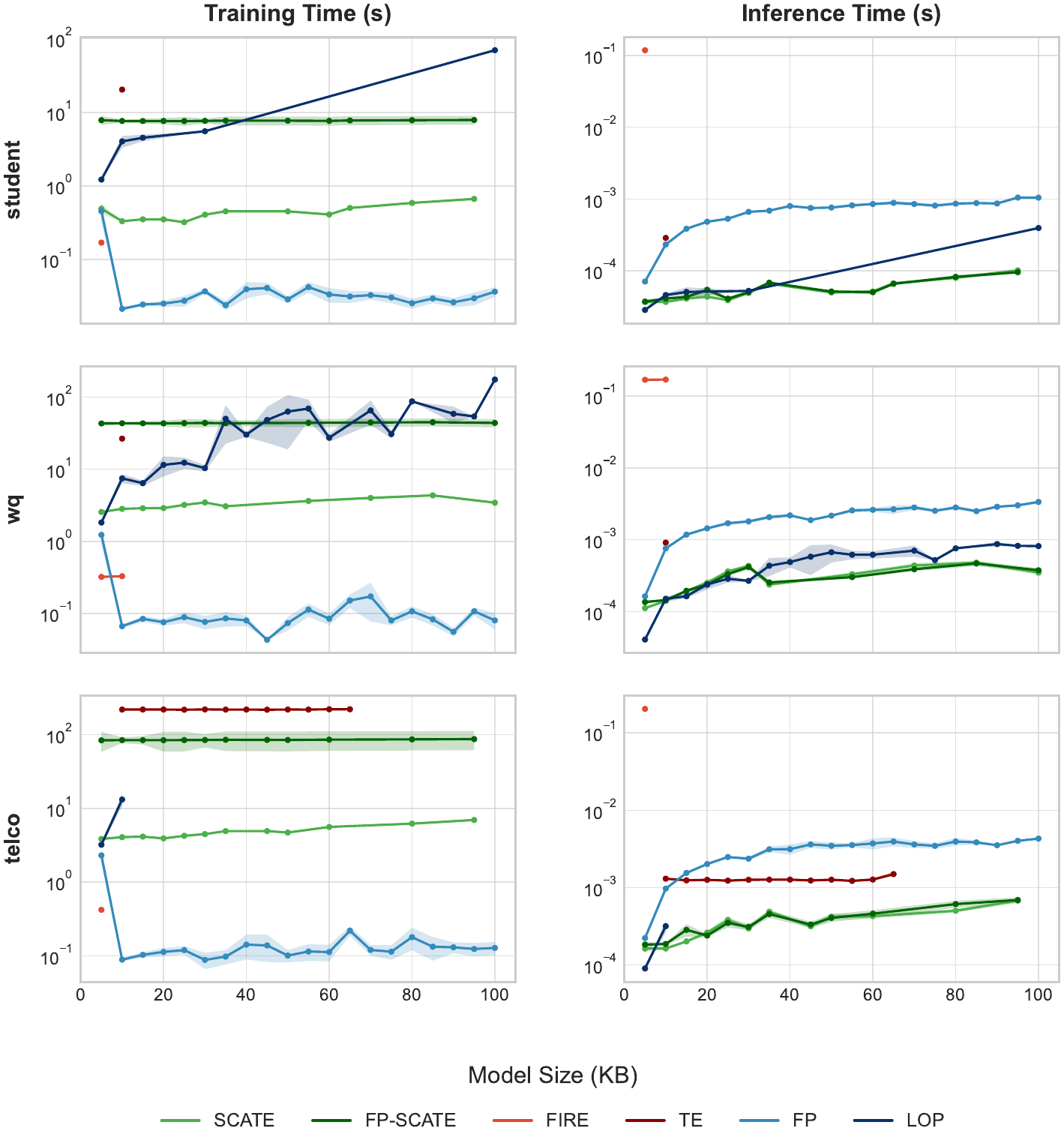}
    \caption{Training and inference time against model size plots for GBMs, over three datasets}
    \label{fig:gbm_runtime}
\end{figure}
Interestingly, our evaluations indicate that runtime remains largely invariant across different model complexities for the majority of the evaluated algorithms. Therefore, an additional comprehensive summary of these computational costs, reported as means and standard errors, is provided in Tab. \ref{tab:runtime_table}.

\begin{table}[htbp]
    \centering
    \caption{Average Training and Inference Runtimes over the entire evaluated hyperparameter set. All values are reported in seconds as Mean $\pm$ Standard Error. The fastest methods per column (including those within 1 standard error of the best) are bolded.} SSF results in GBM are blank, as SSF is an RF-exclusive method.
    \label{tab:runtime_table}
    \renewcommand{\arraystretch}{1.2}
    \resizebox{\textwidth}{!}{
    \begin{tabular}{l cc cc}
        \toprule
        & \multicolumn{2}{c}{\textbf{Random Forest}} & \multicolumn{2}{c}{\textbf{GBM}} \\
        \cmidrule(lr){2-3} \cmidrule(lr){4-5}
        \textbf{Method} & \textbf{Avg Training (s)} & \textbf{Avg Inference (s)} & \textbf{Avg Training (s)} & \textbf{Avg Inference (s)} \\
        \midrule
        \textbf{SCATE}    & $3.0402 \pm 0.3668$    & $\mathbf{2.32 \times 10^{-4} \pm 3.04 \times 10^{-5}}$ & $2.8506 \pm 0.3340$    & $\mathbf{2.37 \times 10^{-4} \pm 2.96 \times 10^{-5}}$ \\
        \textbf{FP-SCATE} & $33.7156 \pm 2.6515$   & $3.06 \times 10^{-4} \pm 4.26 \times 10^{-5}$          & $45.5539 \pm 5.5014$   & $\mathbf{2.41 \times 10^{-4} \pm 2.99 \times 10^{-5}}$ \\
        \textbf{LOP}      & $108.8504 \pm 18.2936$ & $2.69 \times 10^{-4} \pm 3.49 \times 10^{-5}$          & $36.3631 \pm 7.8968$   & $4.01 \times 10^{-4} \pm 5.67 \times 10^{-5}$          \\
        \textbf{FP}       & $\mathbf{0.4319 \pm 0.0347}$ & $2.46 \times 10^{-3} \pm 2.34 \times 10^{-4}$    & $\mathbf{0.1445 \pm 0.0422}$ & $1.97 \times 10^{-3} \pm 1.61 \times 10^{-4}$          \\
        \textbf{TE}       & $393.9479 \pm 146.0017$& $4.23 \times 10^{-3} \pm 1.66 \times 10^{-3}$          & $191.7248 \pm 19.0575$ & $1.18 \times 10^{-3} \pm 7.56 \times 10^{-5}$          \\
        \textbf{SSF}      & $6.3441 \pm 1.6434$    & $5.41 \times 10^{-3} \pm 1.52 \times 10^{-3}$          & --                     & --                                                     \\
        \textbf{FIRE}     & $2.7730 \pm 0.2128$    & $8.02 \times 10^{-2} \pm 3.61 \times 10^{-3}$          & $0.3109 \pm 0.0519$    & $1.65 \times 10^{-1} \pm 1.78 \times 10^{-2}$          \\
        \bottomrule
    \end{tabular}
    }
\end{table}

These results highlight a distinct advantage of neural architectures regarding inference efficiency. Although SCATE and FP-SCATE do not achieve the absolute lowest training times—despite remaining highly competitive—they consistently deliver the fastest inference speeds. This is especially relevant for constrained deployment scenarios, ensuring that the models can operate effectively under limitations in both memory capacity and computational power. Furthermore, the relatively flat inference curves for SCATE (illustrated in Fig. \ref{fig:rf_runtime} and Fig. \ref{fig:gbm_runtime}) demonstrate that its computational overhead is highly robust to variations in model size, further solidifying its suitability for edge-computing environments.
\clearpage